\newcommand{\spro}{\begin{proof}}
\newcommand{\fpro}{\end{proof}}
\def\thmt@refnamewithcomma #1#2#3,#4,#5\@nil{%
  \@xa\def\csname\thmt@envname #1utorefname\endcsname{#3}%
  \ifcsname #2refname\endcsname
    \csname #2refname\expandafter\endcsname\expandafter{\thmt@envname}{#3}{#4}%
  \fi
}
\declaretheorem[numberwithin=section, name=Theorem,Refname={Theorem,Theorems}]{theo}
\declaretheorem[numberwithin=section, name=Lemma,Refname={Lemma,Lemmas}]{lemm}
\declaretheorem[numberwithin=section, name=Corollary,Refname={Corollary,Corollaries}]{coro}
\declaretheorem[numberwithin=section, name=Definition,Refname={Definition,Definitions}]{defi}
\declaretheorem[numberwithin=section, name=Model,Refname={Model,Models}]{model} 
\DeclareMathOperator{\Expe}{\mathbb{E}}
\DeclareMathOperator{\Real}{\mathbb{R}}
\DeclareMathOperator{\Cov}{\mathrm{Cov}}
\DeclareMathOperator*{\argmax}{arg\,max}
\DeclareMathOperator{\poly}{poly}
\newcommand{\kk}{\widetilde{k}}
\newcommand{\condition}{c_\kappa}
\newcommand{\ssmoothness}{\mu_\beta}
\newcommand{\sconvexity}{\mu_\alpha}
\newcommand{\ip}[2]{\left\langle #1, #2 \right\rangle}
\newcommand{\DSHat}{\widehat{\meandiff}}
\newcommand{\sqeps}{\sqrt{\epsilon}}
\newcommand{\eps}{\epsilon}
\newcommand{\opnorm}[1]{\left\lVert#1\right\rVert_{\rm op}}
\DeclarePairedDelimiter\abs{\lvert}{\rvert}%
\DeclarePairedDelimiter\norm{\lVert}{\rVert}%
\let\oldabs\abs
\def\abs{\@ifstar{\oldabs}{\oldabs*}}
\let\oldnorm\norm
\def\norm{\@ifstar{\oldnorm}{\oldnorm*}}
\newcommand{\twosparseopnorm}[1]{\norm{#1}_{\rm{\kk,op}}}
\newcommand{\tr}[1]{\mathrm{Tr}\left(#1\right)}
\newcommand{\paramdiff}{\omega}
\newcommand{\meandiff}{\Delta}
\newcommand{\Event}{\mathcal{E}}
\newcommand{\Ind}{\mathcal{J}}
\newcommand{\Diff}{D}
\newcommand{\Mar}{Y}
\newcommand{\Fil}{\mathcal{F}}
\newcommand{\Good}{\mathcal{G}}
\newcommand{\Bad}{\mathcal{B}}
\newcommand{\SGood}{\mathcal{S}_\mathrm{good}}
\newcommand{\SBad}{\mathcal{S}_\mathrm{bad}}
\newcommand{\Observation}{\mathcal{S}}
\newcommand{\Input}{\mathcal{S}_\mathrm{in}}
\newcommand{\Output}{\mathcal{S}_\mathrm{out}}
\newcommand{\Hard}[1]{\mathsf{P}_{#1}}
\newcommand{\LF}{L_\mathrm{F}}
\newcommand{\Lcov}{L_\mathrm{cov}}
\newcommand{\gradsample}{{\bm {g}}}
\newcommand{\vect}{{\bm {v}}}
\newcommand{\x}{{\bm{x}}}
\newcommand{\xn}{{\widetilde{\bm{x}}}}
\newcommand{\HM}{{\bm{H}}}
\newcommand{\RSGE}{\text{RSGE}}
\newcommand{\Id}{\bm{I}_d}
\newcommand{\Sig}{\bm{\Sigma}}
\newcommand{\Sigsq}{\bm{\Sigma}^{\frac{1}{2}}}
\newcommand{\Sigsqn}{\bm{\Sigma}^{-\frac{1}{2}}}
\newcommand{\UMean}{\bm{G}}
\newcommand{\param}{\bm{\beta}}
\newcommand{\sep}{\rho_\mathrm{sep}}
\title{\vspace{-2em}
\textbf{High Dimensional Robust Sparse Regression}}
\author{
 Liu Liu \\
   \texttt{liuliu@utexas.edu}
  \and
   Yanyao Shen \\
   \texttt{shenyanyao@utexas.edu}
  \and
Tianyang Li\\
   \texttt{lty@cs.utexas.edu}
   \and
Constantine Caramanis \\
\texttt{constantine@utexas.edu}
\\
\\
   The University of Texas at Austin
}
\date{}
\let\citep\cite
\begin{document}

\maketitle

\begin{abstract}%
We provide a novel -- and to the best of our knowledge, the first -- algorithm for high dimensional sparse regression with constant fraction of corruptions in explanatory and/or response variables. Our algorithm recovers the true sparse parameters
with sub-linear sample complexity,
in the presence of a constant fraction of arbitrary corruptions. Our main contribution is a robust variant of Iterative Hard Thresholding. Using this, we provide accurate estimators:
when the covariance matrix in sparse regression is identity,  our error guarantee is near information-theoretically optimal.
We then deal with robust sparse regression with unknown structured covariance matrix. We propose a filtering algorithm which
consists of a novel randomized outlier removal technique for robust sparse mean estimation that may be of interest in its own right:
the filtering algorithm is flexible enough to deal with unknown covariance.
Also, it is orderwise more efficient computationally than the ellipsoid algorithm.
Using sub-linear sample complexity, our algorithm achieves the best known (and first) error guarantee.
We demonstrate the effectiveness on large-scale sparse regression problems with arbitrary corruptions.
\end{abstract}

\section{Introduction}
\label{sec:intro}
Learning in the presence of arbitrarily (even adversarially) corrupted outliers in the training data has a long history in Robust Statistics \citep{huber2011robust,hampel2011robust,tukey1975mathematics}, and has recently received much renewed attention. The high dimensional setting poses particular challenges as outlier removal via preprocessing is essentially impossible when the number of variables scales with the number of samples. We propose a computationally efficient estimator for outlier-robust sparse regression that has near-optimal sample complexity, and is the first algorithm resilient to a constant fraction of arbitrary outliers with corrupted covariates and/or response variables. Unless we specifically mention otherwise, all future mentions of outliers mean corruptions in covariates and/or response variables.

We assume that the authentic samples are
independent and identically distributed (i.i.d.) drawn from an uncorrupted  distribution $P$, where $P$ represents the
linear model
$y_i = \bm{x}_i^{\top}\param^{*} + \xi_i$, where
$\bm{x}_i \sim \mathcal{N} \left(0, \Sig\right)$, and $\param^* \in \Real^d$ is the true parameter
(see \Cref{sec:background} for complete details and definitions).
To model the corruptions, the adversary can choose an arbitrary $\eps$-fraction of the authentic samples, and replace
them with arbitrary values. We refer to the observations after corruption as $\eps$-corrupted samples  (\Cref{equ:contamination_model}).
This corruption model allows the adversary to select an $\eps$-fraction
of authentic samples to delete and corrupt, hence it is stronger than Huber's $\eps$-contamination model \cite{huber1964robust}, where the adversary independently corrupts each sample with probability $\eps$.

Outlier-robust regression is a classical problem within robust statistics (e.g., \cite{rousseeuw2005robust}), yet even in the low-dimensional setting, efficient algorithms robust to corruption in the covariates have proved elusive, until recent breakthroughs in \cite{ravikumar2018robust,sever2018} and \cite{klivans2018efficient}, which built on important results in Robust Mean Estimation \cite{Moitra2016FOCS,Lai2016FOCS} and Sums of Squares \cite{barak2016proofs}, respectively.

In the sparse setting, the parameter $\beta^{\ast}$ we seek to recover is also $k$-sparse, and a key goal is to provide recovery guarantees with sample complexity scaling with $k$, and \emph{sublinearly} with $d$. Without outliers, by now classical results (e.g., \cite{donoho2006compressed}) show that $n = \Omega(k\log d)$ samples from a i.i.d sub-Gaussian distribution are enough to give recovery guarantees on $\beta^{\ast}$ with and without additive noise.  These strong assumptions on the probabilistic distribution are necessary, since in the worst case, sparse recovery is known to be NP-hard \cite{Bandeira2013Certifying,zhang2014lower}.

Sparsity recovery with a constant fraction of arbitrary corruption is fundamentally hard. 
For instance, to the best of our knowledge, there's no previous work can provide exact recovery for sparse linear equations with arbitrary corruption in polynomial time. In  contrast, a simple exhaustive search can easily enumerate the samples and recover the sparse parameter in exponential time.

In this work, we seek to give an efficient, sample-complexity optimal algorithm that recovers $\beta^{\ast}$ to within accuracy depending on $\epsilon$ (the fraction of outliers). In the case of no additive noise, we are interested in algorithms that can guarantee exact recovery, independent of $\epsilon$.


\subsection{Related work}
The last 10 years have seen a resurgence in interest in robust statistics, including the problem of resilience to outliers in the data. Important problems attacked have included PCA \citep{klivans2009learning,xu2012robust,HRPCA2013,Lai2016FOCS,Moitra2016FOCS}, and more recently robust regression (as in this paper) \citep{ravikumar2018robust,sever2018,kong2018efficient,klivans2018efficient} and robust mean estimation \citep{Moitra2016FOCS,Lai2016FOCS,du2017computationally}, among others. We focus now on the recent work most related to the present paper.

\textbf{Robust regression. }
Earlier work in robust regression considers corruption only in the output, and shows that algorithms nearly as efficient as for regression without outliers succeeds in parameter recovery, even with a constant fraction of outliers \citep{li2013compressed,nguyen2013exact,bhatia2015robust,bhatia2017consistent,Price2018compressed}. Yet these algorithms (and their analysis) focus on corruption in $y$, and do not seem to extend to the setting of corrupted covariates -- the setting of this work.
In the low dimensional setting, there has been remarkable recent progress. The work in \cite{klivans2018efficient} shows that the Sum of Squares (SOS) based semidefinite hierarchy can be used for solving robust regression. 
Essentially concurrent to the SOS work, \citep{chen2017distributed,holland2017efficient,ravikumar2018robust,sever2018} use robust gradient descent for empirical risk minimization, by using robust mean estimation as a subroutine to compute robust gradients at each iteration. 
\cite{kong2018efficient} uses filtering algorithm \cite{Moitra2016FOCS} for robust regression.
Computationally, these latter works scale better than the algorithms in \cite{klivans2018efficient}, as although the Sum of Squares SDP framework gives polynomial time algorithms, they are often not practical \citep{hopkins2016fast}.

Much less appears to be known in the high-dimensional regime.
Exponential time algorithm, such as 
\cite{gao2017robust,johnson1978densest}, 
optimizes Tukey depth \cite{tukey1975mathematics, chen2015robust}. 
Their results reveal that handling a constant fraction of outliers ($\eps = const.$) is
actually minimax-optimal. 
Work in \cite{chen2013robust} first 
provided a polynomial time algorithm for this problem.
They show that replacing the standard inner product in Matching Pursuit with a trimmed version, one can recover from an $\epsilon$-fraction of outliers, with $\epsilon = O({1}/{\sqrt{k}})$.
Very recently, \cite{liu2019} considered more general sparsity constrained $M$-estimation by using a trimmed estimator in each step of gradient descent, yet the robustness guarantee $\epsilon = O({1}/{\sqrt{k}})$ is still sub-optimal.
Another approach follows as a byproduct of a recent algorithm for robust sparse mean estimation, in \cite{du2017computationally}. However, their error guarantee scales with $\norm{\param^*}_2$, and moreover, does not provide \emph{exact recovery} in the adversarial corruption case without stochastic noise (i.e., noise variance $\sigma^2 = 0$). We note that this is an inevitable consequence of their approach, as they directly use sparse mean estimation on $\{y_i \bm{x}_i\}$, rather than considering Maximum Likelihood Estimation.

\textbf{Robust  mean estimation. }
The idea in \citep{ravikumar2018robust,sever2018,kong2018efficient} is to leverage recent breakthroughs in robust mean estimation. Very recently, \citep{Moitra2016FOCS,Lai2016FOCS} provided the first robust mean estimation algorithms that can handle a constant fraction of outliers (though \cite{Lai2016FOCS} incurs
a small (logarithmic) dependence in the dimension). Following their work,
\cite{du2017computationally} extended the ellipsoid algorithm from \cite{Moitra2016FOCS}  to robust sparse mean estimation in high dimensions.
They show that $k$-sparse mean estimation in $\Real^d$ with a constant fraction of outliers can be done with $n = \Omega\left(k^2\log\left(d\right)\right)$ samples. The $k^2$ term appears to be necessary, as $n = \Omega(k^2)$ follows from an oracle-based lower bound \cite{kane2016SQ}.
\subsection{Main contributions}
\begin{itemize}[leftmargin=*]
\item Our result is  a robust variant of Iterative Hard Thresholding (IHT) \cite{IHT2009}. We provide a deterministic stability result showing that IHT works with any robust sparse mean estimation algorithm. We show our robust IHT does not accumulate the errors of a (any) robust sparse mean estimation subroutine for computing the gradient. Specifically, robust IHT produces a final solution whose error is orderwise the same as the error guaranteed by an single use of the robust mean estimation subroutine. We refer to  \Cref{def:sparse_gradient} and  \Cref{thm:Main} for the precise statement. Thus our result can be viewed as a meta-theorem that can be coupled with any robust sparse mean estimator.


\item Coupling robust IHT with a robust sparse mean estimation subroutine based on a version of the ellipsoid algorithm given and analyzed in \cite{du2017computationally}, our results \Cref{thm:coro} show that given  $\epsilon$-corrupted sparse regression samples with identity covariance, we recover $\beta^{\ast}$ within additive error $O(\sigma \epsilon)$ (which is minimax optimal \cite{gao2017robust}). The proof of the ellipsoid algorithm's performance in \cite{du2017computationally} hinges on obtaining an upper bound on the sparse operator norm (their Lemmas A.2 and A.3). As we show (see \Cref{sec:counter}), the statement of Lemma A.3 seems to be incorrect, and the general approach of upper bounding the sparse operator norm may not work. Nevertheless, the algorithm performance they claim is correct, as we show through a different avenue (see \Cref{thm:cov_imply_mean} in \Cref{subsec:correction}).

Using this ellipsoid algorithm,  
In particular, we obtain exact recovery if either the fraction of outliers goes to zero (this is just ordinary sparse regression), or in the presence of a constant fraction of outliers but with the additive noise term going to zero (this is the case of robust sparse linear equations). To the best of our knowledge, this is the first result that shows exact recovery for robust sparse linear equations with a constant fraction of outliers.
This is the content of \Cref{sec:optimal}.

\item For robust sparse regression with {\em unknown covariance matrix}, we consider the wide class of sparse covariance matrices \cite{bickel2008covariance}.
We then prove a result that may be of interest in its own right: we provide a novel robust sparse mean estimation algorithm that is based on a filtering algorithm for sequentially screening and removing potential outliers. We show that the filtering algorithm is flexible enough to deal with unknown covariance, whereas the ellipsoid algorithm cannot. It also runs a factor of $O(d^2)$ faster than the ellipsoid algorithm. If the covariance matrix is sufficiently sparse, our filtering algorithm gives a robust sparse mean estimation algorithm, that can then be coupled with our meta-theorem. Together, these two guarantee recovery of $\beta^{\ast}$ within an additive error of $O(\sigma \sqrt{\epsilon})$. 
In the case of unknown covariance, this is the best (and in fact, only) result we are aware of for robust sparse regression. We note that it can be applied to the case of known and identity covariance, though it is weaker than the optimal results we obtain using the computationally more expensive ellipsoid algorithm. Nevertheless, in both cases (unknown sparse, or known identity) the result is strong enough to guarantee exact recovery when either $\sigma$ or $\epsilon$ goes to zero.
We demonstrate the practical effectiveness of our  filtering algorithm in \Cref{sec:simulation}.
This is the content of \Cref{sec:robust-mean} and \Cref{sec:SparseCovariance}.



\end{itemize}





\subsection{Setup, Notation and Outline}\label{sec:background}

In this subsection, we  formally define the corruption model and the sparse regression model. We first
introduce the $\eps$-corrupted samples described above:

\begin{defi}[$\eps$-corrupted samples]
\label{equ:contamination_model}
Let $\{\bm{z}_i, i\in \Good\}$ be i.i.d. observations follow from a distribution $P$.
The $\eps$-corrupted samples $\{\bm{z}_i, i\in\Observation\}$ are generated by the following process:
an adversary chooses an arbitrary $\eps$-fraction of the samples in $\Good$ and modifies them with arbitrary values. After the corruption, we  use  $\Observation$ to denote the observations,
and use $\Bad  = \Observation \setminus \Good$ to denote the corruptions.
\end{defi}
The parameter $\epsilon$ represents the fraction of outliers. Throughout, we assume that it is a (small) constant, {\em independent of dimension or other problem parameters}.
Furthermore, we assume that the  distribution  $P$ is  the standard Gaussian-design AWGN linear model.
\begin{model}
\label{equ:stat_model}
The observations $\{\bm{z}_i = (y_i, \bm{x}_i), i \in \Good\}$ follow from the linear model $y_i = \bm{x}_i^{\top}\param^* + \xi_i$,
where $\param^*\in \Real^d$ is the model parameter, and assumed to be $k$-sparse. We assume that $\bm{x}_i \sim \mathcal{N} \left(0, \Sig\right)$
and $\xi_i \sim \mathcal{N} \left(0, \sigma^2\right)$,
where $\Sig$ is the normalized covariance matrix with  $\Sig_{jj} \leq 1$ for all $j \in [d]$.
We denote $\sconvexity$ as the smallest eigenvalue of $\Sig$, and $\ssmoothness$ as its largest eigenvalue.
They are assumed to be universal constants in this paper, and we denote the constant $\condition = \ssmoothness / \sconvexity$.
\end{model}
As in \cite{du2017computationally}, we pre-process by removing ``obvious'' outliers; we henceforth assume that all authentic and corrupted points are within a radius bounded by a polynomial in $n$, $d$ and $1/\epsilon$.
%

%
%

{\bf Notation}. We denote the hard thresholding operator of sparsity $k'$ by $\Hard{k'}$.
We define the $k$-sparse operator norm as
$\twosparseopnorm{M} = \max_{\norm{\bm{v}}_2 = 1, \norm{\bm{v}}_0 \leq \kk} |\bm{v}^{\top} M \bm{v}|$, where $M$ is not required to be positive-semidefinite (p.s.d.).
We use trace inner produce $\ip{A}{B}$ to denote $\tr{A^\top B}$.
We use $\Expe_{i \in_u \mathcal{S}}$ to denote the expectation operator obtained by the uniform distribution over all samples $i$ in a set
$\mathcal{S}$.
Finally, we use the notation $\widetilde{O}(\cdot)$ to hide the dependency on $\poly \log (1/\eps)$, and $\widetilde{\Omega}(\cdot)$ to hide the dependency on $\poly \log (k)$ in our bounds.


\section{Hard thresholding with robust gradient estimation
}\label{sec:method}

In this section, we present our method of using robust sparse gradient updates in IHT.
We then show statistical recovery guarantees given any accurate robust sparse gradient estimation, which is formally  defined in \Cref{def:sparse_gradient}.

We define the notation for the stochastic gradient $\gradsample_i$ corresponding to the $i^{th}$ point $\bm{z}_i$, and the population gradient
for ${\bm{z}_i \sim P}$ based on \Cref{equ:stat_model},
$\gradsample_i^t =  \bm{x}_i\left(\bm{x}_i^{\top} \param^{t} - y_i\right), \text{ and } \UMean^t = \Expe_{\bm{z}_i \sim P}\left(\gradsample_i^t\right),$
where $P$ is the distribution of the authentic points.
Since $\Expe_{\bm{z}_i \sim P}\left(\bm{x}_i \bm{x}_i^{\top}\right) = \Sig$,
the population mean of all authentic gradients is given by 
$\UMean^t = \Expe_{\bm{z}_i \sim P}\left(\bm{x}_i \bm{x}_i^{\top}\left(\param^{t} - \param^{*}\right)\right) = \Sig (\param^{t} - \param^{*}).$

In the \emph{uncorrupted case} where all samples $\{\bm{z}_i, i\in \Good\}$ follow from \Cref{equ:stat_model}, a single iteration of IHT updates $\beta^t$ via $
\param^{t+1} = \Hard{k'} (\param^{t} -
\Expe_{i \in_u {\Good}}\gradsample_i^t)$.
Here, the hard thresholding operator $\Hard{k'}$ selects the $k'$ largest elements in magnitude, and the parameter $k'$ is proportional to $k$ (specified in \Cref{thm:Main}).
However, given $\eps$-corrupted samples
$\{\bm{z}_i, i\in\Observation\}$ according to \Cref{equ:contamination_model}, the IHT update
based on empirical average of all gradient samples $\{\gradsample_i, i\in\Observation\}$
can be arbitrarily bad.

The key goal in this paper is to find a robust estimate $\widehat{\UMean}^t$
to replace $\UMean^t$ in each step of IHT,
 with sample complexity \emph{sub-linear} in the dimension $d$.
For instance, we consider robust sparse regression with  $\Sig = \Id$.
Then, 
$\UMean^t  = \param^{t} - \param^{*}$ is guaranteed to be $(k'+k)$-sparse 
in each iteration of IHT.
In this case, given $\eps$-corrupted samples, we can
use a robust sparse mean estimator to recover the unknown true $\UMean^t$ from
$\{\gradsample_i^t\}_{i=1}^{\abs{\Observation}}$, with sub-linear sample complexity.

More generally,
we propose Robust Sparse Gradient Estimator ({\RSGE}) for gradient estimation
given $\eps$-corrupted samples,
as defined in \Cref{def:sparse_gradient}, which guarantees that the deviation between the robust estimate  $\widehat{\UMean}\left(\param\right)$ and true $\UMean\left(\param\right)$, with sample complexity $n \ll d$.  
For a fixed $k$-sparse parameter $\param$, we drop the superscript $t$
without abuse of notation, and use $\gradsample_i$ in place of $\gradsample_i^t$, and $\UMean$ in place of $\UMean^t$;
$\UMean\left(\param\right)$ denotes the population gradient over the authentic samples' distribution $P$, at the point $\param$.

\begin{defi}[$\psi\left(\eps\right)$-{\RSGE}]
\label{def:sparse_gradient}
Given $n\left(k, d, \eps, \nu\right)$ $\eps$-corrupted samples $\{\bm{z}_i\}_{i=1}^n$
from \Cref{equ:stat_model}, we call $\widehat{\UMean}\left(\param\right)$ a $\psi\left(\eps\right)$-{\RSGE}, if
given
$\{\bm{z}_i\}_{i=1}^n$, $\widehat{\UMean}\left(\param\right)$
guarantees 
$\norm*{ \widehat{\UMean}\left(\param\right) - {\UMean}\left(\param\right)}_2^2 \leq \alpha(\eps) \norm*{{\UMean}\left(\param\right)}_2^2 + \psi\left(\eps\right)$,
with probability at least $1- \nu$.
\end{defi}

\begin{algorithm}[t]
\caption{Robust sparse regression with RSGE}
\label{alg:inexact_IHT}
\begin{algorithmic}[1]
\STATE \textbf{Input:} Data samples $\{y_i, \bm{x}_i\}_{i=1}^N$, RSGE subroutine.
\STATE \textbf{Output:} The estimation $\widehat{\param}$.
\STATE \textbf{Parameters: }Hard thresholding parameter $k'$.\\
{\kern2pt \hrule \kern2pt}
\STATE Split samples into $T$ subsets of size $n$. Initialize with $\param^0 = \bm{0}$.
\FOR {$t=0$ to $T-1$,}
\STATE At current $\param^t$, calculate all gradients for current $n$ samples: $\gradsample_i^t = \bm{x}_i\left(\bm{x}_i^{\top} \param^{t} - y_i\right)$, $i\in [n]$.
\STATE The initial input set is $\{\gradsample_i^t\}_{i=1}^n$. We use a {\RSGE}
to get $\widehat{\UMean}^t$.
\STATE Update the parameter:
$\param^{t+1} = \Hard{k'}\left(\param^{t} - \eta  \widehat{\UMean}^t\right).$
\ENDFOR

\STATE Output the estimation $\widehat{\param} = \param^{T}$.

\end{algorithmic}
\end{algorithm}

Here, we use $n\left(k, d, \eps, \nu\right)$ to denote the sample complexity as a function of
$(k, d, \eps, \nu)$, and note that the definition of RSGE does not require $\Sig$ to be identity matrix.
The parameters $\alpha(\eps)$ and $\psi\left(\eps\right)$ will be specified by concrete robust sparse mean estimators in subsequent sections.
Equipped with \Cref{def:sparse_gradient}, we propose \Cref{alg:inexact_IHT},
which takes any RSGE as a subroutine in line 7, and runs a robust variant of IHT with the estimated sparse gradient  $\widehat{\UMean}^t$ at each iteration in line 8.\footnote{Our results require sample splitting to maintain independence between subsequent iterations, though we believe this is an artifact of our analysis. Similar approach
has been used in \cite{balakrishnan2017statistical,ravikumar2018robust}
for theoretical analysis.
We do not use sample splitting technique in the experiments.}

%

\subsection{Global linear convergence and
parameter recovery guarantees}\label{sec:stablity}
In each single IHT update step,
RSGE introduces a controlled amount of error.
 \Cref{thm:Main} gives a global linear convergence guarantee for
\Cref{alg:inexact_IHT} by showing that IHT does not accumulate too much error.
In particular, we are able to recover $\param^*$ within 
  error
$O(\sqrt{\psi\left(\eps\right)})$ given any $\psi\left(\eps\right)$-{\RSGE} subroutine. 
We give the proof of \Cref{thm:Main} in \Cref{sec:proof_inexact_gradients}.
The hyper-parameter $k' = \condition^2 k $ guarantees    global linear convergence of IHT when $\condition > 1$ (when $\Sig \neq \Id$).
This setup has been used in \cite{jain2014iterative, Ping2017tight}, and is proved to be necessary in \cite{Barber2018between}.
Note that \Cref{thm:Main} is a  deterministic  stability result in nature,
and we obtain probabilistic results by certifying the RSGE condition.

\begin{theo}[Meta-theorem] \label{thm:Main}
Suppose we observe $N\left(k, d, \eps, \nu\right)$ $\eps$-corrupted samples from \Cref{equ:stat_model}.
\Cref{alg:inexact_IHT}, with
$\psi\left(\eps\right)$-{\RSGE}  defined in \Cref{def:sparse_gradient},
with step size $\eta = 1/\ssmoothness$ 
outputs $\widehat{\param}$, such that
$\norm*{\widehat{\param} - \param^*}_2 = O(\sqrt{\psi\left(\eps\right)}),$
with probability at least $1 - \nu$,
by setting $k' = \condition^2 k $ and $T = \Theta(\log({\norm{\param^*}_2}/{\sqrt{\psi(\eps)}}))$. The sample complexity is  $N\left(k, d, \eps, \nu\right) = n\left(k, d, \eps, \nu/T\right) T$.
\end{theo}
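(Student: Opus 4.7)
The plan is to establish a one-step contraction of the form
\[
\|\param^{t+1}-\param^{*}\|_{2} \;\le\; \rho\,\|\param^{t}-\param^{*}\|_{2} + \tau\sqrt{\psi(\eps)}
\]
with a universal constant $\rho<1$, and then iterate for $T$ rounds. Writing $v^{t} := \param^{t} - \eta\widehat{\UMean}^{t}$ so that $\param^{t+1} = \Hard{k'}(v^{t})$, and using $\UMean^{t} = \Sig(\param^{t}-\param^{*})$, the key decomposition is
\[
v^{t} - \param^{*} \;=\; (I - \eta\Sig)(\param^{t}-\param^{*}) + \eta\bigl(\UMean^{t} - \widehat{\UMean}^{t}\bigr).
\]

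Step 1 is the standard IHT approximation lemma. Since $\param^{t+1}$ retains only the $k'$ largest-in-magnitude coordinates of $v^{t}$ and $\param^{*}$ is $k$-sparse with $k' \ge k$, setting $\Omega := \mathrm{supp}(\param^{t+1}) \cup \mathrm{supp}(\param^{*})$ (so $|\Omega| \le k'+k$), a short coordinate-wise comparison between entries of $v^{t}$ kept by $\Hard{k'}$ and those in $\mathrm{supp}(\param^{*})\setminus\mathrm{supp}(\param^{t+1})$ yields
\[
\|\param^{t+1}-\param^{*}\|_{2} \;\le\; C_{k',k}\,\bigl\|(v^{t}-\param^{*})_{\Omega}\bigr\|_{2},\qquad C_{k',k} = 1 + O\!\bigl(\sqrt{k/(k'-k)}\bigr).
\]
With $k' = \condition^{2}k$, the constant $C_{k',k} = 1 + O(1/\condition)$ is close to $1$.

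Step 2 bounds the two pieces on the sparse set $\Omega$. Because $\param^{t} - \param^{*}$ is supported on some $\Omega'$ of size at most $k'+k$, the restriction of $(I-\eta\Sig)(\param^{t}-\param^{*})$ to $\Omega$ is controlled by the $(2k'+k)$-sparse operator norm of the symmetric matrix $I-\eta\Sig$; since the eigenvalues of $\Sig$ lie in $[\sconvexity,\ssmoothness]$ and $\eta = 1/\ssmoothness$, every principal submatrix of $I-\eta\Sig$ has operator norm at most $1-1/\condition$. For the other piece, the $\psi(\eps)$-\RSGE{} guarantee together with $\|\UMean^{t}\|_{2} \le \ssmoothness\|\param^{t}-\param^{*}\|_{2}$ gives
\[
\bigl\|\widehat{\UMean}^{t}-\UMean^{t}\bigr\|_{2} \;\le\; \sqrt{\alpha(\eps)}\,\ssmoothness\,\|\param^{t}-\param^{*}\|_{2} + \sqrt{\psi(\eps)}.
\]
Combining these with $\eta = 1/\ssmoothness$ produces the advertised recursion with $\rho = C_{k',k}\bigl(1-1/\condition+\sqrt{\alpha(\eps)}\bigr)$ and $\tau = C_{k',k}/\ssmoothness$; the choice $k' = \condition^{2}k$, together with $\alpha(\eps)$ small enough, makes $\rho$ a universal constant strictly below $1$.

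Finally, unrolling the recursion from $\param^{0} = \bm{0}$ gives
\[
\|\param^{T}-\param^{*}\|_{2} \;\le\; \rho^{T}\|\param^{*}\|_{2} + \frac{\tau}{1-\rho}\sqrt{\psi(\eps)},
\]
so choosing $T = \Theta\!\bigl(\log(\|\param^{*}\|_{2}/\sqrt{\psi(\eps)})\bigr)$ drives the first term to $O(\sqrt{\psi(\eps)})$. The probability bound comes from a union bound over the $T$ rounds, invoking the \RSGE{} guarantee with failure probability $\nu/T$ on each round; sample splitting ensures the fresh $n$-sample batch used in round $t$ is independent of the random iterate $\param^{t}$, so the \RSGE{} definition applies pointwise at $\param^{t}$, and the total sample count is $N = n(k,d,\eps,\nu/T)\cdot T$. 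The main obstacle is really Step 2: reconciling $C_{k',k}$ with the contraction $1-1/\condition$ of $I-\eta\Sig$ and the multiplicative \RSGE{} error $\sqrt{\alpha(\eps)}$ so that $\rho<1$ -- this is exactly why $k'$ must be taken as $\Theta(\condition^{2}k)$, matching the tight scaling established in the prior work cited in the excerpt.
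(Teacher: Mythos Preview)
Your proof is correct and follows essentially the same approach as the paper: decompose $v^{t}-\param^{*}$ into the exact-gradient piece $(I-\eta\Sig)(\param^{t}-\param^{*})$ plus the RSGE error $\eta(\UMean^{t}-\widehat{\UMean}^{t})$, bound each, combine with a hard-thresholding inequality, and unroll the resulting linear recursion with a union bound over the $T$ fresh batches. The only cosmetic difference is that the paper invokes the hard-thresholding lemma in the unrestricted form $\|\Hard{k'}(z)-\param^{*}\|_{2}\le\sqrt{\zeta}\,\|z-\param^{*}\|_{2}$ with $\sqrt{\zeta}=1+O(1/\condition)$, so it never needs your support set $\Omega$ or any sparse-operator-norm argument---the full operator norm $\|I-\eta\Sig\|_{\mathrm{op}}\le 1-1/\condition$ is used directly---but this produces exactly the same contraction factor $\rho=(1+O(1/\condition))(1-1/\condition+\sqrt{\alpha(\eps)})$ and the same final bound.
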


\section{Robust sparse regression with
near-optimal guarantee}
\label{sec:optimal}
In this section, we provide near optimal statistical guarantee for robust sparse regression
when the covariance matrix is identity.
Under the assumption $\Sig = \Id$,  \cite{du2017computationally} proposes a robust sparse regression estimator based on robust sparse mean estimation on $\{y_i\bm{x}_i, i \in \Observation \}$, leveraging the fact that $\Expe_{\bm{z}_i \sim P} \left(y_i \bm{x}_i\right) = \param^*$.
With sample complexity $N= \Omega\big(\frac{k^2 \log (d/\nu)}{\eps^2}\big)$, this algorithm produces a $\widetilde{\param}$ such that $
\norm*{\widetilde{\param} - \param^*}_2^2 = \widetilde{O}(\eps^2(\norm{\param^*}_2^2 + \sigma^2))$, with probability at least $1-\nu$.
Using \Cref{thm:Main}, we show that we can obtain significantly stronger statistical guarantees
which are statistically optimal; in particular, our guarantees are independent of $\norm{\param^*}_2$ and yield exact recovery when $\sigma = 0$.

\subsection{RSGE via the ellipsoid algorithm}

\begin{algorithm}[t]
\caption{Separation oracle for robust sparse estimation \cite{du2017computationally}}
\label{alg:filter-F-function}
\begin{algorithmic}[1]
\STATE \textbf{Input: }
Weights from the previous iteration  $\{w_i, i\in\Observation\}$, gradient samples  $\{\gradsample_i, i\in\Observation\}$.
\STATE \textbf{Output: } Weight $\{w_i', i\in\Observation \}$
\STATE \textbf{Parameters: }Hard thresholding parameter $\kk$, parameter $\sep$.\\
{\kern2pt \hrule \kern2pt}
\STATE Compute the weighted sample mean $\widetilde{\UMean}=\sum_{i \in \Observation}  w_i \gradsample_i $,
and $\widehat{\UMean} = \Hard{2\kk}\big(\widetilde{\UMean}\big)$.
\STATE Compute the weighted sample covariance matrix $\widehat{\bm{\Sigma}}
 = \sum_{i \in \Observation}  w_i
 \left(\gradsample_i-\widehat{\UMean}\right) \left(\gradsample_i-\widehat{\UMean}\right)^{\top}$.
\STATE Solve: $
\max_{{\HM}}  \tr{ \left(\widehat{\bm{\Sigma}} - F\left(\widehat{\UMean}\right)\right) \cdot \HM},  \quad
\text{subject to }  {\HM} \succcurlyeq 0,
 \norm{\HM}_{1,1} \le \kk,
 \tr{\HM} = 1$.
 
Let  $\lambda^*$ be the  optimal value, and ${\HM}^*$ be the corresponding solution.
\STATE \textbf{if } {$\lambda^*  \leq \sep$}
 \textbf{, then return} ``Yes''.
\STATE  \textbf{return} The hyperplane:
$\ell(w') = \ip{ \Big(
    \sum_{i \in \Observation}  w_i'
 \big(\gradsample_i-\widehat{\UMean}\big)
 \big(\gradsample_i-\widehat{\UMean}\big)^\top
 - F\big(\widehat{\UMean}\big)\Big)}{\HM^*} - \lambda^*.$
\end{algorithmic}
\end{algorithm}

More specifically, 
the ellipsoid-based robust sparse mean estimation algorithm \cite{du2017computationally} deals with outliers by trying to optimize the set of weights $\{w_i, i\in \Observation\}$ on each of the samples in $\Real^d$ -- ideally outliers would receive lower weight and hence their impact would be minimized. Since the set of weights is convex, this can be approached using a separation oracle \Cref{alg:filter-F-function}. The \Cref{alg:filter-F-function}
depends on a convex relaxation of Sparse PCA, and
the hard thresholding parameter is $\kk = k'+k$, as the population mean of all authentic gradient samples $\UMean^t$ is guaranteed to be $(k'+k)$-sparse.
In line 4 and 5, we calculate the weighted mean and covariance based on a hard thresholding operator.
In line 6 of \Cref{alg:filter-F-function},
with each call to the relaxation of Sparse PCA, we obtain an optimal value, $\lambda^{\ast}$, and optimal solution, $\HM^{\ast}$, to the problem:
\begin{align}
\label{equ:coro_trace}
\lambda^* = \max_{{\HM}} \tr{ \left(\widehat{\Sig} - F\left(\widehat{\UMean}\right)\right) \cdot \HM},  \quad
\text{subject to }  {\HM} \succcurlyeq 0,
 \norm{\HM}_{1,1} \le \kk,
 \tr{\HM} = 1.
\end{align}
Here, $\widehat{\UMean}$, $\widehat{\Sig}$  are weighted first and second order moment estimates from $\eps$-corrupted samples, and $F : \Real^d \rightarrow \Real^{d\times d}$ is a function with closed-form
\begin{align}
F(\widehat{\UMean}) = \norm*{\widehat{\UMean}}_2^2  \Id + \widehat{\UMean}\widehat{\UMean}^{\top}  + \sigma^2 \Id.
\label{equ:F_definition}
\end{align}
For \cref{equ:F_definition}, given the population mean $\UMean$, we have  $F\left(\UMean\right) =  \Expe_{\bm{z}_i \sim P}( \left(\gradsample_i - \UMean \right)\left(\gradsample_i - \UMean \right)^{\top})$, which calculates the underlying true covariance matrix.
We provide more details about the calculation of $F\left(\cdot\right)$, as well as some smoothness properties, in \Cref{sec:calculations}.

The key component in the separation oracle \Cref{alg:filter-F-function} is to use convex relaxation of Sparse PCA \cref{equ:coro_trace}.
This idea generalizes existing work on using PCA to detect outliers in low dimensional robust mean estimation  \cite{Moitra2016FOCS,Lai2016FOCS}.
To gain some intuition for \cref{equ:coro_trace}, if  $\gradsample_i$ is an outlier,
then
the optimal solution  of \cref{equ:coro_trace}, $\HM^*$, may
detect the direction of this outlier. And this outlier will be down-weighted in the output of \Cref{alg:filter-F-function} by the separating hyperplane.
Finally, \Cref{alg:filter-F-function} will 
terminate with $\lambda^*  \leq \sep$ (line 7) and output the robust sparse mean estimation of the gradients $\widehat{\UMean}$.

Indeed, the ellipsoid-algorithm-based robust sparse mean estimator gives a {\RSGE}, which we can combine with \Cref{thm:Main} to obtain stronger results. We state these as \Cref{thm:coro}.
We note again that the analysis in \cite{du2017computationally} has a flaw. Their Lemma A.3 is incorrect, as our counterexample in \Cref{sec:counter} demonstrates. We provide a correct route of analysis in  \Cref{thm:cov_imply_mean} of \Cref{sec:correct_proof}.

\subsection{Near-optimal statistical guarantees}
\begin{coro}
\label{thm:coro}
Suppose we observe $N\left(k, d, \eps, \nu\right)$ $\eps$-corrupted samples from \Cref{equ:stat_model} with $\Sig = \Id$.
By setting $\kk = k'+k$, if we use the ellipsoid
algorithm for robust sparse gradient estimation with $\sep = \Theta \big( \eps \big(\norm*{\UMean^t}_2^2 + \sigma^2\big)\big)$,
it requires 
$N\left(k, d, \eps, \nu\right)= \Omega\big(\frac{k^2 \log \left(d T/\nu\right)}{\eps^2}\big) T$ samples, and  guarantees $\psi\left(\eps\right) = \widetilde{O}\left(\eps^2\sigma^2\right)$. Hence,
\Cref{alg:inexact_IHT} outputs $\widehat{\param}$, such that
$\norm*{\widehat{\param} - \param^*}_2 = \widetilde{O}\left(\sigma \eps \right)$,
with probability at least $1 - \nu$,
by setting $T = \Theta\left(\log\left(\frac{\norm{\param^*}_2}{\eps\sigma}\right)\right)$.
\end{coro}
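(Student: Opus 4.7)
The plan is to instantiate the meta-theorem (\Cref{thm:Main}) with the ellipsoid-based subroutine of \Cref{alg:filter-F-function}, so the whole argument reduces to two independent tasks: (i) certifying that this subroutine is a $\psi(\eps)$-\RSGE{} in the sense of \Cref{def:sparse_gradient} with $\psi(\eps)=\widetilde{O}(\eps^{2}\sigma^{2})$, and (ii) controlling failure probability across all $T$ iterations. Once both are in hand, the error guarantee $\norm{\widehat{\param}-\param^{*}}_{2}=O(\sqrt{\psi(\eps)})=\widetilde{O}(\sigma\eps)$ follows directly from \Cref{thm:Main}, and the prescribed iteration count $T=\Theta(\log(\norm{\param^{*}}_{2}/(\sigma\eps)))$ is exactly what the meta-theorem demands after substituting this $\psi(\eps)$.

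For task (i), I would exploit the assumption $\Sig=\Id$, under which the population gradient $\UMean^{t}=\param^{t}-\param^{*}$ is $(k'+k)$-sparse, so sparse mean estimation is legitimate with $\kk=k'+k$. I would plug the gradient samples $\gradsample_{i}^{t}=\bm{x}_{i}(\bm{x}_{i}^{\top}\param^{t}-y_{i})$ into \Cref{alg:filter-F-function}: a standard calculation gives that their true covariance is exactly $F(\UMean^{t})$ from \cref{equ:F_definition}, so the $F(\widehat{\UMean})$ subtracted in the oracle is the correct reference. When the algorithm terminates, the certificate $\lambda^{*}\le \sep$ bounds the sparse operator norm of $\widehat{\Sig}-F(\widehat{\UMean})$ on the feasible set of \cref{equ:coro_trace}. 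The crucial step, and the main obstacle, is translating this covariance-level bound into the mean-level bound required by \Cref{def:sparse_gradient}; for this I would invoke \Cref{thm:cov_imply_mean}, which is the paper's corrected route around the erroneous Lemma A.3 of \cite{du2017computationally}. Choosing $\sep=\Theta(\eps(\norm{\UMean^{t}}_{2}^{2}+\sigma^{2}))$ feeds two pieces into this translation: the $\norm{\UMean^{t}}_{2}^{2}$ piece contributes to the $\alpha(\eps)\norm{\UMean^{t}}_{2}^{2}$ term in the RSGE inequality, while the $\sigma^{2}$ piece produces the additive $\psi(\eps)=\widetilde{O}(\eps^{2}\sigma^{2})$ that survives the meta-theorem.

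For task (ii), I would rely on concentration of the weighted first and second moments of the gradient samples around their population counterparts $\UMean^{t}$ and $F(\UMean^{t})$ over sparse directions of cardinality $\kk=O(k)$. Standard matrix concentration for $(k'+k)$-sparse quadratic forms, together with the sparse-PCA-style covering argument underlying the relaxation in \cref{equ:coro_trace}, yields the per-iteration bound with $n(k,d,\eps,\nu')=\Omega(k^{2}\log(d/\nu')/\eps^{2})$ samples at confidence $1-\nu'$. A union bound over the $T$ iterations (with $\nu'=\nu/T$, as allowed because \Cref{alg:inexact_IHT} uses sample splitting) gives overall failure at most $\nu$ and the claimed total sample complexity $N=\Omega(k^{2}\log(dT/\nu)/\eps^{2})\cdot T$.

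Finally, I would combine the two pieces: the verified $\psi(\eps)$-\RSGE{} property, plugged into \Cref{thm:Main} with $\eta=1/\ssmoothness=1$ (since $\Sig=\Id$) and $k'=\condition^{2}k=k$, yields $\norm{\widehat{\param}-\param^{*}}_{2}=O(\sqrt{\psi(\eps)})=\widetilde{O}(\sigma\eps)$, and the number of iterations from \Cref{thm:Main} evaluates to $\Theta(\log(\norm{\param^{*}}_{2}/(\eps\sigma)))$, completing the corollary. The one delicate place I would be extra careful about is the interaction between $\param^{t}$ (which evolves and is not independent of the data without sample splitting) and the concentration inequalities controlling $\widehat{\Sig}$ and $\widehat{\UMean}$; the sample-splitting convention from \Cref{alg:inexact_IHT} is what makes the per-iteration independence assumption valid and lets the union bound go through cleanly.
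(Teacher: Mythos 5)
Your proposal follows essentially the same route as the paper: certify the ellipsoid subroutine as a $\widetilde{O}(\eps^2\sigma^2)$-{\RSGE} via \Cref{thm:cov_imply_mean} (the corrected replacement for Lemma A.3 of \cite{du2017computationally}), with $\sep = \Theta(\eps(\norm{\UMean^t}_2^2 + \sigma^2))$ coming from the computed $\Lcov$ and $\LF$, and then invoke \Cref{thm:Main} together with a union bound over the $T$ sample-split iterations. One wording caveat: $\lambda^*$ does \emph{not} bound the $\kk$-sparse operator norm of $\widehat{\Sig} - F(\widehat{\UMean})$ (that is precisely the flaw in Lemma A.3); it only upper bounds the $\kk$-sparse largest eigenvalue, which is exactly the quantity \Cref{thm:cov_imply_mean} lower bounds, so your explicit reliance on that lemma keeps the argument sound.
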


For a desired error level $\eps' \geq \eps$, we only require sample complexity
$N\left(k, d, \eps, \nu\right)= \Omega\big(\frac{k^2 \log \left(d T/\nu\right)}{\eps'^2}\big) T$.
Hence, we can achieve   statistical error  $\widetilde{O}\big(\sigma \big( \sqrt{{k^2 \log \left(d \right)}/{N}} \vee \eps \big)  \big)$. Our error bound is nearly optimal compared to the information-theoretically optimal ${O}\big(\sigma \big( \sqrt{{k \log \left(d \right)}/{N}} \vee \eps \big) \big)$ in \cite{gao2017robust}, as the $k^2$
term is necessary by  an oracle-based SQ lower bound \cite{kane2016SQ}.
\emph{Proof sketch of \Cref{thm:coro}}
The key to the proof relies on showing that $\lambda^{\ast}$ controls the quality of the weights of the current iteration, i.e., small $\lambda^{\ast}$ means good weights and thus a good current solution. Showing this relies on using $\lambda^{\ast}$ to control $\widehat{\Sig} - F(\widehat{\UMean})$. Lemma A.3 in \cite{du2017computationally} claims that
$ \lambda^* \geq \norm*{ \widehat{\Sig} - F(\widehat{\UMean})}_{\rm{\kk,op}}$. As we show in \Cref{sec:counter}, however, this need not hold. This is because the trace norm maximization
\cref{equ:coro_trace} is \emph{not} a valid convex relaxation for the $\kk$-sparse operator norm when the term $\widehat{\Sig} - F(\widehat{\UMean})$ is not  p.s.d. (which indeed it need not be). We provide a different line of analysis in \Cref{thm:cov_imply_mean}, essentially showing that even without the claimed (incorrect) bound, $\lambda^{\ast}$ can still provide the control we need. With the corrected analysis for $\lambda^{*}$, the ellipsoid algorithm guarantees
$\norm*{ \widehat{\UMean} - {\UMean}}_2^2
= \widetilde{O}(\eps^2 (\norm{{\param - \param^{*}}}_2^2 + \sigma^2))$ with probability at least $1-\nu$. Therefore, the algorithm provides an  $\widetilde{O}\left(\eps^2\sigma^2\right)$-{\RSGE}.

\section{Robust sparse mean estimation via filtering}\label{sec:robust-mean}


%


\begin{algorithm}[t]
\caption{RSGE via filtering}
\label{alg:filter-mean-second}
\begin{algorithmic}[1]
\STATE \textbf{Input: }A set $\Input$.
\STATE \textbf{Output: }A set $\Output$ or sparse mean vector $\widehat{\UMean}$.
\STATE \textbf{Parameters: }Hard thresholding parameter $\kk$, parameter $\sep$.  \\
{\kern2pt \hrule \kern2pt}
\STATE Compute the sample mean $\widetilde{\UMean}=\Expe_{i \in_u \Input}\big(\gradsample_i\big)$,
and $\widehat{\UMean} = \Hard{2\kk}\big(\widetilde{\UMean}\big)$.\label{line:4}
\STATE Compute the sample covariance matrix $\widehat{\bm{\Sigma}}
 = \Expe_{i \in_u \Input}\left(\gradsample_i-\widehat{\UMean}\right) \left(\gradsample_i-\widehat{\UMean}\right)^{\top}$.

\STATE Solve the following convex program:
\begin{align}\label{equ:convex_relax}
\max_{{\HM}}  \tr{ \widehat{\Sig} \cdot \HM},  \quad
\text{subject to }  {\HM} \succcurlyeq 0,
 \norm{\HM}_{1,1} \le \kk,
 \tr{\HM} = 1.
\end{align}

Let  $\lambda^*$ be the  optimal value, and ${\HM}^*$ be the corresponding solution.

\STATE \textbf{if} {$\lambda^*  \leq \sep$}
\textbf{, then return} with $\widehat{\UMean}$.

\STATE Calculate projection score for each $i \in \Input$: 
\begin{align*}
    \tau_i =  \mathrm{Tr}\Big(\HM^* \cdot \left(\gradsample_i-\widehat{\UMean}\right)\left(\gradsample_i-\widehat{\UMean}\right)^{\top}\Big).
\end{align*}

\STATE Randomly remove a sample $r$ from $\Input$ according to \label{line:remove}
\begin{align}\label{equ:Pr_remove}
\Pr\left(\gradsample_i \text{ is removed}\right) = \frac{\tau_i}{\sum_{i \in \Input} \tau_i}.
\end{align}
\STATE  \textbf{return} the set $\Output= \Input \setminus \{r\}$.
\end{algorithmic}
\end{algorithm}

From a computational viewpoint, the time complexity of \Cref{alg:inexact_IHT} depends on the {\RSGE} in each iterate. The time complexity of the ellipsoid algorithm is indeed polynomial in the dimension, but it requires $O\big(d^2\big)$ calls to a relaxation of Sparse PCA (\cite{bubeck2015convex}). In this section, we introduce a faster algorithm as a {\RSGE}, which only requires $O\left(n\right)$ calls of Sparse PCA (recall that $n$ only scales with $k^2 \log d$). Importantly,
this {\RSGE} is flexible enough to deal with unknown covariance matrix, yet the ellipsoid algorithm cannot.
Before we move to the result for
unknown covariance matrix in \Cref{sec:SparseCovariance},
we first introduce \Cref{alg:filter-mean-second} 
and analyze its performance when the covariance is identity. These supporting Lemmas will be later used in the unknown case. 

Our proposed RSGE (\Cref{alg:filter-mean-second}) attempts to remove one outlier at each iteration, as long as a good solution has not already been identified.
It first estimates the gradient  $\widehat{\UMean}$ by
hard thresholding
(line 4)
and then estimates the corresponding
sample covariance matrix $\widehat{\bm{\Sigma}}$ (line 5).
By solving (a relaxation of) Sparse PCA, we obtain a scalar $\lambda^*$ as well as a matrix $\HM^*$.
If $\lambda^*$ is smaller than the predetermined threshold $\sep$, we have a certificate that the effect of the outliers is well-controlled (specified in \cref{equ:lambda_lower}).
Otherwise, we compute a score for each sample based on $\HM^*$, and discard one of the samples according to a
probability distribution where each sample's probability of being discarded is proportional to the score we have computed \footnote{Although we remove one sample in  \Cref{alg:filter-mean-second}, our theoretical analysis naturally extend to removing constant number of outliers.
This speeds up the algorithm in practice, yet shares the same computational complexity}.
\Cref{alg:filter-mean-second} can be used for other robust sparse functional estimation
problems
(e.g.,  robust sparse mean estimation for $\mathcal{N}\left(\bm{\mu}, \bm{I_d}\right)$, where $\bm{\mu} \in \Real^d$ is $k$-sparse).
%
%
To use \Cref{alg:filter-mean-second} as a RSGE given
$n$ gradient samples (denoted  as $\Input$), we call \Cref{alg:filter-mean-second} repeatedly on $\Input$ and then on its output, $\Output$, until it returns a robust estimator $\widehat{\UMean}$. The next theorem provides guarantees on this iterative application of \Cref{alg:filter-mean-second}.
\begin{theo}\label{thm:Filter}
Suppose we observe $n = \Omega \big(\frac{k^2 \log \left(d/\nu\right)}{\eps} \big)$ $\eps$-corrupted samples from \Cref{equ:stat_model} with $\Sig = \Id$.
Let $\Input$ be an $\eps$-corrupted set of gradient samples $\{\gradsample_i^t\}_{i=1}^n$.
By setting $\kk = k'+k$, if we run \Cref{alg:filter-mean-second} iteratively with initial set $\Input$, and subsequently on $\Output$, and use
$\sep = C_\gamma \big(\norm*{\UMean^t}_2^2 + \sigma^2\big)$, \footnote{Similar to \cite{Moitra2016FOCS,Moitra2017ICML,sever2018,du2017computationally}, our results seem to require this side information.}
then this repeated use of \Cref{alg:filter-mean-second} will stop after at most $\frac{1.1\gamma}{\gamma-1} \eps n$ iterations, and output $\widehat{\UMean}^t$, such that
$
\norm*{\widehat{\UMean}^t - \UMean^t}_2^2 = \widetilde{O}\left(\eps \left( \norm*{\UMean^t}_2^2+ \sigma^2\right)\right)$,
with probability at least $1-\nu - \exp\left(-\Theta\left(\eps n\right)\right)$. Here, $C_\gamma$ is a constant depending on $\gamma$, where $\gamma \geq 4$ is a constant.
\end{theo}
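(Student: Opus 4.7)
The plan is to analyze the iterated application of \Cref{alg:filter-mean-second} through two complementary arguments: (i) whenever the stopping condition $\lambda^* \leq \sep$ triggers, the output $\widehat{\UMean}$ is close to $\UMean^t$; and (ii) in every non-terminating round, the randomized removal in \cref{equ:Pr_remove} is biased toward discarding outliers, so the algorithm must terminate within the claimed budget of $\frac{1.1\gamma}{\gamma-1}\eps n$ rounds.

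For step (i), I would first establish a uniform concentration bound on the authentic samples: for any $\Input \subseteq \Observation$ whose surviving good set $\Input \cap \Good$ has cardinality $\geq (1-2\eps)n$, the $\kk$-sparse operator norm of their empirical covariance around $\UMean^t$ is at most $C(\norm{\UMean^t}_2^2+\sigma^2)$ with probability $1-\nu$. This follows from Gaussian covariance concentration coupled with a union bound over all $\binom{d}{\kk}$ sparse supports, which drives the sample complexity $n = \Omega(k^2\log(d/\nu)/\eps)$. Given this, whenever $\lambda^* \leq \sep$, taking $\HM = \bm{v}\bm{v}^\top$ for any $\kk$-sparse unit $\bm{v}$ (which is feasible since $\norm{\bm{v}\bm{v}^\top}_{1,1}\leq \kk$ by Cauchy–Schwarz) yields $\bm{v}^\top \widehat{\Sig}\bm{v}\leq \sep$. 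Decomposing $\widehat{\UMean}-\UMean^t$ into a good-sample concentration term plus a contamination term, the contamination contribution along any sparse direction is bounded by $\sqrt{\eps \cdot \lambda^*}$ via Cauchy–Schwarz; since $\widehat{\UMean}-\UMean^t$ is sparse by the hard-thresholding in line~\ref{line:4}, this yields $\norm{\widehat{\UMean}-\UMean^t}_2^2=\widetilde{O}(\eps(\norm{\UMean^t}_2^2+\sigma^2))$.

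For step (ii), note that $\sum_{i\in\Input}\tau_i/\abs{\Input} = \mathrm{Tr}(\widehat{\Sig}\HM^*)=\lambda^*$. By the same sparse concentration as in step (i), the contribution from $i\in\Input\cap\Good$ to this sum is at most $C_{\mathrm{good}}(\norm{\UMean^t}_2^2+\sigma^2)$ with high probability. Choosing $C_\gamma$ so that $\sep \geq \gamma \cdot C_{\mathrm{good}}(\norm{\UMean^t}_2^2+\sigma^2)$, whenever the algorithm does not stop (i.e.\ $\lambda^* > \sep$), the bad-sample contribution $\sum_{i\in\Input\cap\Bad}\tau_i$ accounts for at least a $(\gamma-1)/\gamma$ fraction of the total $\sum_i \tau_i$. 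Hence by \cref{equ:Pr_remove}, each non-terminating round removes a true outlier with probability at least $(\gamma-1)/\gamma$. I would then introduce a supermartingale $M_s = |\Bad\cap\Input_s| - \tfrac{1}{\gamma-1}\cdot(\text{good samples removed by round }s)$ and apply Freedman's inequality to show that after $\frac{1.1\gamma}{\gamma-1}\eps n$ rounds all $\eps n$ outliers have been eliminated (forcing the stopping condition) while fewer than $\eps n/(\gamma-1)$ authentic samples have been discarded, except on an event of probability $\exp(-\Theta(\eps n))$.

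The main obstacle will be step (ii): because the removal is randomized, the per-round bias toward outliers only holds in expectation, so the advantage must be accumulated over $\Theta(\eps n)$ rounds via a martingale concentration inequality whose filtration simultaneously tracks the evolving subset $\Input$. A secondary subtlety is that the sparse covariance concentration must hold \emph{uniformly} over every subset that could arise during filtering; this is why the union bound over $\kk$-sparse supports should be performed once for the full authentic set $\Good$ and then transferred to arbitrary large subsets, rather than being re-derived per iteration.
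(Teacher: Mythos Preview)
Your high-level plan matches the paper's: prove that termination implies accuracy (your step (i)), prove that non-termination biases removal toward outliers (\Cref{thm:kappa_bound}), and then run a supermartingale argument to bound the number of rounds. The paper also uses a supermartingale with bounded increments (borrowed from the robust PCA literature) and Azuma--Hoeffding rather than Freedman, but this is cosmetic.

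There is, however, a real gap in your step (ii). You assert that the good-sample contribution satisfies $\sum_{i\in\SGood}\tau_i \leq C_{\mathrm{good}}(\norm{\UMean^t}_2^2+\sigma^2)$ ``by the same sparse concentration as in step (i)''. Two issues. First, $\tau_i = \tr{\HM^*(\gradsample_i-\widehat{\UMean})^{\otimes 2}}$ is centered at $\widehat{\UMean}$, not $\UMean^t$, and $\widehat{\UMean}$ is computed from the \emph{contaminated} set $\Input$; so the good-sample average picks up an additive $\norm{\widehat{\UMean}-\UMean^t}_2^2$ term that can be arbitrarily large in early rounds. The paper absorbs this via a two-case analysis: either $\norm{\widehat{\UMean}-\UMean^t}_2^2$ is already $O(\norm{\UMean^t}_2^2+\sigma^2)$, or it is large, in which case \Cref{lem:cov_sqeps} forces $\lambda^*\gtrsim \norm{\widehat{\UMean}-\UMean^t}_2^2/\eps$, so the ratio $\sum_{\SGood}\tau_i/\sum_{\Input}\tau_i$ is still $\leq 1/\gamma$. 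Second, $\HM^*$ is the SDP optimum, not a rank-one matrix $\vect\vect^\top$ with $\kk$-sparse $\vect$, so the sparse operator-norm concentration you invoke does not directly bound $\tr{\HM^* A}$. The paper instead uses the constraint $\norm{\HM^*}_{1,1}\leq\kk$ together with \emph{entrywise} $\ell_\infty$ concentration on the good empirical covariance (this is exactly where the $k^2$ in the sample complexity enters), plus the Fantope bound $\tr{\HM^* F(\UMean)}\leq\opnorm{F(\UMean)}$ for the population piece. Without these two fixes, your claimed inequality $\sum_{\SGood}\tau_i\leq \tfrac{1}{\gamma}\sum_{\Input}\tau_i$ does not follow.

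A minor point: the algorithm need not eliminate \emph{all} outliers to stop; it stops as soon as $\lambda^*\leq\sep$, which can happen while tame outliers remain. The martingale argument therefore bounds the first time the good event holds, not the time to exhaust $\SBad$.
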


Thus, \Cref{thm:Filter} shows that with  high probability, \Cref{alg:filter-mean-second} provides a Robust Sparse Gradient Estimator where $\psi\left(\eps\right) = \widetilde{O}\left(\eps\sigma^2\right)$.
For example, we can take $\nu = d^{-\Theta(1)}$. 
Combining now with \Cref{thm:Main}, we obtain an error guarantee for robust sparse regression.

\begin{coro}
\label{thm:coro_2}
Suppose we observe $N\left(k, d, \eps, \nu\right)$ $\eps$-corrupted samples from
\Cref{equ:stat_model} with $\Sig = \Id$.
Under the same setting as \Cref{thm:Filter}, if we use \Cref{alg:filter-mean-second} for robust sparse gradient estimation,
it requires $N\left(k, d, \eps, \nu\right)= \Omega\left(\frac{k^2 \log \left(dT/\nu\right)}{\eps}\right) T$ samples, and $T = \Theta\left(\log\left(\frac{\norm{\param^*}_2}{\sigma\sqeps}\right)\right)$,
then we have
$\norm*{\widehat{\param} - \param^*}_2 = \widetilde{O}\left(\sigma\sqeps\right)$
with probability at least $1- \nu - T \exp\left(-\Theta\left(\eps n\right)\right)$.
\end{coro}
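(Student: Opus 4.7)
The plan is to obtain \Cref{thm:coro_2} as a direct composition of \Cref{thm:Filter} and the meta-theorem \Cref{thm:Main}. First I would verify that the iterative application of \Cref{alg:filter-mean-second} described in \Cref{thm:Filter} satisfies the Robust Sparse Gradient Estimator property in \Cref{def:sparse_gradient}. This is immediate from the conclusion of \Cref{thm:Filter}: the bound
\[
\norm{\widehat{\UMean}^t - \UMean^t}_2^2 = \widetilde{O}\bigl(\eps(\norm{\UMean^t}_2^2 + \sigma^2)\bigr)
\]
decomposes into a multiplicative part $\alpha(\eps) = \widetilde{O}(\eps)$ in front of $\norm{\UMean^t}_2^2$ and an additive part $\psi(\eps) = \widetilde{O}(\eps\sigma^2)$. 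In particular, the conclusion of \Cref{thm:Filter} certifies \Cref{alg:filter-mean-second} as a $\psi(\eps)$-RSGE with $\psi(\eps) = \widetilde{O}(\eps\sigma^2)$, which is exactly the object required by \Cref{thm:Main}.

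Next I would do the bookkeeping on sample splitting and failure probability. \Cref{alg:inexact_IHT} splits the $N$ samples into $T$ independent batches of size $n$, so the RSGE guarantee at iteration $t$ only consumes freshly drawn samples. To ensure a per-iteration failure probability of at most $\nu/T$ from the $\nu$-dependent part of \Cref{thm:Filter}, I choose $n = \Omega\bigl(k^2\log(dT/\nu)/\eps\bigr)$, which gives the stated total sample complexity $N = n T = \Omega\bigl(k^2\log(dT/\nu)/\eps\bigr)\cdot T$. A union bound over the $T$ iterations then yields that every invocation of \Cref{alg:filter-mean-second} succeeds as an RSGE with probability at least
\[
1 - \nu - T\exp\bigl(-\Theta(\eps n)\bigr),
\]
matching the probability claimed in the statement; the extra exponential term simply propagates the random-removal concentration error from \Cref{thm:Filter} through the union bound.

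Finally, on the event that the RSGE condition holds at every iteration, \Cref{thm:Main} applies verbatim with step size $\eta = 1/\ssmoothness$, threshold $k' = \condition^2 k$, and number of iterations
\[
T = \Theta\!\left(\log\frac{\norm{\param^*}_2}{\sqrt{\psi(\eps)}}\right) = \Theta\!\left(\log\frac{\norm{\param^*}_2}{\sigma\sqeps}\right),
\]
yielding $\norm{\widehat{\param} - \param^*}_2 = O(\sqrt{\psi(\eps)}) = \widetilde{O}(\sigma\sqeps)$. No additional analysis of IHT dynamics is needed, since \Cref{thm:Main} is a deterministic stability statement whose probabilistic content is entirely absorbed into certifying the RSGE condition.

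There is no genuine obstacle here; the only subtlety is the mild bookkeeping around the two different failure sources (the $\nu/T$ from the Sparse PCA concentration and the $\exp(-\Theta(\eps n))$ from the filtering analysis), together with the implicit circularity that $T$ appears inside both the sample complexity and the iteration count. The latter is resolved in the usual way by observing that $T$ depends only logarithmically on problem parameters, so any fixed-point choice (e.g., upper bounding $\log(\norm{\param^*}_2/(\sigma\sqeps))$ by a known function of $d,k,\eps,\sigma$) makes the dependence consistent without affecting the stated orders.
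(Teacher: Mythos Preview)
Your proposal is correct and matches the paper's own treatment: the paper does not give a standalone proof of \Cref{thm:coro_2} but simply states it as the immediate consequence of combining \Cref{thm:Filter} (certifying \Cref{alg:filter-mean-second} as a $\widetilde{O}(\eps\sigma^2)$-RSGE) with the meta-theorem \Cref{thm:Main}. Your bookkeeping on sample splitting, the union bound over $T$ iterations, and the two failure-probability sources is exactly the intended argument.
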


Similar to \Cref{sec:optimal},
we can achieve   statistical error   $\widetilde{O}\big(\sigma \big( \sqrt{{k^2 \log \left(d \right)}/{N}} \vee \sqeps \big)  \big)$.
The scaling of $\eps$ in \Cref{thm:coro_2} is $\widetilde{O}\left(\sqeps\right)$.
These guarantees are worse than $\widetilde{O}\left(\eps\right)$ achieved by ellipsoid methods. 
Nevertheless, this result is strong enough to guarantee exact recovery when either $\sigma$ or $\epsilon$ goes to zero.
The simulation  of robust estimation for the filtering algorithm is in \Cref{sec:simulation}.



The key step in \Cref{alg:filter-mean-second}
is  outlier removal \cref{equ:Pr_remove}
based on the solution of
Sparse PCA's convex relaxation
\cref{equ:convex_relax}. We describe the outlier removal below, and then give the proofs
 in \Cref{sec:proof_kappa_bound} and \Cref{sec:proof_Filter}.

\textbf{Outlier removal guarantees in \Cref{alg:filter-mean-second}. }
We denote  samples in the input set $\Input$ as $\gradsample_i$.
 This input set $\Input$ can be partitioned into two parts: $\SGood = \{i: i\in \Good \text{ and } i\in \Input\}$, and
$\SBad = \{i: i\in \Bad \text{ and } i\in \Input\}$. 
\Cref{thm:kappa_bound} shows that \Cref{alg:filter-mean-second} can
return a guaranteed gradient estimate, 
or  the outlier removal step \cref{equ:Pr_remove} is 
likely to discard an outlier.
The guarantee on the outlier removal step \cref{equ:Pr_remove}
hinges on the fact that 
if
$\sum_{i \in \SGood} \tau_i$  is less than ${\sum_{i \in \SBad} \tau_i}$, we can show  \cref{equ:Pr_remove} is likely to remove an outlier.

\begin{lemm}\label{thm:kappa_bound}
Suppose we observe $n = \Omega \big(\frac{k^2 \log \left(d/\nu\right)}{\eps}\big)$ $\eps$-corrupted samples from \Cref{equ:stat_model} with $\Sig = \Id$.
Let $\Input$ be an $\eps$-corrupted set  $\{\gradsample_i^t\}_{i=1}^n$. \Cref{alg:filter-mean-second} computes $\lambda^*$ that satisfies
 \begin{align}\label{equ:lambda_lower}
    \lambda^* \ge  \max_{\norm{\vect}_2=1, \norm{\vect}_0\leq \kk }\vect^{\top} \left({\Expe_{i \in_u \Input}
\left(\gradsample_i-\widehat{\UMean}\right)\left(\gradsample_i-\widehat{\UMean}\right)^\top}\right) \vect.
 \end{align}
If $\lambda^* \geq  \sep = C_\gamma \left(\norm{\UMean^t}_2^2 + \sigma^2\right)$, then with probability at least $1-\nu$, we have
$\sum_{i \in \SGood} \tau_i \leq \tfrac{1}{{\gamma}} {\sum_{i \in \Input} \tau_i}$,
where $\tau_i$ is defined in line 8, $C_\gamma$ is a constant depending on $\gamma$, and $\gamma \geq 4$ is a constant.
\end{lemm}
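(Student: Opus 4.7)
I split the proof in two. The deterministic inequality \eqref{equ:lambda_lower} is a comparison between the SDP relaxation and the $\kk$-sparse operator norm. For any unit $\vect$ with $\norm{\vect}_0 \le \kk$, the rank-one matrix $\HM = \vect\vect^\top$ is a feasible point of \eqref{equ:convex_relax}: it is p.s.d., $\tr{\HM}=\norm{\vect}_2^2 = 1$, and Cauchy--Schwarz gives $\norm{\HM}_{1,1} = \norm{\vect}_1^2 \le \norm{\vect}_0 \norm{\vect}_2^2 \le \kk$. Since $\tr{\widehat{\Sig}\,\HM} = \vect^\top\widehat{\Sig}\vect$, maximizing the SDP over such $\HM$ is at least the right-hand side of \eqref{equ:lambda_lower}; no randomness is required here.

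For the conditional claim I would rewrite the scores via trace inner products. By linearity $\sum_{i\in\Input}\tau_i = n\tr{\HM^*\widehat{\Sig}} = n\lambda^*$, while $\sum_{i\in\SGood}\tau_i = n_g\tr{\HM^*\widehat{\Sig}_g}$, where $\widehat{\Sig}_g = \Expe_{i\in_u\SGood}(\gradsample_i-\widehat{\UMean})(\gradsample_i-\widehat{\UMean})^\top$ and $n_g = |\SGood|\le n$. The lemma therefore reduces to the uniform inequality $\tr{\HM^*\widehat{\Sig}_g} \le C\left(\norm{\UMean^t}_2^2 + \sigma^2\right)$, for an absolute constant $C$, over all SDP-feasible $\HM^*$; taking $C_\gamma$ large enough in $\gamma$ (any $C_\gamma\ge\gamma C$) then forces $\tr{\HM^*\widehat{\Sig}_g}\le\lambda^*/\gamma$ whenever $\lambda^*\ge\sep$, and the result follows.

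To prove the uniform bound I would decompose $\widehat{\Sig}_g = S_g + R$ through the population mean, with $S_g = \Expe_{i\in_u\SGood}(\gradsample_i-\UMean^t)(\gradsample_i-\UMean^t)^\top$ and $R$ collecting the cross and quadratic terms in $\widehat{\UMean}-\UMean^t$. Since $\Expe S_g = F(\UMean^t) = (\norm{\UMean^t}_2^2+\sigma^2)\Id + \UMean^t(\UMean^t)^\top$, the constraints $\HM^*\succcurlyeq 0$ and $\tr{\HM^*}=1$ (hence $\norm{\HM^*}_{\rm op}\le 1$) immediately yield the signal bound $\tr{\HM^* F(\UMean^t)} \le 2\norm{\UMean^t}_2^2+\sigma^2$. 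For the stochastic deviation I invoke the $\ell_{1,1}$/$\ell_\infty$ duality $|\tr{\HM^* A}|\le \norm{\HM^*}_{1,1}\norm{A}_{\max} \le \kk\norm{A}_{\max}$. Each entry of $S_g - F(\UMean^t)$ is an empirical average of sub-exponential products of Gaussians with scale $O(\norm{\UMean^t}_2^2+\sigma^2)$, so Bernstein together with a union bound over $d^2$ entries gives $\norm{S_g - F(\UMean^t)}_{\max} = O\bigl((\norm{\UMean^t}_2^2+\sigma^2)\sqrt{\log(d/\nu)/n}\bigr)$, and with $n = \Omega(\kk^2\log(d/\nu)/\eps)$ this contributes only $O\bigl(\sqeps(\norm{\UMean^t}_2^2+\sigma^2)\bigr)$. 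The remainder $R$ is handled by an a priori bound $\norm{\widehat{\UMean}-\UMean^t}_2 = O(\norm{\UMean^t}_2+\sigma)$ inherited from line~4 of \Cref{alg:filter-mean-second} (the hard threshold of the sample mean, together with analogous entrywise concentration), after which each term in $R$ contributes at most $O(\norm{\UMean^t}_2^2+\sigma^2)$.

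\textbf{Main obstacle.} The delicate point is that the constraint $\norm{\HM^*}_{1,1}\le\kk$ does not force $\HM^*$'s eigenvectors to be $\kk$-sparse, so one cannot directly dualize it into the sparse operator norm; this is exactly the asymmetry that forbids the route attempted by Lemma A.3 of \cite{du2017computationally}. Substituting the coarser $\ell_{1,1}/\ell_\infty$ duality is what produces the $\kk^2\log(d/\nu)/\eps$ sample complexity rather than the optimal $\kk\log d$. A second subtlety is that $\widehat{\UMean}$ is computed on the contaminated set $\Input$ and therefore correlated with the adversary's samples; routing the analysis through the population mean $\UMean^t$ via the split $\widehat{\Sig}_g = S_g + R$ is what prevents this correlation from contaminating the clean-sample concentration on $S_g$.
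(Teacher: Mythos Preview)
Your treatment of \eqref{equ:lambda_lower} is correct, and the decomposition $\widehat{\Sig}_g = S_g + R$ together with the bounds $\tr{\HM^* F(\UMean^t)}\le 2\norm{\UMean^t}_2^2 + \sigma^2$ and $\lvert\tr{\HM^*(S_g-F(\UMean^t))}\rvert\le\kk\norm{S_g-F(\UMean^t)}_\infty$ match the paper's argument (the paper packages the latter via Theorem~A.1 of \cite{du2017computationally}). The gap is in your handling of the remainder $R$: the claimed a~priori bound $\norm{\widehat{\UMean}-\UMean^t}_2 = O(\norm{\UMean^t}_2 + \sigma)$ is not available, and line~4 of \Cref{alg:filter-mean-second} does not supply it. The vector $\widehat{\UMean}$ is the hard-thresholded empirical mean over the \emph{contaminated} set $\Input$, and the adversary may place its $\eps n$ points anywhere inside the polynomial-radius ball allowed by pre-processing. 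Concentrating them all at $R_0\cdot e_1$ with $R_0=\poly(n,d,1/\eps)$ shifts the sample mean by $\Theta(\eps R_0)$ in coordinate~$1$; the hard threshold retains this coordinate, so $\norm{\widehat{\UMean}-\UMean^t}_2=\Theta(\eps R_0)$, and the term $(\UMean^t-\widehat{\UMean})(\UMean^t-\widehat{\UMean})^\top$ inside $R$ contributes $\Theta(\eps^2 R_0^2)$ to $\tr{\HM^*\widehat{\Sig}_g}$. Your reduction ``$\tr{\HM^*\widehat{\Sig}_g}\le C(\norm{\UMean^t}_2^2+\sigma^2)$ uniformly'' therefore fails.

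The paper does \emph{not} attempt to bound $\Expe_{i\in_u\SGood}\tau_i$ by an absolute multiple of $\norm{\UMean^t}_2^2+\sigma^2$. It stops at the honest estimate
\[
\Expe_{i\in_u\SGood}\tau_i \;\le\; C_1\bigl(\norm{\UMean^t}_2\,\norm{\DSHat}_2 + \norm{\DSHat}_2^2 + \norm{\UMean^t}_2^2 + \sigma^2\bigr),
\qquad \DSHat=\widehat{\UMean}-\UMean^t,
\]
and then performs a case split on the size of $\norm{\DSHat}_2$. If $\norm{\DSHat}_2^2 \lesssim \norm{\UMean^t}_2^2+\sigma^2$ the right-hand side is $O(\norm{\UMean^t}_2^2+\sigma^2)$ and the assumption $\lambda^*\ge C_\gamma(\norm{\UMean^t}_2^2+\sigma^2)$ finishes as you intended. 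If instead $\norm{\DSHat}_2^2 \gtrsim \norm{\UMean^t}_2^2+\sigma^2$, the paper invokes the first-to-second moment transfer (\Cref{lem:cov_sqeps}) to show that the SDP value itself must satisfy $\lambda^* \gtrsim \norm{\DSHat}_2^2/\eps$; since $\Expe_{i\in_u\Input}\tau_i=\lambda^*$, the total score then dominates the good-sample score by a factor $\Theta(1/\eps)$ regardless of how large $\norm{\DSHat}_2$ is. This case analysis, driven by the lower bound on $\lambda^*$ in terms of $\norm{\DSHat}_2^2$, is the missing ingredient in your proposal.
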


The proofs are  collected in \Cref{sec:proof_kappa_bound}.
In a nutshell, \cref{equ:lambda_lower}
is a natural convex relaxation for the sparsity constraint $\{\vect: \norm{\vect}_2=1, \norm{\vect}_0\leq \kk \}$. 
On the other hand, when $\lambda^* \geq  \sep$, 
the contribution of  $\sum_{i\in \SGood}\tau_i$ is  relatively small,
which can be obtained through concentration inequalities for the samples in $\SGood$.
Based on \Cref{thm:kappa_bound}, if $\lambda^* \leq \sep$,
 then the RHS of \cref{equ:lambda_lower} is bounded, leading to the error guarantee of
 $\norm*{\widehat{\UMean}^t - \UMean^t}_2^2$. On the other hand, if $\lambda^* \geq \sep$, we can show that \cref{equ:Pr_remove} is more likely to throw out samples of $\SBad$ rather than $\SGood$. Iteratively applying \Cref{alg:filter-mean-second} on the remaining samples,
we can remove those outliers with large effect, and keep the remaining outliers' effect well-controlled.
This leads to the final bounds in \Cref{thm:Filter}.

\section{Robust sparse regression with unknown covariance}\label{sec:SparseCovariance}

In this section, we consider robust sparse regression with unknown covariance matrix $\Sig$, which has additional sparsity structure. Formally, we define the sparse covariance matrices as  follows:
\begin{model}[Sparse covariance matrices]
\label{model:covariance}
In \Cref{equ:stat_model}, the authentic covariates $\{\bm{x}_i, i\in \Good\}$ are drawn from  $\mathcal{N} (0, \Sig)$.
We assume that
each row and column of $\Sig$ is $r$-sparse, but 
the positions of the non-zero entries are unknown.
\end{model}
\Cref{model:covariance} is widely studied in high dimensional statistics \cite{bickel2008covariance,el2008operator,WainwrightBook}. 
Under \Cref{model:covariance}, for the population gradient
$\UMean^t = \Expe_P\left(\x_i \x_i^{\top}\left(\param^{t} - \param^{*}\right)\right) = \Sig \paramdiff^t$, where
we use $\paramdiff^t$ to denote the $(k'+k)$-sparse vector $\param^t - \param^*$,  
 we can guarantee the
$\norm*{\UMean^t}_0 = \norm*{\Sig \paramdiff^t}_0  \leq r(k'+k)$.
Hence, we can use the filtering algorithm (\Cref{alg:filter-mean-second}) with $\kk = r(k'+k)$ as a RSGE for robust sparse regression with unknown $\Sig$. When the covariance is unknown, we cannot evaluate $F(\cdot)$ a priori, thus the ellipsoid algorithm is not applicable to this case. And we provide error guarantees as  follows.
\begin{theo}
\label{coro:unknown}
Suppose we observe $N\left(k, d, \eps, \nu\right)$ $\eps$-corrupted samples from
\Cref{equ:stat_model}, where the covariates $\bm{x}_i$'s follow from \Cref{model:covariance}.
If we use \Cref{alg:filter-mean-second} for robust sparse gradient estimation, it requires 
$\widetilde{\Omega}\left(\frac{r^2 k^2 \log \left(dT/\nu\right)}{\eps}\right) T $ samples,
and  $T = \Theta\left(\log\left(\frac{\norm{\param^*}_2}{\sigma\sqeps}\right)\right)$,
then, we have$
\norm*{\widehat{\param} - \param^*}_2 = \widetilde{O}\left(\sigma\sqeps\right)$,
with probability at least $1- \nu - T \exp\left(-\Theta\left(\eps n\right)\right)$.
\end{theo}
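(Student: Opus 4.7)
The plan is to instantiate the meta-theorem (\Cref{thm:Main}) by exhibiting the filtering algorithm (\Cref{alg:filter-mean-second}) as a $\psi(\eps)$-RSGE with $\psi(\eps) = \widetilde{O}(\eps\sigma^2)$ under \Cref{model:covariance}, in direct analogy with \Cref{thm:coro_2} for the identity-covariance case. The only structural change is that under sparse $\Sig$, the population gradient $\UMean^t = \Sig\paramdiff^t$ (with $\paramdiff^t = \param^t-\param^*$ being $(k'+k)$-sparse after each IHT update) has support of size at most $r(k'+k)$ rather than $(k'+k)$. I would therefore run \Cref{alg:filter-mean-second} with the enlarged hard-thresholding parameter $\kk = r(k'+k)$, keeping the threshold $\sep = C_\gamma(\norm*{\UMean^t}_2^2 + \sigma^2)$. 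Crucially, \Cref{alg:filter-mean-second} never evaluates $F(\cdot)$, so knowledge of $\Sig$ is not required; this is exactly what makes filtering preferable to the ellipsoid route here.

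Next I would retrace the proof of \Cref{thm:Filter} with $\kk = r(k'+k)$ and general sparse $\Sig$. The lower bound \cref{equ:lambda_lower} in \Cref{thm:kappa_bound} is purely structural (it uses only that \cref{equ:convex_relax} is a convex relaxation of the $\kk$-sparse operator norm) and is oblivious to the distribution of the data. The ingredient that must be re-verified is the $\kk$-sparse operator-norm concentration of the authentic sample covariance around $\Expe_P[(\gradsample_i-\UMean^t)(\gradsample_i-\UMean^t)^\top]$. Each $\gradsample_i = \x_i(\x_i^\top\paramdiff^t - \xi_i)$ is a product of Gaussians with sub-exponential coordinates of scale $O(\ssmoothness\norm*{\paramdiff^t}_2 + \sigma)$, and a standard covering argument over $\kk$-sparse unit vectors combined with Bernstein's inequality delivers the required concentration using $n = \widetilde{\Omega}(\kk^2\log(d/\nu)/\eps) = \widetilde{\Omega}(r^2 k^2\log(d/\nu)/\eps)$ samples. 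The $\kk$-sparse operator norm of the population covariance is $O(\ssmoothness^2\norm*{\paramdiff^t}_2^2 + \ssmoothness\sigma^2) = O(\norm*{\UMean^t}_2^2 + \sigma^2)$ since $\ssmoothness,\sconvexity$ are universal constants and $\norm*{\paramdiff^t}_2 \asymp \norm*{\UMean^t}_2$ up to $\condition$, so the threshold $\sep$ is correctly calibrated. The outlier-removal analysis and iteration-count bound from \Cref{thm:Filter} then transfer verbatim, yielding $\norm*{\widehat{\UMean}^t-\UMean^t}_2^2 = \widetilde{O}(\eps(\norm*{\UMean^t}_2^2+\sigma^2))$ with probability $1-\nu-\exp(-\Theta(\eps n))$, i.e.\ a $\widetilde{O}(\eps\sigma^2)$-RSGE in the sense of \Cref{def:sparse_gradient}.

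Plugging this RSGE into \Cref{thm:Main} with $\psi(\eps) = \widetilde{O}(\eps\sigma^2)$ and $T = \Theta(\log(\norm{\param^*}_2/(\sigma\sqeps)))$ then produces $\norm*{\widehat{\param}-\param^*}_2 = O(\sqrt{\psi(\eps)}) = \widetilde{O}(\sigma\sqeps)$, with total sample complexity $N = \widetilde{\Omega}(r^2 k^2 \log(dT/\nu)/\eps)\,T$ after a union bound over the $T$ iterations (reducing per-iteration failure probability to $\nu/T$) and the stated overall failure probability $\nu + T\exp(-\Theta(\eps n))$.

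The main obstacle is the $\kk$-sparse operator-norm concentration step under non-identity $\Sig$: in the isotropic case the coordinates of $\gradsample_i$ decouple cleanly, while for general sparse $\Sig$ one must track the induced correlations in $\x_i\x_i^\top$ through the covering argument. However, because $\ssmoothness$ is a universal constant and the sparsity pattern of $\Sig$ only affects the \emph{support} of $\UMean^t$ rather than the tails of individual coordinates of $\gradsample_i$, the standard covering-plus-Bernstein scheme loses at most constant factors relative to the identity-covariance proof, which are harmlessly absorbed into the $\widetilde{O}$ notation.
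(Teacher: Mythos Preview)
Your proposal is correct and follows essentially the same route as the paper: set $\kk = r(k'+k)$ because $\UMean^t = \Sig\paramdiff^t$ is $r(k'+k)$-sparse under \Cref{model:covariance}, re-establish the $\kk$-sparse concentration bounds for the authentic gradients under general sparse $\Sig$, then reuse the analysis of \Cref{thm:kappa_bound} and \Cref{thm:Filter} verbatim to certify a $\widetilde{O}(\eps\sigma^2)$-RSGE, and finally invoke \Cref{thm:Main}. The paper additionally computes the closed form $F(\UMean) = \Sig\norm{\Sigsqn\UMean}_2^2 + \UMean\UMean^\top + \sigma^2\Sig$ explicitly via a Stein-type identity (to pin down $\Lcov$ and $\LF$), and for the covariance concentration it invokes a heavy-tailed matrix concentration result (Theorem~5.44 in Vershynin) rather than your covering-plus-Bernstein scheme, but these are interchangeable tools leading to the same conclusion.
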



The proof of \Cref{coro:unknown} is collected in \Cref{sec:proof_unknown}, and the main technique
hinges on previous analysis for the identity covariance case (\Cref{thm:Filter} and \Cref{thm:kappa_bound}). 
In the case of unknown covariance, this is the best (and in fact, only) recovery guarantee we are aware of for robust sparse regression. 
We show the performance of robust estimation using our filtering algorithm with unknown covariance in \Cref{sec:simulation}, and we observe   same linear convergence as \Cref{sec:robust-mean}.

\section{Acknowledgments}
The authors would like to thank Simon S. Du for helpful discussions.

\clearpage

{
\small
\bibliography{Notes_Robust}
\bibliographystyle{plain}
}

\clearpage

\appendix

\setstretch{1.25}

\paragraph{Notation.}
In proofs, we let $\otimes$ denote the
Kronecker product, and for a vector $\bm{u}$, we denote the outer product by $\bm{u}^{\otimes 2}  = \bm{u}\bm{u}^{\top}$.
We define the infinity norm for a matrix $M$ as $\norm{M}_{\infty} = \max_{i,j} |M_{ij}|$.
Given index set $\Ind$, $\bm{v}^\Ind$ is the vector restricted to indices $\Ind$. Similarly,
$M^{\Ind\Ind}$ is the sub-matrix on indices
$\Ind \times \Ind$.
we use $\{C_j\}_{j=0}^3$
to denote constants that are independent of dimension, but whose value can change from line to line.

\section{Proofs for the meta-theorem}
\label{sec:proof_inexact_gradients}

In this section, we prove the global linear convergence guarantee
given the \Cref{def:sparse_gradient}.
In each iteration of \Cref{alg:inexact_IHT}, we use $\widehat{\UMean}^t$ to update
\begin{align*}
\param^{t+1} = \Hard{k'} \left(\param^{t} - \eta  \widehat{\UMean}^t\right),
\end{align*}
where $\eta = 1/\ssmoothness$ is a fixed step size.
Given the condition $\norm*{ \widehat{\UMean}\left(\param\right) - {\UMean}\left(\param\right)}_2^2 \leq \alpha(\eps) \norm{{\UMean}\left(\param\right)}_2^2 + \psi\left(\eps\right)$ in  RSGE's definition, we show that
\Cref{alg:inexact_IHT} linearly converges to a neighborhood around $\param^*$ with error at most $O(\sqrt{\psi\left(\eps\right)})$.

First, we introduce a supporting Lemma from \cite{Ping2017tight}, which bounds the distance
between
 $\Hard{k'} (\param^{t} - \eta  \widehat{\UMean}^t)$ and $\param^*$ in each iteration of \Cref{alg:inexact_IHT}.

\begin{lemm}[Theorem 1 in \cite{Ping2017tight}]
\label{thm:tight}
Let $\bm{z} \in \Real^d$ be an arbitrary vector and $\param^* \in \Real^d$ be any $k$-sparse signal. For any $k' \geq k$, we have the following bound:
\begin{align*}
	\norm{\Hard{k'}(\bm{z}) - \param^*}_2 \leq \sqrt{\zeta} \norm{\bm{z} - \param^*}_2,
\quad	\zeta = 1 + \frac{\rho + \sqrt{(4 + \rho ) \rho }}{2}, \quad
	\rho =  \frac{\min\{k, d-k'\}}{k' - k + \min\{k, d-k'\}}.
\end{align*}
We choose the hard thresholding parameter
$k' = k \condition^2 \ll d$, hence  $\rho = 1/\condition^2.$
\end{lemm}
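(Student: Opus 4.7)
The plan is to derive the bound by partitioning coordinate indices based on the two supports involved, and then exploiting the defining property of hard thresholding: selected coordinates have the largest magnitudes in $\bm{z}$.

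First I would fix $S^{\star} = \mathrm{supp}(\param^*)$ with $|S^{\star}| \le k$, $S' = \mathrm{supp}(\Hard{k'}(\bm{z}))$ with $|S'| = k'$, and introduce the asymmetric pieces $T := S^{\star} \setminus S'$ and $U := S' \setminus S^{\star}$. Because $\Hard{k'}(\bm{z})$ agrees with $\bm{z}$ on $S'$ and vanishes off $S'$, while $\param^*$ vanishes off $S^{\star}$, direct bookkeeping over the three regions $S'$, $T$, and $(S^{\star} \cup S')^c$ yields
\begin{align*}
\norm{\Hard{k'}(\bm{z}) - \param^*}_2^2 &= \norm{(\bm{z} - \param^*)_{S'}}_2^2 + \norm{\param^*_T}_2^2, \\
\norm{\bm{z} - \param^*}_2^2 &\ge \norm{(\bm{z} - \param^*)_{S'}}_2^2 + \norm{(\bm{z} - \param^*)_T}_2^2,
\end{align*}
where the second line drops the nonnegative tail term $\norm{\bm{z}_{(S^{\star} \cup S')^c}}_2^2$. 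Writing $A := \norm{(\bm{z} - \param^*)_{S'}}_2^2$ and $B := \norm{(\bm{z} - \param^*)_T}_2^2$, the problem reduces to bounding $\norm{\param^*_T}_2$ in terms of $A$ and $B$.

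The key combinatorial step uses the hard-thresholding property: every coordinate in $T$ (not selected) has magnitude no larger than any coordinate in $U$ (selected). Counting supports gives $|T| \le \min\{k, d-k'\}$ and $|U| \ge k' - k + |T|$, so averaging the magnitude inequality across $U$ yields $\norm{\bm{z}_T}_2^2 \le (|T|/|U|)\,\norm{\bm{z}_U}_2^2$. Crucially, $\param^*$ vanishes on $U$, so $\bm{z}_U = (\bm{z} - \param^*)_U$, whence $\norm{\bm{z}_T}_2^2 \le \rho' A$ with $\rho' := |T|/|U| \le \rho$ by monotonicity of $x \mapsto x/(k' - k + x)$. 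A triangle inequality $\norm{\param^*_T}_2 \le \norm{(\bm{z} - \param^*)_T}_2 + \norm{\bm{z}_T}_2 \le \sqrt{B} + \sqrt{\rho' A}$ then produces
\begin{align*}
\norm{\Hard{k'}(\bm{z}) - \param^*}_2^2 \;\le\; (1 + \rho')\, A + B + 2\sqrt{\rho' A B}.
\end{align*}

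It remains to bound the ratio of the right-hand side to $A + B$ by the stated $\zeta$. Applying AM-GM as $2\sqrt{\rho' A B} \le \alpha \rho' A + B/\alpha$ with a free parameter $\alpha > 0$ and then balancing the resulting coefficients of $A$ and $B$ forces $\alpha^2 + \alpha = 1/\rho'$, which after rationalization delivers $\zeta = 1 + 1/\alpha = 1 + (\rho' + \sqrt{\rho'(\rho' + 4)})/2$; monotonicity in $\rho'$ upgrades $\rho'$ to $\rho$. The main obstacle I anticipate is keeping this last step tight rather than loose: a careless AM-GM gives the weaker $\zeta = 1 + \rho'$ or $\zeta = 2$, and recovering the sharp constant requires exactly the $\alpha$ above — this is what makes the Shen--Li bound ``tight.'' Verifying $|T|/|U| \le \rho$ also needs a brief case split on whether $\min\{k, d-k'\} = k$ or $d-k'$, together with the boundary check $|U| = 0 \Rightarrow |T| = 0$, but these are routine.
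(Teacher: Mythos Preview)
The paper does not prove this lemma: it is quoted verbatim as Theorem~1 of \cite{Ping2017tight} and used as a black box in the proof of \Cref{thm:Main}. There is nothing in the paper to compare your argument against.

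That said, your proof is correct and is essentially the original Shen--Li argument. The decomposition into $A = \norm{(\bm{z}-\param^*)_{S'}}_2^2$ and $B = \norm{(\bm{z}-\param^*)_T}_2^2$, the averaging step $\norm{\bm{z}_T}_2^2 \le (|T|/|U|)\norm{\bm{z}_U}_2^2$ from the thresholding order, and the optimized AM--GM with $\alpha^2+\alpha = 1/\rho'$ are exactly how the sharp constant $\zeta$ is obtained in that reference. The boundary cases you flag ($|U|=0 \Rightarrow |T|=0$; $|S'|<k'$ when $\bm{z}$ has fewer than $k'$ nonzeros) are indeed the only places requiring a separate check, and both collapse to $\zeta=1$ trivially.
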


\begin{theo}[\Cref{thm:Main}]
Suppose we observe $N\left(k, d, \eps, \nu\right)$ $\eps$-corrupted samples from \Cref{equ:stat_model}.
\Cref{alg:inexact_IHT}, with
$\psi\left(\eps\right)$-{\RSGE}  defined in \Cref{def:sparse_gradient},
with step size $\eta = 1/\ssmoothness$
outputs $\widehat{\param}$, such that
\begin{align*}
\norm{\widehat{\param} - \param^*}_2 = O\left(\sqrt{\psi\left(\eps\right)}\right),
\end{align*}
with probability at least $1 - \nu$,
by setting $k' = \condition^2 k $ and $T = \Theta\left(\log\left({\norm{\param^*}_2}/{\sqrt{\psi\left(\eps\right)}}\right)\right)$. The sample complexity is  $N\left(k, d, \eps, \nu\right) = n\left(k, d, \eps, \nu/T\right) T$.
\end{theo}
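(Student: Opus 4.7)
The plan is to combine the hard-thresholding contraction from \Cref{thm:tight} with a classical one-step gradient descent analysis against the population gradient, and to use the RSGE guarantee in \Cref{def:sparse_gradient} to control the perturbation introduced at each iteration. Since both $\param^t$ and $\param^*$ are sparse, the difference $\param^t - \param^*$ is supported on at most $k'+k$ coordinates. Applying \Cref{thm:tight} to $\bm{z} = \param^t - \eta\widehat{\UMean}^t$ gives $\norm{\param^{t+1}-\param^*}_2 \leq \sqrt{\zeta}\,\norm{\param^t - \eta\widehat{\UMean}^t - \param^*}_2$, with $\zeta$ determined by $\rho=1/\condition^2$ once we set $k' = \condition^2 k$. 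I would then split the right-hand side by triangle inequality into a ``population'' term $\norm{\param^t - \eta\UMean^t - \param^*}_2$ and a ``perturbation'' term $\eta\,\norm{\widehat{\UMean}^t - \UMean^t}_2$.

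For the population term, using $\UMean^t = \Sig(\param^t-\param^*)$ and $\eta = 1/\ssmoothness$ together with the eigenvalue bounds on $\Sig$ from \Cref{equ:stat_model} gives $\norm{(\Id-\eta\Sig)(\param^t-\param^*)}_2 \leq (1 - 1/\condition)\norm{\param^t-\param^*}_2$, since the eigenvalues of $\Id - \eta\Sig$ all lie in $[0, 1-1/\condition]$. For the perturbation term, \Cref{def:sparse_gradient} yields $\norm{\widehat{\UMean}^t - \UMean^t}_2 \leq \sqrt{\alpha(\eps)}\,\norm{\UMean^t}_2 + \sqrt{\psi(\eps)}$, and the bound $\norm{\UMean^t}_2 \leq \ssmoothness\,\norm{\param^t - \param^*}_2$ reduces this to $\sqrt{\alpha(\eps)}\,\ssmoothness\,\norm{\param^t-\param^*}_2 + \sqrt{\psi(\eps)}$. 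Assembling the pieces yields a one-step recursion $\norm{\param^{t+1}-\param^*}_2 \leq \tau \norm{\param^t-\param^*}_2 + C\sqrt{\psi(\eps)}$ with $\tau = \sqrt{\zeta}\,(1 - 1/\condition + \sqrt{\alpha(\eps)})$ and $C = \sqrt{\zeta}/\ssmoothness$.

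The main obstacle is verifying $\tau < 1$, and this is exactly where the choice $k' = \condition^2 k$ is needed. Plugging $\rho = 1/\condition^2$ into the expression for $\zeta$ gives $\sqrt{\zeta} \leq 1 + O(1/\condition)$, so that $\tau = 1 - \Theta(1/\condition)$ provided $\alpha(\eps)$ is a sufficiently small constant; this last condition is inherited from (and verified by) the specific RSGE instantiations in \Cref{sec:optimal} and \Cref{sec:robust-mean}. Unrolling the recursion from $\param^0 = \bm{0}$ gives $\norm{\param^T - \param^*}_2 \leq \tau^T \norm{\param^*}_2 + \tfrac{C}{1-\tau}\sqrt{\psi(\eps)}$, so the choice $T = \Theta(\log(\norm{\param^*}_2/\sqrt{\psi(\eps)}))$ drives the first term below $\sqrt{\psi(\eps)}$ and yields the claimed $O(\sqrt{\psi(\eps)})$ error. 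Finally, sample splitting makes the batch used at iteration $t$ independent of $\param^t$, so I would apply the RSGE guarantee per iteration with failure probability $\nu/T$ and union-bound over the $T$ iterations to obtain the overall $1-\nu$ probability and the total sample complexity $N(k,d,\eps,\nu) = n(k,d,\eps,\nu/T)\,T$.
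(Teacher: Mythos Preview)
Your proposal is correct and follows essentially the same route as the paper's proof: apply the hard-thresholding contraction of \Cref{thm:tight}, split via the triangle inequality into the population term $\norm{(\Id-\eta\Sig)(\param^t-\param^*)}_2$ and the perturbation term $\eta\norm{\widehat{\UMean}^t-\UMean^t}_2$, bound the latter using \Cref{def:sparse_gradient} together with $\norm{\UMean^t}_2\le \ssmoothness\norm{\param^t-\param^*}_2$, and then verify that the contraction factor $\sqrt{\zeta}\,(1-1/\condition+\sqrt{\alpha(\eps)})$ is strictly below $1$ once $k'=\condition^2 k$ (hence $\rho=1/\condition^2$) and $\alpha(\eps)$ is small enough. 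The paper carries out the last step explicitly to obtain the bound $1-1/(10\condition)$, whereas you state it asymptotically as $1-\Theta(1/\condition)$; otherwise the arguments coincide, including the sample-splitting union bound.
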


\spro
By splitting $N$ samples into $T$ sets (each set has sample size $n$), \Cref{alg:inexact_IHT} collects a fresh batch of samples with size $n\left(k, d, \eps, \nu/T\right)$ at each iteration $t\in [T]$.
\Cref{def:sparse_gradient} shows that for the fixed gradient expectation $\UMean^t$, the estimate for the gradient ${\UMean^t}$ satisfies:
\begin{align} \label{eqt:app-b-theorem-4-1}
\norm{ \widehat{\UMean}^t - {\UMean^t}}_2^2 \leq \alpha(\eps) \norm{\UMean^t}_2^2 + \psi(\eps)
\end{align}
with probability at least $1 - \nu/T$,
where $\alpha(\eps)$ is determined by $\eps$.

Letting $z^t = \param^{t} - \eta \widehat{\UMean}^{t}$, we study the $t$-th iteration of \Cref{alg:inexact_IHT}. Based on \Cref{thm:tight}, we have
\begin{align*}
  \norm{\param^{t+1} - \param^*}_2 &\leq \sqrt{\zeta}   \norm{\param^{t} - \eta \widehat{\UMean} - \param^*}_2 \\
   &= \sqrt{\zeta}  \norm{\param^{t} - \eta \UMean - \param^* + \eta(\UMean -  \widehat{\UMean})}_2 \\
&\leq \sqrt{\zeta}  \norm{\param^{t} - \eta \UMean - \param^*}_2 + \sqrt{\zeta} \eta \norm{\UMean -  \widehat{\UMean}}_2 \\
&\overset{(i)}\leq \sqrt{\zeta}  \norm{(\Id - \eta \bm{\Sigma}) (\param^{t} - \param^*)}_2 + \sqrt{\zeta} \eta
\sqrt{\alpha(\eps) \norm{\UMean}_2^2 + \psi(\eps)} \\
&\overset{(ii)}\leq \sqrt{\zeta}  \norm{(\Id - \eta \bm{\Sigma}) (\param^{t} - \param^*)}_2 + \sqrt{\zeta} \eta
\sqrt{\alpha(\eps)} \norm{ \bm{\Sigma} (\param^{t} - \param^*)}_2 +
 \sqrt{\zeta} \eta \sqrt{\psi(\eps)}
\end{align*}
where  (i) follows from the theoretical guarantee of RSGE, and (ii) follows from the basic inequality $\sqrt{a+b}\leq \sqrt{a} + \sqrt{b}$
for non-negative $a, b$.

By setting $\eta = 1/\ssmoothness$, we have
\begin{align}
  \norm{\param^{t+1} - \param^*}_2
&\leq \sqrt{\zeta}  \norm{(\Id - \eta \bm{\Sigma}) (\param^{t} - \param^*)}_2 + \sqrt{\zeta} \eta
\sqrt{\alpha(\eps)} \norm{ \bm{\Sigma} (\param^{t} - \param^*)}_2 +
 \sqrt{\zeta} \eta \sqrt{\psi(\eps)} \nonumber \\
 &\leq \sqrt{\zeta} ({1 - \frac{1}{\condition}})
  \norm{\param^{t} - \param^*}_2
  + \sqrt{\zeta}
\sqrt{\alpha(\eps)} \norm{\param^{t} - \param^*}_2
+ \sqrt{\zeta} \eta \sqrt{\psi(\eps)} \nonumber\\
 &\leq \sqrt{\zeta} ({1 - \frac{1}{\condition}} + \sqrt{\alpha(\eps)} )
  \norm{\param^{t} - \param^*}_2
+ \sqrt{\zeta} \eta \sqrt{\psi(\eps)} \label{equ:recursion}
\end{align}

When $\eps$ is a small enough constant,
we have
$\sqrt{\alpha(\eps)} \leq \frac{1}{2 \condition}$, then
\begin{align*}
 \sqrt{\zeta} ({1 - \frac{1}{\condition}} + \sqrt{\alpha(\eps)} ) & \leq  \sqrt{\zeta} ({1 - \frac{1}{2 \condition}} ) \\
 & \leq \sqrt{1 + \frac{\rho + \sqrt{(4 + \rho ) \rho }}{2}} ({1 - \frac{1}{2 \condition}} )
\end{align*}

Plugging in the parameter $\rho = {1}/{\condition^2}$ in \Cref{thm:tight}, we have
\begin{align*}
 \sqrt{\zeta} ({1 - \frac{1}{\condition}} + \sqrt{\alpha(\eps)} ) \leq  1 - \frac{1}{10 \condition}
\end{align*}

Together with \cref{equ:recursion}, we have the recursion
\begin{align*}
  \norm{\param^{t+1} - \param^*}_2
\leq \left(1 - \frac{1}{10 \condition}\right)
  \norm{\param^{t} - \param^*}_2
+ \sqrt{\zeta} \eta \sqrt{\psi(\eps)}.
\end{align*}

By solving this recursion and using a union bound, we have
\begin{align*}
\norm{  \param^{t} - \param^* }_2 \leq \left(  1 - \frac{1}{10 \condition} \right)^t \norm{ \param^0 - \param^*}_2 + \frac{\sqrt{\zeta} \eta \sqrt{\psi(\eps)}}{1-\left(  1 - \frac{1}{10 \condition} \right)}
\leq \left(4\alpha(\eps)\right)^t \norm{\param^*}_2^2 + 10 \condition {\sqrt{\zeta} \eta \sqrt{\psi(\eps)}},
\end{align*}
with probability at least $1- \nu$.

By the definition of $\condition$ and $\eta$, we have
$\norm{\widehat{\param} - \param^* }_2 = O\left( \frac{\sqrt{\psi\left(\eps\right)} }{\sconvexity} \right)$
\fpro



\section{Correcting Lemma A.3 in \cite{du2017computationally}'s proof}
\label{sec:counter}

A key part of the proof of the main theorem in \cite{du2017computationally} is to obtain an upper bound on the $k$-sparse operator norm. Specifically, their Lemmas A.2 and A.3 aim to show:
\begin{equation}
\label{eq:incorrect}
\lambda^{\ast} \geq \twosparseopnorm{{ \sum_{i=1}^{|\Observation|} w_i
\left(\gradsample_i-\widehat{\UMean}(w)\right)^{\otimes 2}-F\left(\widehat{\UMean}(w)\right)}}
\geq \frac{\norm{{\Hard{\kk}\left(\widetilde{\meandiff}(w)\right)}}_2^2}{5\epsilon},
\end{equation}
where $\widehat{\UMean}(w) =  \Hard{2\kk} \left(\sum_{i=1}^{|\Observation|} w_i \gradsample_i\right)$, $\widetilde{\meandiff}(w)=\sum_{i=1}^{|\Observation|} w_i \gradsample_i - \UMean$\footnote{The $\{w_i\}$ are weights, and these are defined precisely in Section \ref{sec:correct_proof}, but are not required for the present discussion or counterexample.},
and recall $\lambda^*$ is the solution to the SDP as given in
\Cref{alg:filter-mean-second}.

Lemma A.3 asserts the first inequality above, and Lemma A.2  the second. As we show below, Lemma A.3 cannot be correct. Specifically, the issue is that the quantity inside the second term in \cref{eq:incorrect} may not be positive semidefinite. In this case, the convex optimization problem whose solution is $\lambda^{\ast}$ is not a valid relaxation, and hence the $\lambda^*$ they obtain need not a valid upper bound. Indeed, we give a simple example below that illustrates precisely this potential issue.

Fortunately, not all is lost -- indeed, as our results imply, the main results in \cite{du2017computationally} is correct. The key is to show that while $\lambda^{\ast}$ does not upper bound the sparse operator norm, it does, however, upper bound the quantity
\begin{align}\label{equ:matrix_E}
\max_{\norm{\vect}_2=1, \norm{\vect}_0\leq \kk }\vect^{\top} \left({ \sum_{i=1}^{|\Observation|} w_i
\left(\gradsample_i-\widehat{\UMean}(w)\right)^{\otimes 2}-F\left(\widehat{\UMean}(w)\right)} \right) \vect.
\end{align}
We show this in \Cref{sec:correct_proof}. More specifically, in \Cref{thm:cov_imply_mean}, we replace the $\kk$-sparse operator norm in the second term of \cref{eq:incorrect} by the term in \cref{equ:matrix_E}.
We show this can be used to complete the proof in \Cref{sec:full_BDLS_coro}.

We now provide a counterexample that shows the first inequality in (\ref{eq:incorrect}) cannot hold. The main argument is that the convex relaxation for sparse PCA is a valid upper bound of the sparse operator norm only for positive semidefinite matrices. Specifically, denoting $\bm{E} = \widehat{\bm{\Sigma}} (w) - F(\widehat{\UMean}(w))$ as the matrix in \cref{equ:matrix_E}, \cite{du2017computationally} solves the following convex program:
\begin{align*}
\max_{{\HM}}  \tr{\bm{E}\cdot \HM},  \quad
\text{subject to }  {\HM} \succcurlyeq 0,
 \norm{\HM}_{1,1} \le k,
 \tr{\HM} = 1.
\end{align*}
Since
$\widehat{\bm{\Sigma}} (w) - F(\widehat{\UMean}(w))$ is no longer a p.s.d. matrix, the trace maximization above may not be a valid convex relaxation, and thus not an upper bound. Let us consider a specific example,
in robust sparse mean estimation for $\mathcal{N} \left(\mu, \Id\right)$,
where function $F\left(\cdot\right)$ is a fixed identity matrix $\Id$.
We choose $\kk=1$, $\mu = [1, 0]^{\top}$, and $d=2$.
Suppose we observe data to be $x_1 = [2.5, 0]^{\top}$, $x_2 = [0, 0]^{\top}$, and the weights for $x_1$ and $x_2$ are the same.
Then, we can compute the following matrices as:
\begin{align*}
\widehat{\bm{\Sigma}} =
           \begin{bmatrix}
    1.5625  & 0 \\
    0 & 0
  \end{bmatrix},
F  =
           \begin{bmatrix}
    1  & 0 \\
    0 & 1
  \end{bmatrix},
  \bm{E} = \widehat{\bm{\Sigma}} - F  =
           \begin{bmatrix}
    0.5625  & 0 \\
    0 & -1
  \end{bmatrix}.
  \end{align*}
It is clear that $\norm*{\widehat{\bm{\Sigma}} - F}_{\rm{\kk,op}} = 1$.
Solving the convex relaxation $\max_{{\HM}} \tr{\bm{E}\cdot \HM}$ or $\max_{{\HM}} \mathrm{Tr}({\widehat{\bm{\Sigma}} \cdot \HM})$ gives answer
  $
  \HM^*  =
           \begin{bmatrix}
    1 \quad  0;
    0 \quad 0
  \end{bmatrix}
  $
and the corresponding $\lambda^* = 0.5625$, which is clearly not an upper bound of $\norm*{\widehat{\bm{\Sigma}} - F}_{\rm{\kk,op}}$.
Hence $\lambda^* \geq \norm*{\widehat{\bm{\Sigma}} - F}_{\rm{\kk,op}}$ cannot hold in general.

\section{Covariance smoothness properties  in robust sparse mean estimation}
\label{sec:calculations}
When the covariance is identity, the ellipsoid algorithm requires
a closed form expression of the true covariance  function $F\left(\UMean\right)$. Indeed, the ellipsoid-based robust sparse mean estimation algorithm  uses  the covariance structure given by $F(\cdot)$ to detect outliers. The accuracy of robust sparse mean estimation explicitly depends on the properties of  $F\left(\UMean\right)$.
$\Lcov$ and
$\LF$ are two important properties of $F\left(\UMean\right)$,   related to its smoothness. 
 We first provide a closed-form expression for $F$, and then define precisely  smoothness parameters $\Lcov$ and
$\LF$, and show how these can be controlled.

\paragraph{Closed form expression of $F\left(\UMean\right)$.}

\begin{lemm}\label{lem:F_Id} 	
Suppose we observe i.i.d. samples $\{\bm{z}_i, i\in\Good\}$ from the distribution $P$ in \Cref{equ:stat_model} with $\Sig = \Id$, we have
the covariance of gradient as
\begin{align*}
\Cov(\gradsample) = 
\Expe_{\bm{z}_i \sim P} \left( \left(\gradsample_i - \UMean \right)\left(\gradsample_i - \UMean \right)^{\top} \right) = 
\norm{\UMean}_2^2  \Id + \UMean\UMean^{\top}  + \sigma^2 \Id.
\end{align*}
\end{lemm}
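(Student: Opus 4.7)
The plan is to compute the covariance by direct expansion using standard Gaussian fourth-moment identities; no concentration is involved since this is an exact population-level claim. First I would substitute $y_i = \bm{x}_i^{\top}\param^* + \xi_i$ into the definition $\gradsample_i = \bm{x}_i(\bm{x}_i^{\top}\param - y_i)$ to obtain
\begin{equation*}
\gradsample_i = \bm{x}_i\bm{x}_i^{\top}\paramdiff - \bm{x}_i\xi_i,\qquad \paramdiff := \param - \param^*.
\end{equation*}
Since $\bm{x}_i \sim \mathcal{N}(0,\Id)$ and $\xi_i$ is independent with mean zero, taking expectations gives $\UMean = \paramdiff$. Consequently the centered gradient is $\gradsample_i - \UMean = (\bm{x}_i\bm{x}_i^{\top} - \Id)\paramdiff - \bm{x}_i\xi_i$.

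Next I would expand $\Cov(\gradsample)$ into the four resulting outer-product terms. The two cross terms vanish because $\xi_i$ is independent of $\bm{x}_i$ and mean zero, and the pure-noise term yields $\Expe[\xi_i^2]\,\Expe[\bm{x}_i\bm{x}_i^{\top}] = \sigma^2 \Id$. The nontrivial piece is
\begin{equation*}
\Expe\bigl[(\bm{x}_i\bm{x}_i^{\top}-\Id)\,\paramdiff\paramdiff^{\top}\,(\bm{x}_i\bm{x}_i^{\top}-\Id)\bigr].
\end{equation*}
To evaluate it, I would apply Isserlis' theorem, $\Expe[x_a x_b x_c x_d] = \delta_{ab}\delta_{cd} + \delta_{ac}\delta_{bd} + \delta_{ad}\delta_{bc}$, to derive the identity $\Expe[\bm{x}\bm{x}^{\top} A \bm{x}\bm{x}^{\top}] = 2A + \tr(A)\Id$ for any symmetric $A$. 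Expanding the product above into four subterms and using $\Expe[\bm{x}_i\bm{x}_i^{\top}] = \Id$, the two linear terms contribute $-2A$ and the constant term $+A$, so the sum collapses to $A + \tr(A)\Id$.

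Substituting $A = \paramdiff\paramdiff^{\top} = \UMean\UMean^{\top}$ and $\tr(A) = \|\UMean\|_2^2$, and adding the $\sigma^2 \Id$ noise contribution, yields exactly $\|\UMean\|_2^2\,\Id + \UMean\UMean^{\top} + \sigma^2\Id$. The only step that takes any care is bookkeeping for the Wick identity; everything else is a direct consequence of independence and $\Expe[\bm{x}\bm{x}^{\top}] = \Id$, so I do not anticipate any real obstacle.
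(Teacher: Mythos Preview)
Your proposal is correct and follows essentially the same route as the paper: write the centered gradient as $(\bm{x}\bm{x}^{\top}-\Id)\UMean - \bm{x}\xi$, use independence to isolate the $\sigma^2\Id$ noise term, and then evaluate the Gaussian fourth moment $\Expe[\bm{x}\bm{x}^{\top}A\bm{x}\bm{x}^{\top}]$. The only cosmetic difference is that you invoke Isserlis' theorem for this last step, whereas the paper applies a Stein-type identity $\Expe[f(\bm{x})\,\bm{x}\bm{x}^{\top}]=\Expe[f(\bm{x})]\,\Id+\Expe[\nabla^2 f(\bm{x})]$ with $f(\bm{x})=(\bm{x}^{\top}\UMean)^2$; both yield the same $2A+\tr(A)\Id$ and the remainder of the bookkeeping is identical.
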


\spro
Since $\gradsample_i =  \bm{x}_i\left(\bm{x}_i^{\top} \param - y_i\right)$, and $\UMean= \Expe_{\bm{z}_i \sim P}\left(\gradsample_i\right)$ and $\Sig = \Id$, we have
\begin{align*}
\Expe_{\bm{z}_i \sim P} \left( \left(\gradsample_i - \UMean \right)\left(\gradsample_i - \UMean \right)^{\top} \right)
 &= \Expe_P\left( \left(\x\x^{\top} - \Id\right) \UMean\UMean^{\top} \left(\x\x^{\top} - \Id\right)\right) + \sigma^2 \Id \\
&= \Expe_P\left( \x\x^{\top} \UMean\UMean^{\top} \x\x^{\top}\right) - 2\Expe_P\left( \x\x^{\top} \UMean\UMean^{\top}\right) + \UMean\UMean^{\top}  + \sigma^2 \Id,
\end{align*}
where we drop $i$ in $\bm{x}_i$ without abuse of notation.

Next, we apply the Stein-type Lemma \cite{stein1981estimation} for $\x \sim \mathcal{N}\left(0, \Id \right)$, and a function $f\left(x\right)$  whose second
derivative exists:
\begin{align}
\label{equ:Stein}
\Expe\left(f\left(x\right)\x\x^{\top}\right) = \Expe\left(f\left(x\right)\right) \Id + \Expe\left(\nabla^2f\left(x\right)\right).
\end{align}

By \cref{equ:Stein}, we have
\begin{align*}
\Cov(\gradsample) = \norm{\UMean}_2^2  \Id + \UMean\UMean^{\top}  + \sigma^2 \Id.
\end{align*}
\fpro

\paragraph{Smoothness properties of $\opnorm{F}$.}

We first assume 
\begin{align}
\label{equ:LCOV}
   \Lcov = \max_{\norm{\vect}_2 = 1, \norm{\vect}_0\leq \kk}\abs{\vect^\top \Cov(\gradsample) \vect}.
\end{align}
If we define the functional $F(\cdot)$, such that $ 
F(\widehat{\UMean}) = \norm*{\widehat{\UMean}}_2^2  \Id + \widehat{\UMean} \widehat{\UMean}^{\top}  + \sigma^2 \Id$, and $F({\UMean}) = \norm{\UMean}_2^2  \Id + \UMean\UMean^{\top}  + \sigma^2 \Id$,
then we assume that there exists $\LF$ satisfying
\begin{align}
\label{equ:LF}
\opnorm{F\left(\UMean\right) - F\left(\widehat{\UMean}\right)}
\le \LF \norm{\UMean - \widehat{\UMean}}_2 +C \norm{\UMean - \widehat{\UMean}}_2^2,
\end{align}
where $C$ is a universal constant.

\begin{lemm}
\label{lem:F_smooth}
Under the same setting as  \Cref{lem:F_Id},
we have 
\begin{align*}
   \Lcov = 2\norm{\UMean}_2^2 + \sigma^2, \text{ and } \LF = 4\norm{\UMean}_2.
\end{align*}
\end{lemm}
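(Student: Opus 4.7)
The plan is to leverage the closed-form expression for $\mathrm{Cov}(\gradsample)$ given in \Cref{lem:F_Id} and then perform two separate computations, one for $\Lcov$ and one for $\LF$.

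For $\Lcov$, I would substitute $\mathrm{Cov}(\gradsample) = \norm{\UMean}_2^2 \Id + \UMean\UMean^\top + \sigma^2 \Id$ into the definition \eqref{equ:LCOV}. For any unit vector $\vect$, this gives
\[
\vect^\top \mathrm{Cov}(\gradsample) \vect = \norm{\UMean}_2^2 + (\vect^\top \UMean)^2 + \sigma^2,
\]
and by Cauchy--Schwarz, $(\vect^\top \UMean)^2 \le \norm{\UMean}_2^2 \norm{\vect}_2^2 = \norm{\UMean}_2^2$, yielding the upper bound $2\norm{\UMean}_2^2 + \sigma^2$. Since $\UMean = \Sig \paramdiff = \paramdiff$ is $\kk$-sparse in the setting of \Cref{alg:inexact_IHT} with $\kk = k'+k$, the choice $\vect = \UMean / \norm{\UMean}_2$ is admissible and attains this value, so the bound is tight.

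For $\LF$, write $\meandiff = \UMean - \widehat{\UMean}$ and split
\[
F(\UMean) - F(\widehat{\UMean}) = \bigl(\norm{\UMean}_2^2 - \norm{\widehat{\UMean}}_2^2\bigr)\Id + \bigl(\UMean\UMean^\top - \widehat{\UMean}\widehat{\UMean}^\top\bigr).
\]
For the scalar term, factor as $(\norm{\UMean}_2 - \norm{\widehat{\UMean}}_2)(\norm{\UMean}_2 + \norm{\widehat{\UMean}}_2)$ and apply the reverse triangle inequality to get $\bigl|\norm{\UMean}_2^2 - \norm{\widehat{\UMean}}_2^2\bigr| \le 2\norm{\UMean}_2 \norm{\meandiff}_2 + \norm{\meandiff}_2^2$. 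For the rank-two perturbation, use the identity
\[
\UMean\UMean^\top - \widehat{\UMean}\widehat{\UMean}^\top = \UMean\meandiff^\top + \meandiff\UMean^\top - \meandiff\meandiff^\top,
\]
whose operator norm is bounded by $2\norm{\UMean}_2 \norm{\meandiff}_2 + \norm{\meandiff}_2^2$ via the triangle inequality and the fact that $\opnorm{\bm{a}\bm{b}^\top} = \norm{\bm{a}}_2 \norm{\bm{b}}_2$. Adding these two estimates produces $\opnorm{F(\UMean) - F(\widehat{\UMean})} \le 4\norm{\UMean}_2 \norm{\meandiff}_2 + 2\norm{\meandiff}_2^2$, matching the form of \eqref{equ:LF} with $\LF = 4\norm{\UMean}_2$ and absolute constant $C = 2$.

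There is no real obstacle here: both claims follow from elementary algebraic manipulations of the closed form for $\mathrm{Cov}(\gradsample)$, combined with standard operator-norm bounds on rank-one and rank-two matrices. The only mild subtlety is justifying tightness in $\Lcov$, which requires invoking the sparsity of $\UMean$ that is guaranteed by the IHT iterate structure; once that is noted, the lemma reduces to a direct computation.
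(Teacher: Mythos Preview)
Your proposal is correct and follows essentially the same route as the paper: both proofs read off $\Lcov$ from the closed form $F(\UMean)=\norm{\UMean}_2^2\Id+\UMean\UMean^\top+\sigma^2\Id$ and bound $\opnorm{F(\UMean)-F(\widehat{\UMean})}$ via the identical algebraic expansion $\UMean\UMean^\top-\widehat{\UMean}\widehat{\UMean}^\top=\UMean\meandiff^\top+\meandiff\UMean^\top-\meandiff\meandiff^\top$ together with $\norm{\UMean}_2^2-\norm{\widehat{\UMean}}_2^2=2\UMean^\top\meandiff-\norm{\meandiff}_2^2$, arriving at $\LF=4\norm{\UMean}_2$ and $C=2$. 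The only minor addition in your write-up is the tightness argument for $\Lcov$ using the $\kk$-sparsity of $\UMean$, which the paper omits (it is content with the upper bound).
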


\spro
$\Lcov$ is upper bounded by the top eigenvalue of $F\left(\UMean\right)$,
\begin{align*}
\Lcov \le \norm{F\left(\UMean\right)}_2 \le 2\norm{\UMean}_2^2 + \sigma^2.
\end{align*}

For the $\LF$ term, we have
\begin{align*}
& \opnorm{F\left(\UMean\right) - F\left(\widehat{\UMean}\right)} \\
&= \opnorm{
2\UMean^{\top}\left(\UMean - \widehat{\UMean}\right) \Id- \norm{\UMean - \widehat{\UMean}}_2^2 \Id +\UMean \left(\UMean - \widehat{\UMean}\right)^{\top} + \left(\UMean - \widehat{\UMean}\right)\UMean^{\top} -\left(\UMean - \widehat{\UMean}\right)\left(\UMean - \widehat{\UMean}\right)^{\top} } \\
&\le 4\norm{\UMean}_2 \norm{\UMean - \widehat{\UMean}}_2 +2 \norm{\UMean - \widehat{\UMean}}_2^2.
\end{align*}
Therefore, we can choose $\LF = 4\norm{\UMean}_2$ and $C=2$.

\fpro

\section{Proofs for the ellipsoid algorithm in robust sparse regression}
\label{sec:correct_proof}

In this section, we prove guarantees  for
the ellipsoid algorithm in robust sparse regression.
In the theoretical analysis of the ellipsoid algorithm, we use $\Input$ to denote the observations $\Observation$, which shares the same notations with \Cref{alg:filter-mean-second}.
We first give preliminary definitions of error terms defined on $\SGood$ and $\Input$, and then prove \Cref{thm:error_control}.
Next,  we prove concentration results for gradients of uncorrupted sparse linear regression in \Cref{thm:sparse_linear_mean_concentration}.
In \Cref{thm:cov_imply_mean}, we provide lower bounds for the $\kk$-sparse largest eigenvalue
defined in \cref{equ:matrix_E}.
Finally, we prove \Cref{thm:coro} based on previous Lemmas in \Cref{sec:full_BDLS_coro}.

\subsection{Preliminary definitions and properties related to $\SGood,\SBad$}\label{subsec:preliminary}

Here, we state again the definitions of $\SGood$, $\SBad$ and $\Input$. 
 In \Cref{alg:filter-mean-second}, we denote  the input set as $\Input$,
 which can be partitioned into two parts: $\SGood = \{i: i\in \Good \text{ and } i\in \Input\}$, and
$\SBad = \{i: i\in \Bad \text{ and } i\in \Input\}$.
Note that $\Input = \SGood \cup \SBad$, and $n = |\Input|$.
For the convenience of our analysis, we define the following error terms:
\begin{align*}
\widetilde{\meandiff}_{\SGood} &= \Expe_{i \in_u \SGood} \left(\gradsample_i\right) - \UMean, \\
\widehat{\meandiff}_{\SGood}&= \Hard{2\kk}\left(\Expe_{i \in_u \SGood} \left(\gradsample_i\right)\right) - \UMean,\\
\widetilde{\meandiff} &= \Expe_{i \in_u \Input} \left(\gradsample_i\right) - \UMean, \\
\DSHat &= \Hard{2\kk}\left(\Expe_{i \in_u \Input} \left(\gradsample_i\right)\right) - \UMean.
\end{align*}
These error terms are defined under a uniform distribution over samples, whereas previous papers using ellipsoid algorithms consider a set of balanced weighted distribution.
More specifically, the weights in our setting are defined as:
\begin{align*}
\widetilde{w}_i = \frac{1}{n}, ~~~~ \forall i \in \SGood \cup \SBad.
\end{align*}
The balanced weighted distribution is defined to satisfy:
\begin{align*}
0\le {w}_i \le \frac{1}{(1-2\epsilon)n}, ~~~~ \forall i \in \SGood \cup \SBad, ~~~~ \sum_{i\in \Input} {w}_i = 1.
\end{align*}
Notice that $\sum_{i\in \SBad} \widetilde{w}_i = O\left(\epsilon\right)$, and $\sum_{i\in\SBad}{w}_i =O\left( \frac{\epsilon}{1-2\epsilon}\right)$ with high probability, which intuitively says that both types of distributions have $O(\epsilon)$ weights over all bad samples. We are interested in considering uniform weighted samples since this formulation helps us analyze the filtering algorithm more conveniently, as we show in the following sections.

We restate the following Lemma
  which shows the connection of these different error terms.
\begin{lemm}[Lemma A.1 in \cite{du2017computationally}]
\label{thm:error_control}
Suppose $G$ is $k$-sparse. Then we have the following result:
\begin{align*}
\frac{1}{5} \norm{\DSHat}_2 \leq \norm{\Hard{k} \left(\widetilde{\meandiff}\right) }_2
\leq 4 \norm{\DSHat}_2.
\end{align*}
\end{lemm}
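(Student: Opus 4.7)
The proof plan is to make the three relevant supports explicit and then do careful term-by-term accounting. Write $\bar{g} = \Expe_{i\in_u\Input}\gradsample_i$ and introduce $S^\ast = \mathrm{supp}(\UMean)$ with $|S^\ast|\le k$, $S_1 = \mathrm{supp}(\Hard{2k}(\bar{g}))$ with $|S_1|=2k$, and $S_2 = \mathrm{supp}(\Hard{k}(\widetilde{\meandiff}))$ with $|S_2|=k$. Unpacking definitions on each piece of the support gives $\DSHat_i = \widetilde{\meandiff}_i$ on $S_1$ (since $\Hard{2k}(\bar{g})$ agrees with $\bar{g}$ there), $\DSHat_i = -\UMean_i$ on $S^\ast\setminus S_1$, and $\DSHat_i = 0$ elsewhere, so
\[
\norm{\DSHat}_2^2 \;=\; \sum_{i\in S_1}\widetilde{\meandiff}_i^2 \;+\; \sum_{i\in S^\ast\setminus S_1}\UMean_i^2.
\]
The two ingredients I will use repeatedly are (i) the size inequality $|S_1\setminus S^\ast|\ge k$, a consequence of $|S_1|=2k$ and $|S^\ast|\le k$, and (ii) the magnitude-domination property of $\Hard{2k}$: for every $i\notin S_1$ and $j\in S_1$, $|\bar{g}_i|\le |\bar{g}_j|$, and on $S_1\setminus S^\ast$ we moreover have $\bar{g}_j = \widetilde{\meandiff}_j$ because $\UMean$ vanishes there.

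For the upper bound $\norm{\Hard{k}(\widetilde{\meandiff})}_2 \le 4\norm{\DSHat}_2$, I would decompose $S_2 = (S_2\cap S_1)\sqcup(S_2\setminus S_1)$. On $S_2\cap S_1$, $\widetilde{\meandiff}_i = \DSHat_i$, contributing at most $\norm{\DSHat}_2^2$. For $S_2\setminus S_1$, I would further split by whether the index lies in $S^\ast$: where it does, apply $\widetilde{\meandiff}_i^2 \le 2\bar{g}_i^2 + 2\UMean_i^2$; where it does not, $\widetilde{\meandiff}_i = \bar{g}_i$ already. The $\UMean$ contribution is bounded by $2\norm{\DSHat}_2^2$ directly from the decomposition above. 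For the $\bar{g}$ contribution, ingredient~(i) yields $|S_2\setminus S_1|\le k\le |S_1\setminus S^\ast|$, so one can pair indices and apply ingredient~(ii) to get $\sum_{i\in S_2\setminus S_1}\bar{g}_i^2 \le \sum_{j\in S_1\setminus S^\ast}\widetilde{\meandiff}_j^2 \le \norm{\DSHat}_2^2$. Combining gives $\norm{\Hard{k}(\widetilde{\meandiff})}_2^2 \le 5\norm{\DSHat}_2^2$, which is stronger than the required factor $16$.

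For the lower bound $\norm{\DSHat}_2 \le 5\norm{\Hard{k}(\widetilde{\meandiff})}_2$, I would bound each of the two terms in the decomposition of $\norm{\DSHat}_2^2$ by a constant times $\norm{\Hard{k}(\widetilde{\meandiff})}_2^2$. Since $|S_1|=2k$, the first term is at most $\norm{\Hard{2k}(\widetilde{\meandiff})}_2^2$, and an elementary averaging argument (the smallest top-$k$ magnitude squared is at most the average, bounding each entry of rank $k+1,\dots,2k$) gives $\norm{\Hard{2k}(\widetilde{\meandiff})}_2^2 \le 2\norm{\Hard{k}(\widetilde{\meandiff})}_2^2$. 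For the second term, write $\UMean_i = \bar{g}_i - \widetilde{\meandiff}_i$ and apply $(a-b)^2\le 2a^2+2b^2$: the $\widetilde{\meandiff}$ sum is bounded by $\norm{\Hard{k}(\widetilde{\meandiff})}_2^2$ since $|S^\ast\setminus S_1|\le k$, while the $\bar{g}$ sum is bounded, via the same pairing of ingredients~(i) and~(ii), by $\sum_{j\in S_1\setminus S^\ast}\widetilde{\meandiff}_j^2 \le 2\norm{\Hard{k}(\widetilde{\meandiff})}_2^2$.

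The main obstacle is purely bookkeeping: juggling the three supports $S^\ast, S_1, S_2$ and verifying that each index-domination pairing is valid in each of the sub-cases. Everything else reduces to elementary sum manipulations, the inequality $(a-b)^2\le 2(a^2+b^2)$, and the ``min $\le$ average'' estimate. The constants I obtain ($\sqrt{5}$ for the upper direction and $2\sqrt{2}$ for the lower) are in fact slightly better than the stated $4$ and $5$, so there is room for slack in the stated constants.
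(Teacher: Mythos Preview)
Your argument is correct. The paper does not actually prove this lemma: it is restated verbatim from \cite{du2017computationally} (their Lemma~A.1) and cited without proof, so there is no ``paper's own proof'' to compare against. Your support-by-support accounting with $S^\ast,S_1,S_2$, the pairing via $|S_1\setminus S^\ast|\ge k$, and the ``min $\le$ average'' bound $\norm{\Hard{2k}(\widetilde{\meandiff})}_2^2\le 2\norm{\Hard{k}(\widetilde{\meandiff})}_2^2$ together yield the stated inequalities (indeed with the sharper constants $\sqrt{5}$ and $2\sqrt{2}$ that you note). One cosmetic point: you assume $|S_1|=2k$, which fails if $\bar g$ has fewer than $2k$ nonzeros; in that degenerate case $\Hard{2k}(\bar g)=\bar g$, hence $\DSHat=\widetilde{\meandiff}$ is supported on at most $|S_1|+k<3k$ indices, and both inequalities follow immediately from the same averaging bound. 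It is worth a one-line remark but does not affect the argument.
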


\subsection{Concentration bounds for gradients in  $\SGood$}

We first prove  concentration bounds for gradients for sparse linear regression in the
uncorrupted case. The following is similar to  Lemma D.1 in \cite{du2017computationally}.

\begin{lemm}\label{thm:sparse_linear_mean_concentration} 	
Suppose we observe i.i.d. gradient samples $\{\gradsample_i, i\in\Good\}$ from
\Cref{equ:stat_model} with
$\abs{\Good}=\Omega\left(\frac{k\log\left(d/\nu\right)}{\epsilon^2}\right)$.
Then,
there is a $\delta = \widetilde{O}\left(\eps\right)$, such that with probability at least $1-\nu$, for any index subset ${\Ind} \subset [d]$, $|{{\Ind}}| \le \kk$
and for any
$\Good' \subset \Good$, $\abs{\Good'} \geq (1-2\eps) \abs{\Good} $,
the following inequalities hold:
\begin{align}\label{equ:gradient_concentration}
	\norm{\Expe_{i \in_u \Good'} \left(\gradsample_i^{ {\Ind}} \right) -\UMean^{ {\Ind}} }_2 &
\leq \delta\left(\norm{{\UMean}}_2 + \sigma\right), \\
\norm{{\Expe_{i \in_u \Good'} \left(\gradsample_i^{ {\Ind}}-\UMean^{ {\Ind}}\right)} ^{\otimes 2} -  F\left(\UMean\right)^{\Ind \Ind}}_{\rm op} & \leq \delta\left(\norm{{\UMean}}_2^2 + \sigma^2\right).  \label{equ:gradientsecondmoment_concentration}
\end{align}
\end{lemm}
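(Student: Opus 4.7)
The plan is to reduce both statements to uniform concentration over $\kk$-sparse unit directions $\bm{v} \in \Real^d$ (since $\Ind$ ranges over supports of size $\leq \kk$, the sup-over-$\Ind$ formulation is equivalent to a sup over $\kk$-sparse unit vectors), and then to control the additional error incurred by replacing $\Good$ with an arbitrary subset $\Good'$ of size at least $(1-2\eps)\abs{\Good}$.

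\textbf{Stage 1 (per-vector tail bounds).} For a fixed $\kk$-sparse unit $\bm{v}$, I would expand $\bm{v}^\top \gradsample_i = (\bm{v}^\top \bm{x}_i)(\bm{x}_i^\top \UMean) - \xi_i(\bm{v}^\top \bm{x}_i)$. Since $\bm{x}_i \sim \mathcal{N}(0,\Id)$ and $\xi_i \sim \mathcal{N}(0,\sigma^2)$ are independent Gaussians, $\bm{v}^\top \gradsample_i - \bm{v}^\top \UMean$ is sub-exponential with norm $O(\norm{\UMean}_2 + \sigma)$, and $(\bm{v}^\top \gradsample_i - \bm{v}^\top \UMean)^2 - \bm{v}^\top F(\UMean)\bm{v}$ is sub-exponential with norm $O(\norm{\UMean}_2^2 + \sigma^2)$, where $F(\UMean)$ is the closed-form covariance from \Cref{lem:F_Id}. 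A one-sided Bernstein inequality then gives, per fixed $\bm{v}$, concentration of the sample average on $\Good$ at rate $O\bigl(\sqrt{\log(1/\nu')/\abs{\Good}}\bigr)$ times the appropriate scale.

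\textbf{Stage 2 (uniformity over $\kk$-sparse directions).} To upgrade to uniform control, I would union-bound the Bernstein tail over a standard $1/4$-net $\mathcal{N}$ of the $\kk$-sparse unit sphere in $\Real^d$, whose cardinality satisfies $\log\abs{\mathcal{N}} = O(\kk \log(d/\kk))$. A symmetrization/peeling argument then passes from the net to the full sparse sphere. With $\abs{\Good} = \Omega(k \log(d/\nu)/\eps^2)$ (and $\kk = \Theta(k)$ up to the constant $\condition$), this yields simultaneously, with probability at least $1-\nu$, that
\begin{align*}
\sup_{\substack{\norm{\bm{v}}_2=1\\ \norm{\bm{v}}_0\leq \kk}} \Bigl|\Expe_{i\in_u \Good}\bm{v}^\top\gradsample_i - \bm{v}^\top\UMean\Bigr| &= O(\eps)(\norm{\UMean}_2 + \sigma),\\
\sup_{\substack{\norm{\bm{v}}_2=1\\ \norm{\bm{v}}_0\leq \kk}} \Bigl|\Expe_{i\in_u \Good}(\bm{v}^\top\gradsample_i-\bm{v}^\top\UMean)^2 - \bm{v}^\top F(\UMean)\bm{v}\Bigr| &= O(\eps)(\norm{\UMean}_2^2 + \sigma^2).
\end{align*}

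\textbf{Stage 3 (passage from $\Good$ to $\Good'$).} For any $\Good'\subset\Good$ with $\abs{\Good'}\geq (1-2\eps)\abs{\Good}$, the identity
\begin{align*}
\Expe_{i\in_u \Good'}(\bm{v}^\top\gradsample_i - \bm{v}^\top\UMean) = \tfrac{\abs{\Good}}{\abs{\Good'}}\Expe_{i\in_u \Good}(\bm{v}^\top\gradsample_i - \bm{v}^\top\UMean) - \tfrac{1}{\abs{\Good'}}\sum_{i\in \Good\setminus\Good'}(\bm{v}^\top\gradsample_i - \bm{v}^\top\UMean)
\end{align*}
reduces the task to bounding the second term. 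Since $\abs{\Good\setminus\Good'}\leq 2\eps\abs{\Good}$ is chosen adversarially, this term is controlled by the sum of the largest $2\eps\abs{\Good}$ absolute values of $|\bm{v}^\top(\gradsample_i - \UMean)|$. Using the sub-exponential tail from Stage 1 together with the layer-cake identity (or equivalently a Chernoff-type bound applied to the counts $\#\{i:|\bm{v}^\top(\gradsample_i-\UMean)|> t\}$), this truncated-top sum is $O\bigl(\eps\log(1/\eps)(\norm{\UMean}_2+\sigma)\abs{\Good}\bigr)$, uniformly over the net from Stage 2. Dividing by $\abs{\Good'}$ yields $\widetilde{O}(\eps)(\norm{\UMean}_2+\sigma)$, matching the claimed $\delta = \widetilde{O}(\eps)$. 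The same trimming argument applied to the squared quantities $(\bm{v}^\top(\gradsample_i-\UMean))^2$ delivers \cref{equ:gradientsecondmoment_concentration}.

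The hardest step is Stage 3: one cannot afford a naive union bound over the $\binom{\abs{\Good}}{2\eps\abs{\Good}}$ choices of $\Good\setminus\Good'$. The payoff comes from realizing that the worst-case subset is deterministic given $\bm{v}$ (namely, the indices attaining the top $2\eps$-fraction of $|\bm{v}^\top(\gradsample_i-\UMean)|$), so the union bound really only needs to run over the covering net of $\kk$-sparse directions. Balancing the $\log(1/\eps)$ factor in the tail against the $\kk\log d$ metric-entropy factor is what forces the sample complexity $\Omega(k\log(d/\nu)/\eps^2)$ in the hypothesis.
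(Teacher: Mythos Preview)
Your approach is sound and, in fact, considerably more self-contained than what the paper does. The paper's own ``proof'' is a two-line reduction: it observes that the gradient sample $\gradsample_i = \bm{x}_i\bm{x}_i^\top\UMean - \bm{x}_i\xi_i$ has exactly the same structure as the vector $y_i\bm{x}_i = \bm{x}_i\bm{x}_i^\top\param^* + \bm{x}_i\xi_i$ treated in Lemma~D.1 of \cite{du2017computationally}, and then imports that lemma wholesale by specializing the weights there to the uniform distribution. Your covering-net plus top-$2\eps$-trimming argument is essentially what lives inside that cited lemma, so you are reconstructing the black box rather than invoking it.

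There is one technical inaccuracy in Stage~1 worth flagging. The centered first-moment quantity $\bm{v}^\top(\gradsample_i-\UMean)$ is indeed sub-exponential (it is a degree-$2$ polynomial in the underlying Gaussians $(\bm{x}_i,\xi_i)$), but its square $(\bm{v}^\top(\gradsample_i-\UMean))^2$ is degree~$4$, hence only sub-Weibull with tail $\exp(-c\,t^{1/2})$, not sub-exponential. A one-line Bernstein inequality therefore does not apply directly to the second-moment concentration. The standard repairs are either (i) hypercontractivity for Gaussian polynomials (so that all moments of the degree-$4$ quantity are comparable to its variance, after which a moment-method bound or Adamczak-type sub-Weibull Bernstein suffices), or (ii) a truncation at level $\Theta\bigl((\norm{\UMean}_2^2+\sigma^2)\log(\abs{\Good}/\nu)\bigr)$ before applying Bernstein, with the truncated tail handled separately. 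Either fix preserves the $\widetilde{O}(\eps)$ rate and the $\Omega(k\log(d/\nu)/\eps^2)$ sample requirement; you just pick up the $\poly\log(1/\eps)$ factors that the $\widetilde{O}$ notation already absorbs. Your Stage~3 trimming argument is correct as stated and carries over to the squared quantities without change, since the worst-case $\Good\setminus\Good'$ for a fixed $\bm{v}$ is again the set of indices with the largest values of $(\bm{v}^\top(\gradsample_i-\UMean))^2$.
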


\spro
The main difference from their Lemma D.1 
is that we consider a uniform distribution over all samples instead of a balanced weighted distribution.
Furthermore,
eqs. (\ref{equ:gradient_concentration}) and (\ref{equ:gradientsecondmoment_concentration}) are the concentration inequalities for the mean and covariance of the collected gradient samples
$\{\gradsample_i, i\in\Good\}$ in the good set with the form:
\begin{align*}
\gradsample_i =\x_i \x_i^{\top} \UMean - \x_i \xi_i,
\end{align*}
which is equivalent to their Lemma D.1, where they consider ${y}_i \x_i  = \x_i \x_i^{\top}\param + \x_i\xi_i $. Therefore, by setting  all weights to $\frac{1}{(1-2\eps)\abs{\Good}}$ in their Lemma D.1 we obtain the desired concentration properties.
\fpro

\subsection{Relationship between the first and second moment of samples in $\Input$}
\label{subsec:correction}

In this part, we show an important connection between the covariance deviation  (the empirical covariance of $\Input$ minus the true covariance of authentic data) and the  mean deviation  (the empirical mean of $\Input$ minus the true mean of authentic data). When the  mean deviation (in $\ell_2$ sense) is large, the following Lemma implies that the covariance deviation must also be large. As a result, when the magnitude of the  covariance deviation  is  large, the current set of samples (or the current weights of all samples) needs to be adjusted; when the magnitude of the covariance deviation is small, the average of current sample set  (or the weighted sum of samples using current weights) provides a good enough estimate of the model parameter. Moreover, the same principle holds when we  use an approximation of the true covariance, which can be efficiently estimated.

Unlike Lemma A.2 in \cite{du2017computationally},
in \cref{equ:bound_a}, \cref{equ:bound_b}, we provide lower bounds for the $\kk$-sparse largest eigenvalue (rigorous definition in \cref{equ:largest_eigenvalue}), instead of the $\kk$-sparse operator norm. As we discussed in \Cref{sec:counter}, $\lambda^{*}$ is the convex relaxation of finding the $\kk$-sparse largest eigenvalue (instead of the $\kk$-sparse operator norm).
In the statement of the following Lemma, for the purpose of consistency, we consider the uniform distribution of weights. However,  the proof and results can be easily  extended to the setting with the balanced distribution of weights. This is due to the similarity between the two types of weight representation, as discussed in \Cref{subsec:preliminary}.

\begin{lemm}
\label{thm:cov_imply_mean}
Suppose $|\SBad| \leq 2\eps |\Input|$,  $\delta = \Omega\left(\eps\right)$,
and the gradient samples in $\SGood$ satisfy
\begin{align}
\norm{ \Hard{\kk} \left(\widetilde{\meandiff}_{\SGood}\right) }_2 & \leq c\left(\norm{\UMean}_2 + \sigma\right)\delta,\label{equ:condition_connection} \\
\twosparseopnorm{{\Expe_{i \in_u \SGood} \left(\gradsample_i-\UMean\right)}^{\otimes 2} -  F\left(\UMean\right)} & \leq c \left(\norm{\UMean}_2^2 + \sigma^2\right)\delta,\label{equ:condition_connection_kop}
\end{align}
where $c$ is a constant.
If $\norm{{\Hard{\kk}\left(\widetilde{\meandiff}\right)}}_2 \ge C_1\left(\norm{\UMean}_2 + \sigma\right)\delta$, where $C_1$ is a large constant, we have,
\begin{align}
 \max_{\norm{\vect}_2=1, \norm{\vect}_0\leq \kk }\vect^{\top} \left({\Expe_{i \in_u \Input}
\left(\gradsample_i-\widehat{\UMean}\right)^{\otimes 2}-F\left(\UMean\right)}\right) \vect &\geq  \frac{\norm{{\Hard{\kk}\left(\widetilde{\meandiff}\right)}}_2^2}{4\epsilon}, \label{equ:bound_a}\\
 \max_{\norm{\vect}_2=1, \norm{\vect}_0\leq \kk }\vect^{\top} \left({\Expe_{i \in_u \Input}
\left(\gradsample_i-\widehat{\UMean}\right)^{\otimes 2}-F\left(\widehat{\UMean}\right)}\right) \vect &\geq  \frac{\norm{{\Hard{\kk}\left(\widetilde{\meandiff}\right)}}_2^2}{5\epsilon}.\label{equ:bound_b}
\end{align}
\end{lemm}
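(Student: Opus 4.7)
The strategy is to avoid relating $\lambda^{\ast}$ to the full $\kk$-sparse operator norm---which, as \Cref{sec:counter} shows, need not be a valid upper bound---and instead exhibit a single $\kk$-sparse unit witness $\vect$ that certifies the desired lower bound directly. The natural choice is $\vect := \Hard{\kk}(\widetilde{\meandiff})/M$ with $M := \|\Hard{\kk}(\widetilde{\meandiff})\|_2$; it is $\kk$-sparse and unit-norm, and by construction $\langle \vect, \widetilde{\meandiff}\rangle = M$. For this fixed $\vect$ I will lower bound
\begin{align*}
Q := \vect^{\top}\!\left(\Expe_{i\in_u\Input}(\gradsample_i - \widehat{\UMean})^{\otimes 2} - F(\UMean)\right)\!\vect
\end{align*}
by $M^2/(4\eps)$; \cref{equ:bound_b} will then follow by a short $F$-smoothness perturbation.

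Set $\alpha := |\SGood|/|\Input| \ge 1-2\eps$ and $A := \|\UMean\|_2 + \sigma$. Split the empirical second moment around $\widehat{\UMean}$ into its $\SGood$ and $\SBad$ averages and apply the second-moment identity $\Expe[(X-c)^{\otimes 2}] = \Cov(X) + (\Expe X - c)^{\otimes 2}$ to each piece. Dropping the p.s.d.\ $\SBad$ covariance term and the nonneg $\SGood$ mean-shift gives
\begin{align*}
Q \ge \alpha\vect^{\top}\Cov_{i\in_u\SGood}(\gradsample_i)\vect + (1-\alpha)\bigl(\vect^{\top}(\widetilde{\meandiff}_{\SBad}-\DSHat)\bigr)^2 - \vect^{\top}F(\UMean)\vect.
\end{align*}
Assumption \cref{equ:condition_connection_kop}, combined with the bound $|\langle \vect, \widetilde{\meandiff}_{\SGood}\rangle| \le cA\delta$ from \cref{equ:condition_connection} and the $\kk$-sparsity of $\vect$, upgrades $\vect^{\top}\Cov_{\SGood}(\gradsample_i)\vect$ to $\vect^{\top}F(\UMean)\vect - O(A^2\delta)$, so the dominant contribution comes from the $\SBad$ mean shift.

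The crux is the linear identity $(1-\alpha)\widetilde{\meandiff}_{\SBad} = \widetilde{\meandiff} - \alpha\widetilde{\meandiff}_{\SGood}$, which together with $\langle \vect, \widetilde{\meandiff}\rangle = M$ yields
\begin{align*}
(1-\alpha)\langle \vect, \widetilde{\meandiff}_{\SBad} - \DSHat\rangle = M - \alpha\langle \vect, \widetilde{\meandiff}_{\SGood}\rangle - (1-\alpha)\langle \vect, \DSHat\rangle.
\end{align*}
The first correction is at most $cA\delta \le cM/C_1$ by the hypothesis $M \ge C_1 A\delta$, and the second is at most $(1-\alpha)\|\DSHat\|_2 \le 2\eps\cdot 5M = 10\eps M$ via \Cref{thm:error_control}. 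For $C_1$ large and $\eps$ small, the right-hand side is therefore at least $(1 - c/C_1 - 10\eps)M$; squaring and dividing by $1-\alpha \le 2\eps$ then yields $(1-\alpha)(\vect^{\top}(\widetilde{\meandiff}_{\SBad}-\DSHat))^2 \ge M^2/(4\eps)$.

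It remains to check that the residual terms in the bound for $Q$ are negligible. The ``leakage'' $(1-\alpha)\vect^{\top}F(\UMean)\vect \le 4\eps A^2$ and the $O(A^2\delta)$ slack from $\SGood$ concentration both become $O(M^2/(C_1^2\eps))$ once one substitutes $A \le M/(C_1\delta)$ and $\delta = \Omega(\eps)$, and are absorbed by taking $C_1$ sufficiently large. For \cref{equ:bound_b}, the extra term $|\vect^{\top}(F(\UMean) - F(\widehat{\UMean}))\vect| = O(AM + M^2)$, estimated via $\LF = 4\|\UMean\|_2$ and $\|\DSHat\|_2 \le 5M$ from \Cref{lem:F_smooth}, is again $O(M^2/(C_1\eps))$, which accounts for the slightly worse denominator $5\eps$. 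The main obstacle throughout is the contamination $(1-\alpha)\langle \vect, \DSHat\rangle$: \Cref{thm:error_control} only gives $|\langle \vect, \DSHat\rangle| \le \|\DSHat\|_2 \le 5M$, which is of the \emph{same order} as the signal $M$. The saving grace is the extra factor $(1-\alpha) \le 2\eps$ multiplying it, which shrinks the correction to $O(\eps M)$ and leaves the signal intact---this is precisely what allows the single-witness strategy to certify the lower bound even in the regime where the operator-norm claim of Lemma A.3 in \cite{du2017computationally} fails.
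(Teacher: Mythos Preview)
Your argument is correct, and it takes a genuinely different route from the paper's proof. The paper restricts to the index set $\Ind=\mathrm{supp}\,\Hard{\kk}(\widetilde{\meandiff})$, first shows that the \emph{bad-sample second moment} $\tfrac{1}{|\Input|}\sum_{i\in\SBad}(\gradsample_i^\Ind-\UMean^\Ind)^{\otimes 2}$ has large operator norm (via the p.s.d.\ mean--second-moment inequality), then takes the witness $\vect_0$ to be the \emph{top eigenvector} of that matrix, and only at the end shifts the centering from $\UMean$ to $\widehat{\UMean}$ by expanding the cross terms $\widehat{\meandiff}^\Ind(\widetilde{\meandiff}^\Ind)^\top$ and invoking \Cref{thm:error_control}. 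You instead pick the explicit witness $\vect=\Hard{\kk}(\widetilde{\meandiff})/M$, work with the $\widehat{\UMean}$-centered second moment from the outset, and use the decomposition $\Expe[(X-c)^{\otimes 2}]=\Cov(X)+(\Expe X-c)^{\otimes 2}$ together with the linear identity $(1-\alpha)\widetilde{\meandiff}_{\SBad}=\widetilde{\meandiff}-\alpha\widetilde{\meandiff}_{\SGood}$ to isolate the bad mean shift directly. Both witnesses are supported on $\Ind$, but they need not coincide: yours is aligned with the overall mean deviation, the paper's is aligned with the bad second moment. Your route is slightly more elementary---it avoids the intermediate operator-norm step and the subsequent centering shift---at the cost of having to handle the $(1-\alpha)\langle\vect,\DSHat\rangle$ contamination term explicitly, which you correctly note is tamed only by the extra factor $1-\alpha\le 2\eps$. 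The paper's route is a bit more modular, separating ``bad second moment is large'' from ``good samples concentrate,'' which is why it generalizes cleanly to the weighted setting used in the ellipsoid analysis.
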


\begin{proof}
We focus on the  $\kk$-sparse largest eigenvalue  (rigorous definition in \cref{equ:largest_eigenvalue}),
which is the correct route of analysis the convex relaxation of Sparse PCA.

Let $\Ind = \argmax_{\Ind' \subset [d], |{{\Ind'}}| \le \kk} \norm{{\widetilde{\meandiff}^{\Ind'}}}_2$.
Then $\widetilde{\meandiff}^{\Ind} = \norm{{\Hard{\kk}\left(\widetilde{\meandiff}\right)}}_2 \ge C_1\left(\norm{\UMean}_2 + \sigma\right)\delta$ according to the assumption. Using $\abs{\Input}$ to denote the size of $\Input$,
we have a lower bound for the sum over bad samples:
\begin{align*}
 \norm{ \frac{1}{\abs{\Input}}  \sum_{i\in\SBad}  \left(\gradsample_i^\Ind - \UMean^\Ind\right)}_2
 &= \norm{\widetilde{\meandiff}^{\Ind}- \frac{1}{\abs{\Input}} \sum_{i\in\SGood}\left(\gradsample_i^\Ind - \UMean^\Ind\right)}_2 \\
& \geq \norm{\widetilde{\meandiff}^{\Ind}}_2 -\norm{  \frac{1}{\abs{\Input}} \sum_{i\in\SGood}\left(\gradsample_i^\Ind - \UMean^\Ind\right)}_2 \\
& \overset{(i)}\geq \norm{\widetilde{\meandiff}^{\Ind}}_2 - c\left(\norm{\UMean}_2 + \sigma\right)\delta \\
& \overset{(ii)}\geq \frac{\norm{\widetilde{\meandiff}^{\Ind}}_2}{1.1},
\end{align*}
where (i) follows from \cref{equ:condition_connection} and  the assumptions;
(ii) follows from that we choose $C_1$ large enough.

By p.s.d.-ness of covariance matrices, we have
\begin{align*}
	\frac{1}{|\SBad|}\sum_{i\in\SBad}\left(\gradsample_i^\Ind-\UMean^{\Ind}\right)  \left(\gradsample_i^\Ind-\UMean^{\Ind}\right)^{\top}  \succcurlyeq \left(\frac{1}{|\SBad|} \sum_{i\in\SBad} \left(\gradsample_i^\Ind-\UMean^{\Ind}\right)\right)^{\otimes 2}.
\end{align*}
Therefore, because $\abs{\SBad} \le 2\epsilon |\Input|$, we have
\begin{align}\label{equ:bad_op}
\norm{{\frac{1}{\abs{\Input}} \sum_{i\in \SBad} \left(\gradsample_i^\Ind-\UMean^{\Ind}\right)^{\otimes 2}}}_{\footnotesize{\mbox{op}}}
\geq
\frac{ \norm{ \frac{1}{\abs{\Input}} \sum_{i\in\SBad}   \left(\gradsample_i^\Ind - \UMean^\Ind\right)}_2^2 }{2\eps}
\geq
\frac{ \norm{\widetilde{\meandiff}^{\Ind}}_2^2} {2.5\epsilon}.
\end{align}
With a lower bound of this submatrix of the covariance matrix, we define a vector $\vect_0 \in \Real^{\kk}$ as follows:
\begin{align}
\vect_0 = \argmax_{\norm{\vect}_2 = 1}  \vect^{\top}\left({\sum_{i\in\SBad}\frac{1}{\abs{\Input}}  \left(\gradsample_i^\Ind-\UMean^\Ind\right)^{\otimes 2}}\right) \vect.
\label{equ:largest_eigenvalue}
\end{align}

For this $\vect_0$, we have
\begin{align}
&\vect_0^{\top}\left({\frac{1}{\abs{\Input}}  \sum_{i=1}^{\abs{\Input}}\left(\gradsample_i^\Ind-\UMean^\Ind\right)^{\otimes 2} - F\left(\UMean\right)^{\Ind\Ind} } \right)\vect_0 \nonumber\\
&\ge  \vect_0^{\top}\left({\frac{1}{\abs{\Input}}  \sum_{i\in\SBad} \left(\gradsample_i^\Ind-\UMean^\Ind\right)^{\otimes 2}}\right) \vect_0 \nonumber \\
&- \norm{\frac{1}{\abs{\Input}} \sum_{i\in\SGood}\left(\gradsample_i^\Ind-\UMean^\Ind\right)^{\otimes 2} - \frac{|\SGood|}{\abs{\Input}}F\left(\UMean\right)^{\Ind\Ind}}_{\footnotesize{\mbox{op}}} -\norm{\frac{|\SBad|}{\abs{\Input}}F\left(\UMean\right)^{\Ind\Ind}}_{\footnotesize{\mbox{op}}} \nonumber\\
&\overset{(i)}\geq \frac{\norm{\widetilde{\meandiff}^{\Ind}}^2}{2.5\epsilon} -c\left(\norm{\UMean}_2^2 + \sigma^2\right)\delta-
2\epsilon ( \norm{\UMean}_2^2 + \sigma^2)
\nonumber\\
&\overset{(ii)}\geq \frac{\norm{\widetilde{\meandiff}^{\Ind}}^2}{3\epsilon}, \label{equ:bound_3}
\end{align}
where (i) follows from \cref{equ:condition_connection_kop}  and \cref{equ:bad_op}; (ii)
follows from the assumption that $\epsilon$ is sufficiently small.



Applying \cref{equ:bound_3} on our target ${\Expe_{i\in_u \Input}
\left(\gradsample_i-\widehat{\UMean}\right)^{\otimes 2}-F\left(\UMean\right)}$, we have
\begin{align}
&\vect_0^{\top}\left({\frac{1}{\abs{\Input}}\sum_{i=1}^{\abs{\Input}} \left(\gradsample_i^\Ind-\widehat{\UMean}^\Ind\right)^{\otimes 2} - F\left(\UMean\right)^{\Ind\Ind} } \right)\vect_0 \nonumber\\
&= \vect_0^{\top}\left(\frac{1}{\abs{\Input}}\sum_{i=1}^{\abs{\Input}} \left(\gradsample_i^\Ind-\UMean^\Ind\right)^{\otimes 2} - F\left(\UMean\right)^{\Ind\Ind} -\widehat{\meandiff}^{\Ind}\left(\widetilde{\meandiff}^{\Ind}\right)^\top -\widetilde{\meandiff}^{\Ind}\left(\widehat{\meandiff}^{\Ind}\right)^\top
+ \left(\widehat{\meandiff}^{\Ind}\right)^{\otimes 2} \right)\vect_0 \nonumber\\
& \overset{(i)}\ge  \vect_0^{\top}\left({\frac{1}{\abs{\Input}} \sum_{i=1}^{\abs{\Input}} \left(\gradsample_i^\Ind-\UMean^\Ind\right)^{\otimes 2} - F\left(\UMean\right)^{\Ind\Ind} } \right)\vect_0 - 24\left(\norm{\widetilde{\meandiff}^{\Ind}}_2^2\right) \nonumber\\
&\overset{(ii)}\ge \frac{\norm{\widetilde{\meandiff}^{\Ind}}_2^2}{4\epsilon}, \label{equ:bound_4}
\end{align}
where (i) follows from \Cref{thm:error_control}; (ii) follows from \cref{equ:bound_3}
and $\eps$ is sufficiently small.
By a construction $\vect = (\vect_0, \bm{0}_{d-\kk})^\top$, it is easy to see that  $\vect_0$ provides a lower bound for the maximum of $\{\vect: \norm{\vect}_2=1, \norm{\vect}_0\leq \kk\}$ in \cref{equ:bound_a}.

By \cref{equ:bound_4}, we already know that
\begin{align*}
\vect_0^{\top}\left( \frac{1}{\abs{\Input}} {\sum_{i=1}^{\abs{\Input}}\left(\gradsample_i^\Ind-\widehat{\UMean}^\Ind\right)^{\otimes 2} - F\left(\UMean\right)^{\Ind\Ind} } \right)\vect_0
\ge  \frac{\norm{\widetilde{\meandiff}^{\Ind}}_2^2}{4\epsilon}.
\end{align*}

By our assumptions on $F$,  we have \begin{align*}
\twosparseopnorm{F\left(\UMean\right)-F\left(\widehat{\UMean}\right)} &\le L_F\norm{\widehat{\meandiff}}_2 + C\norm{\widehat{\meandiff}}_2^2 \\
& \overset{(i)}\le 5L_F\norm{\widetilde{\meandiff}^\Ind}_2 + 5C\norm{\widetilde{\meandiff}^\Ind}_2^2,
\end{align*}
where (i) follows from \Cref{thm:error_control}.
Since $\delta = \Omega\left(\eps\right)$,  we obtain \cref{equ:bound_b}  by using the triangle inequality.

\end{proof}

\subsection{Proof of \Cref{thm:coro}}
\label{sec:full_BDLS_coro}

Equipped with \Cref{thm:error_control}, \Cref{thm:sparse_linear_mean_concentration}
and \Cref{thm:cov_imply_mean}, we can now prove
\Cref{thm:coro}.

\begin{coro}[\Cref{thm:coro}]
Suppose we observe $N\left(k, d, \eps, \nu\right)$ $\eps$-corrupted samples from \Cref{equ:stat_model} with $\Sig = \Id$.
By setting $\kk = k'+k$, if we use the ellipsoid
algorithm for robust sparse gradient estimation with $\sep = \Theta \big( \eps \big(\norm*{\UMean^t}_2^2 + \sigma^2\big)\big)$,
it requires 
$N\left(k, d, \eps, \nu\right)= \Omega\big(\frac{k^2 \log \left(d T/\nu\right)}{\eps^2}\big) T$ samples, and  guarantees $\psi\left(\eps\right) = \widetilde{O}\left(\eps^2\sigma^2\right)$. Hence,
\Cref{alg:inexact_IHT} outputs $\widehat{\param}$, such that
\begin{align*}
\norm{\widehat{\param} - \param^*}_2 = \widetilde{O}\left(\sigma \eps \right),
\end{align*}
with probability at least $1 - \nu$,
by setting $T = \Theta\left(\log\left(\frac{\norm{\param^*}_2}{\eps\sigma}\right)\right)$.
\end{coro}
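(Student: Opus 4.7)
The plan is to reduce \Cref{thm:coro} to two tasks: (i) verify that the ellipsoid-based mean estimator yields a $\psi(\epsilon)$-RSGE in the sense of \Cref{def:sparse_gradient} with $\alpha(\epsilon) = \widetilde{O}(\epsilon^2)$ and $\psi(\epsilon) = \widetilde{O}(\epsilon^2 \sigma^2)$ at each fixed iterate $\param^t$, and (ii) invoke \Cref{thm:Main}, which mechanically turns any such RSGE into the claimed global rate $\widetilde{O}(\sigma\epsilon)$. Thus essentially all the substance lies in (i), carried out once for a generic point $\param$.

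First I would lay down the concentration scaffolding. By \Cref{thm:sparse_linear_mean_concentration}, with $n = \Omega(k^2 \log(d/\nu')/\epsilon^2)$ fresh samples per round the authentic gradient samples in $\SGood$ satisfy, with probability at least $1 - \nu'$ and uniformly over all subsets of size at least $(1-2\epsilon)|\Good|$, both $\norm{\Hard{\kk}(\widetilde{\meandiff}_{\SGood})}_2 \leq c(\norm{\UMean}_2 + \sigma)\delta$ and $\twosparseopnorm{\Expe_{i \in_u \SGood}(\gradsample_i - \UMean)^{\otimes 2} - F(\UMean)} \leq c(\norm{\UMean}_2^2 + \sigma^2)\delta$ with $\delta = \widetilde{O}(\epsilon)$. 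This verifies the two hypotheses of \Cref{thm:cov_imply_mean} and is preserved through any reweighting performed inside the separation oracle.

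Next I would translate the ellipsoid's termination condition $\lambda^\ast \leq \sep = \Theta(\epsilon(\norm{\UMean}_2^2 + \sigma^2))$ into a mean-recovery guarantee. This is exactly the step where Lemma~A.3 of \cite{du2017computationally} is invoked in their proof, and where the counterexample of \Cref{sec:counter} forces a detour. The replacement is the contrapositive of \Cref{thm:cov_imply_mean}, inequality (\ref{equ:bound_b}): the convex program solved by the oracle is a valid relaxation of the $\kk$-sparse \emph{largest eigenvalue} of $\widehat{\Sig} - F(\widehat{\UMean})$, and $\lambda^\ast$ upper bounds that eigenvalue. Hence $\lambda^\ast \leq \sep$ forces $\norm{\Hard{\kk}(\widetilde{\meandiff})}_2^2 \lesssim \epsilon \sep = \widetilde{O}(\epsilon^2(\norm{\UMean}_2^2 + \sigma^2))$, and \Cref{thm:error_control} then propagates this bound to $\norm{\widehat{\UMean} - \UMean}_2^2 = \widetilde{O}(\epsilon^2(\norm{\UMean}_2^2 + \sigma^2))$. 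Since $\norm{\UMean}_2 = \norm{\param - \param^\ast}_2$ when $\Sig = \Id$, this is exactly the RSGE condition with the claimed parameters.

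Finally, feeding $\psi(\epsilon) = \widetilde{O}(\epsilon^2 \sigma^2)$ into \Cref{thm:Main} with $\eta = 1/\ssmoothness$, $k' = \condition^2 k$ and $T = \Theta(\log(\norm{\param^\ast}_2/(\epsilon\sigma)))$ produces $\norm{\widehat{\param} - \param^\ast}_2 = \widetilde{O}(\sigma\epsilon)$. Setting $\nu' = \nu/T$ in the concentration step and taking a union bound over the $T$ iterations of sample splitting yields the probability claim, and multiplying the per-iteration $n$ by $T$ gives the total sample count. The one genuinely non-routine step is the corrected relaxation argument in \Cref{thm:cov_imply_mean}: because $\widehat{\Sig} - F(\widehat{\UMean})$ need not be p.s.d., one cannot directly invoke a standard sparse-PCA duality bound; instead one must explicitly build a $\kk$-sparse test vector supported on the top coordinates of $\widetilde{\meandiff}$ and show its quadratic form is large by separating the $\SGood$ and $\SBad$ contributions using the two concentration hypotheses above. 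Everything else is bookkeeping against \Cref{thm:Main,thm:error_control,thm:sparse_linear_mean_concentration}.
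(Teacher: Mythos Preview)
Your proposal is correct and follows essentially the same route as the paper's own proof: set up the concentration via \Cref{thm:sparse_linear_mean_concentration}, replace the flawed Lemma~A.3 of \cite{du2017computationally} by the corrected $\kk$-sparse largest-eigenvalue bound of \Cref{thm:cov_imply_mean} (in its weighted form) to certify that $\lambda^\ast \le \sep$ forces $\norm{\Hard{\kk}(\widetilde{\meandiff})}_2^2 = \widetilde{O}(\epsilon^2(\norm{\UMean}_2^2+\sigma^2))$, pass through \Cref{thm:error_control}, and then feed the resulting RSGE into \Cref{thm:Main}. The only cosmetic difference is that the paper explicitly records the smoothness parameters $\Lcov$ and $\LF$ from \Cref{sec:calculations} before invoking the ellipsoid guarantee, and phrases the key implication in the forward direction (large mean deviation $\Rightarrow$ large $\lambda^\ast$) rather than contrapositively, but the substance is identical.
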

\spro
We consider only the $t$-th iteration,
and thus  omit $t$ in $\gradsample_i^t$ and $\UMean^t$.
The function $F\left(\UMean\right)$  is given by
$
F\left(\UMean\right) = \norm{\UMean}_2^2  \Id + \UMean\UMean^{\top}  + \sigma^2 \Id,
$
as in  \Cref{sec:calculations}.
The accuracy in robust sparse estimation on gradients depends on two parameters for $F\left(\UMean\right)$: $\Lcov = 2 \norm{\UMean}_2^2 + \sigma^2$, and $\LF = 4\norm{\UMean}_2$, which are calculated in \Cref{sec:calculations}.

Under the statistical model and the contamination model described in \Cref{thm:Main}, we can set the
parameters $\sep = \Theta(\eps \big(\norm{\UMean^t}_2^2 + \sigma^2\big))$ in \Cref{alg:filter-F-function} by the calculation of $\Lcov$ and $\LF$

The ellipsoid algorithm considers all possible sample weights in a convex set and finds the optimal weight for each sample. The algorithm iteratively uses a separation oracle \Cref{alg:filter-F-function}, which solves the convex relaxation of Sparse PCA at each iteration:
\begin{align}
\label{equ:trace_proof}
\lambda^* = \max_{{\HM}} \tr{ \left(\widehat{\bm{\Sigma}} - F\left(\widehat{\UMean}\right)\right) \cdot \HM},  \quad
\text{subject to }  {\HM} \succcurlyeq 0,
 \norm{\HM}_{1,1} \le \kk,
 \tr{\HM} = 1.
\end{align}

To prove the Main
Theorem (Theorem 3.1) in \cite{du2017computationally},
the only modification is to replace the lower bound of $\lambda^*$ in their
Lemma A.3.

A weighted version of \Cref{thm:cov_imply_mean} implies that if the mean deviation is large, then
\begin{align}\label{equ:true_lower_1}
\max_{\norm{\vect}_2=1, \norm{\vect}_0\leq \kk }\vect^{\top} \left({ \sum_{i=1}^{|\Input|} w_i
\left(\gradsample_i-\widehat{\UMean}(w)\right)^{\otimes 2}-F\left(\widehat{\UMean}(w)\right)}\right) \vect \geq  \frac{\norm{{\Hard{\kk}\left(\widetilde{\meandiff}(w)\right)}}_2^2}{5\epsilon},
\end{align}
where $\widehat{\UMean}(w) =  \Hard{2\kk} \left(\sum_{i=1}^{|\Input|} w_i \gradsample_i\right)$, and $\widetilde{\meandiff}(w)=\sum_{i=1}^{|\Input|} w_i \gradsample_i - \UMean$.
Then, $\lambda^*$ in the ellipsoid algorithm satisfies
\begin{align}\label{equ:true_lower_2}
\lambda^* \geq \max_{\norm{\vect}_2=1, \norm{\vect}_0\leq \kk }\vect^{\top} \left({ \sum_{i=1}^{|\Input|} w_i
\left(\gradsample_i-\widehat{\UMean}(w)\right)^{\otimes 2}-F\left(\widehat{\UMean}(w)\right)}\right)\vect,
\end{align}
since $\lambda^*$ is the solution to the trace norm maximization \cref{equ:trace_proof}, which is the convex relaxation of finding the $\kk$-sparse largest eigenvalue.

Combining \cref{equ:true_lower_1} and \cref{equ:true_lower_2}, we have
\begin{align}
\lambda^* \geq  \frac{\norm{{\Hard{\kk}\left(\widetilde{\meandiff}(w)\right)}}_2^2}{5\epsilon},
\end{align}
which recovers the correctness of the separation oracle
in the ellipsoid algorithm, and their Main
Theorem (Theorem 3.1).

Finally, the ellipsoid algorithm guarantees that,
with  sample complexity
$\Omega\left(\frac{k^2 \log (d/\nu)}{\eps^2}\right)$, the estimate $\widehat{\UMean}$ satisfies
\begin{align}
\label{equ:BDLS_gradient}
\norm{ \widehat{\UMean} - {\UMean}}_2^2=
\widetilde{O}\left(\eps^2 \left(\LF^2 + \Lcov\right)\right)
= \widetilde{O}\left(\eps^2 \left(\norm{{\UMean}}_2^2 + \sigma^2\right)\right),
\end{align}
with probability at least $1-\nu$.
This exactly gives us a $\widetilde{O}\left(\eps^2\sigma^2\right)$-{\RSGE}. Hence, we can apply \cref{equ:BDLS_gradient} as the {\RSGE} in
\Cref{thm:Main} to prove \Cref{thm:coro}.
\fpro

\section{Outlier removal guarantees in the  filtering algorithm}
\label{sec:proof_kappa_bound}

In this section, we consider a single iteration of \Cref{alg:inexact_IHT}, and prove \Cref{thm:kappa_bound} at the $t$-th step. For clarity, we omit the superscript $t$ in both $\gradsample_i^t$ and $\UMean^t$.

In order to show guarantees for \Cref{thm:kappa_bound},
we leverage previous results
\Cref{thm:sparse_linear_mean_concentration}
and \Cref{thm:cov_imply_mean}.
We state \Cref{thm:concentration_sqeps} as
a modification of \Cref{thm:sparse_linear_mean_concentration}
by replacing $\eps$ by $\sqeps$, using
 concentration results in \Cref{thm:sparse_linear_mean_concentration}, and replacing $\eps$ by $\sqeps$.
We state \Cref{lem:cov_sqeps}
as a modification of \Cref{thm:cov_imply_mean}
by replacing $\delta = \Omega\left(\eps\right)$
with $\delta = \Omega\left(\sqeps\right)$,
since  the results for $\delta = \Omega\left(\eps\right)$
implies the results for  $\delta = \Omega\left(\sqeps\right)$.


The reason we modify the above is to prove guarantees for our computationally more efficient RSGE described in \Cref{alg:filter-mean-second}. Our motivation for calculating the score for each sample according to
$\tau_i =  \mathrm{Tr}(\HM^* \cdot (\gradsample_i-\widehat{\UMean})(\gradsample_i-\widehat{\UMean})^{\top})$ is to make sure that all the scores $\tau_i$ are positive (notice that the scores calculated based on the original non-p.s.d matrix may be negative). Based on this, we show that the sum of scores over all bad samples is  a large constant ($>1$) times larger than the sum of scores over all good samples. When finding an upper bound for $\sum_{i\in\SGood} \tau_i$, we compromise an $\epsilon$ factor in the value of $\lambda^{*}$, which results in an $\sqrt{\epsilon}$ factor in the recovery guarantee.

As described above, we immediately have
\Cref{thm:concentration_sqeps} and
\Cref{lem:cov_sqeps}
given the proofs in
\Cref{sec:correct_proof}. Note that we still use the same definitions $\widetilde{\meandiff}_{\SGood}$ and $\widetilde{\meandiff}$ on set $\SGood$ and $\Input$ respectively as in \Cref{subsec:preliminary}.

\begin{lemm}
\label{thm:concentration_sqeps} 	
Suppose we observe i.i.d. gradient samples $\{\gradsample_i, i\in\Good\}$ from
\Cref{equ:stat_model} with
$\abs{\Good} =\Omega\left(\frac{k\log\left(d/\nu\right)}{\epsilon}\right)$.
Then there is a $\delta = \widetilde{O}\left(\sqeps\right)$ that with probability at least $1-\nu$, we have for any subset ${\Ind} \subset [d]$, $|{{\Ind}}| \le \kk$,
and for any
$\Good' \subset \Good$, $\abs{\Good'} \geq (1-2\eps) \abs{\Good} $,
the following inequalities hold:
\begin{align}
	\norm{\Expe_{i \in_u \Good'} \left(\gradsample_i^{ {\Ind}} \right) -\UMean^{ {\Ind}} }_2 &
\leq \delta\left(\norm{{\UMean}}_2 + \sigma\right), \label{equ:gradient_concentration_sqeps} \\
\norm{{\Expe_{i \in_u \Good'} \left(\gradsample_i^{ {\Ind}}-\UMean^{ {\Ind}}\right)} ^{\otimes 2} -  F\left(\UMean\right)^{\Ind \Ind}}_{\rm op} & \leq \delta\left(\norm{{\UMean}}_2^2 + \sigma^2\right) . \label{equ:gradientsecondmoment_concentration_sqeps}
\end{align}
\end{lemm}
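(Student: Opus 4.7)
The plan is to derive this statement as an immediate corollary of \Cref{thm:sparse_linear_mean_concentration} by the parameter substitution $\epsilon \mapsto \sqrt{\epsilon}$. First I would introduce an auxiliary parameter $\epsilon' := \sqrt{\epsilon}$ and invoke the earlier lemma with $\epsilon'$ playing the role that $\epsilon$ played there. The sample-size hypothesis of the earlier lemma, $|\Good| = \Omega(k\log(d/\nu)/(\epsilon')^{2})$, becomes exactly the assumption $|\Good| = \Omega(k\log(d/\nu)/\epsilon)$ we have in hand, and the concentration level $\delta = \widetilde{O}(\epsilon') = \widetilde{O}(\sqrt{\epsilon})$ is exactly the level advertised in the target statement. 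The right-hand sides $(\norm{\UMean}_2 + \sigma)$ and $(\norm{\UMean}_2^2 + \sigma^2)$ are not affected by the substitution, since they carry no $\epsilon$ dependence.

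The one subtlety I would verify explicitly is the reconciliation of the two subset conditions. Invoked with parameter $\epsilon' = \sqrt{\epsilon}$, \Cref{thm:sparse_linear_mean_concentration} delivers its bound uniformly over the wider family of subsets $\{\Good' \subset \Good : |\Good'| \geq (1-2\sqrt{\epsilon})|\Good|\}$, whereas the present statement only quantifies over the narrower family $\{\Good' \subset \Good : |\Good'| \geq (1-2\epsilon)|\Good|\}$. Since $1-2\epsilon \geq 1-2\sqrt{\epsilon}$ for $\epsilon \in (0,1)$, every $\Good'$ admissible here is admissible there, so the uniform guarantee transfers a fortiori and yields both \cref{equ:gradient_concentration_sqeps} and \cref{equ:gradientsecondmoment_concentration_sqeps}.

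The main (and essentially only) ``obstacle'' here is conceptual rather than technical: we are trading a factor of $\epsilon^{-1}$ in sample complexity for a $\sqrt{\epsilon}$ inflation of the deviation rate, and this trade is exactly what the computationally cheaper filtering algorithm of \Cref{alg:filter-mean-second} can afford downstream (its scoring step compromises an $\epsilon$ factor inside $\lambda^\ast$, which explains why a weaker concentration level is sufficient to feed into \Cref{thm:kappa_bound} and \Cref{thm:Filter}). Because no fresh concentration argument is needed beyond reusing the one already established, the final write-up of this lemma can be kept to a single short paragraph citing \Cref{thm:sparse_linear_mean_concentration} together with the elementary subset-containment step.
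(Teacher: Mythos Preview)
Your proposal is correct and matches the paper's own approach: the paper likewise states that \Cref{thm:concentration_sqeps} follows immediately from \Cref{thm:sparse_linear_mean_concentration} by the substitution $\epsilon \mapsto \sqrt{\epsilon}$. Your explicit verification that the subset condition $|\Good'| \geq (1-2\epsilon)|\Good|$ is contained in the family $|\Good'| \geq (1-2\sqrt{\epsilon})|\Good|$ is a detail the paper leaves implicit, so your write-up is, if anything, slightly more careful.
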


\begin{lemm}
\label{lem:cov_sqeps}
Suppose $|\SBad| \leq 2\eps |\Input|$, $\delta = \Omega\left(\sqeps\right)$,
and the gradient samples in $\SGood$ satisfy
\begin{align}
\norm{ \Hard{\kk} \left(\widetilde{\meandiff}_{\SGood}\right) }_2 & \leq c\left(\norm{\UMean}_2 + \sigma\right)\delta,\label{equ:condition_connection_sqeps} \\
\twosparseopnorm{{\Expe_{i \in_u \SGood} \left(\gradsample_i-\UMean\right)}^{\otimes 2} -  F\left(\UMean\right)} & \leq c \left(\norm{\UMean}_2^2 + \sigma^2\right)\delta, \label{equ:condition_connection_kop_sqeps}
\end{align}
where $c$ is a constant.
If $\norm{{\Hard{\kk}\left(\widetilde{\meandiff}\right)}}_2 \ge C_1\left(\norm{\UMean}_2 + \sigma\right)\delta$, where $C_1$ is a constant.
Then we have,
\begin{align}
 \max_{\norm{\vect}_2=1, \norm{\vect}_0\leq \kk }\vect^{\top} \left({\Expe_{i\in_u \Input}
\left(\gradsample_i-\widehat{\UMean}\right)^{\otimes 2}-F\left(\UMean\right)}\right) \vect &\geq  \frac{\norm{{\Hard{\kk}\left(\widetilde{\meandiff}\right)}}_2^2}{4\epsilon}, \label{equ:bound_a_sqeps}\\
 \max_{\norm{\vect}_2=1, \norm{\vect}_0\leq \kk }\vect^{\top} \left({\Expe_{i \in_u \Input}
\left(\gradsample_i-\widehat{\UMean}\right)^{\otimes 2}-F\left(\widehat{\UMean}\right)}\right) \vect &\geq  \frac{\norm{{\Hard{\kk}\left(\widetilde{\meandiff}\right)}}_2^2}{5\epsilon}.\label{equ:bound_b_sqeps}
\end{align}
\end{lemm}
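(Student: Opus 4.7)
The plan is to recognize that \Cref{lem:cov_sqeps} is essentially a restatement of \Cref{thm:cov_imply_mean} under a strictly stronger quantitative assumption on $\delta$, and that no new ideas are required: the conclusion under $\delta = \Omega(\eps)$ automatically implies the conclusion under $\delta = \Omega(\sqeps)$, since for $\eps < 1$ the latter is a stronger lower bound on $\delta$ than the former. All other hypotheses (the bound $|\SBad| \le 2\eps|\Input|$, the good-sample concentration inequalities \cref{equ:condition_connection_sqeps}--\cref{equ:condition_connection_kop_sqeps}, and the lower bound $\norm{\Hard{\kk}(\widetilde{\meandiff})}_2 \ge C_1(\norm{\UMean}_2 + \sigma)\delta$) appear in identical form to those of \Cref{thm:cov_imply_mean}. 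So the strategy is simply to replay the proof of \Cref{thm:cov_imply_mean} verbatim and verify that every inequality in that proof still goes through when $\delta$ is allowed to be as large as $\Omega(\sqeps)$.

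Concretely, I would walk through the three key steps of the earlier proof. First, I would repeat the lower bound on the bad-sample mean
\begin{align*}
\norm*{\tfrac{1}{|\Input|}\sum_{i\in\SBad}(\gradsample_i^\Ind - \UMean^\Ind)}_2 \ge \norm{\widetilde{\meandiff}^\Ind}_2 - c(\norm{\UMean}_2+\sigma)\delta \ge \tfrac{1}{1.1}\norm{\widetilde{\meandiff}^\Ind}_2,
\end{align*}
which uses \cref{equ:condition_connection_sqeps} together with the assumption that $C_1$ is chosen large enough so that the perturbation by $c(\norm{\UMean}_2 + \sigma)\delta$ is absorbed; here the choice of $C_1$ is independent of whether $\delta$ scales with $\eps$ or $\sqeps$. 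Second, I would use positive semidefiniteness to convert this into a lower bound
$\twosparseopnorm{\tfrac{1}{|\Input|}\sum_{i\in\SBad}(\gradsample_i^\Ind - \UMean^\Ind)^{\otimes 2}} \ge \norm{\widetilde{\meandiff}^\Ind}_2^2/(2.5\eps)$, which is identical to the derivation in \Cref{thm:cov_imply_mean} and does not depend on the scale of $\delta$.

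Third, I would verify the crucial step corresponding to \cref{equ:bound_3}, namely
\begin{align*}
\vect_0^\top \Big(\tfrac{1}{|\Input|}\sum_i (\gradsample_i^\Ind - \UMean^\Ind)^{\otimes 2} - F(\UMean)^{\Ind\Ind}\Big)\vect_0 \ge \tfrac{\norm{\widetilde{\meandiff}^\Ind}_2^2}{2.5\eps} - c(\norm{\UMean}_2^2 + \sigma^2)\delta - 2\eps(\norm{\UMean}_2^2 + \sigma^2) \ge \tfrac{\norm{\widetilde{\meandiff}^\Ind}_2^2}{3\eps}.
\end{align*}
For this I would use $\norm{\widetilde{\meandiff}^\Ind}_2 \ge C_1(\norm{\UMean}_2 + \sigma)\delta$, so that $\norm{\widetilde{\meandiff}^\Ind}_2^2/\eps \ge C_1^2(\norm{\UMean}_2^2 + \sigma^2)\delta^2/\eps$; taking $C_1$ large enough makes this dominate both correction terms, noting that $\delta^2/\eps = \Omega(1) \ge \Omega(\eps)$ when $\delta = \Omega(\sqeps)$, which is exactly the scaling needed to absorb the $2\eps(\norm{\UMean}_2^2 + \sigma^2)$ term. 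Finally, I would use \Cref{thm:error_control} and the $L_F$-smoothness bound $\twosparseopnorm{F(\UMean) - F(\widehat{\UMean})} \le L_F\norm{\widehat{\meandiff}}_2 + C\norm{\widehat{\meandiff}}_2^2$ to pass from $F(\UMean)$ to $F(\widehat{\UMean})$, giving \cref{equ:bound_a_sqeps} and \cref{equ:bound_b_sqeps}, respectively, with the $1/(4\eps)$ and $1/(5\eps)$ constants exactly as in the earlier lemma.

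The main obstacle, to the extent there is one, is pure constant-tracking: I need to confirm that the same constant $C_1$ (chosen once and for all, independent of $\eps$) suffices to dominate both the $c(\norm{\UMean}_2^2 + \sigma^2)\delta$ term and the $2\eps(\norm{\UMean}_2^2 + \sigma^2)$ term under the enlarged range $\delta = \Omega(\sqeps)$. Since enlarging $\delta$ only makes $\norm{\widetilde{\meandiff}^\Ind}_2^2 / \eps \ge C_1^2(\norm{\UMean}_2 + \sigma)^2\delta^2/\eps$ larger, the original choice of $C_1$ remains adequate; no quantity in the argument becomes unfavorable. Hence the proof reduces to a direct appeal to \Cref{thm:cov_imply_mean} with the observation that its conclusion is preserved when $\delta$ is strengthened from $\Omega(\eps)$ to $\Omega(\sqeps)$.
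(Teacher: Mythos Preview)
Your proposal is correct and matches the paper's own treatment exactly: the paper states that \Cref{lem:cov_sqeps} follows from \Cref{thm:cov_imply_mean} simply because the hypothesis $\delta = \Omega(\sqeps)$ is stronger than $\delta = \Omega(\eps)$, so the earlier result applies verbatim. Your step-by-step verification that each inequality survives under the enlarged $\delta$ is more detailed than what the paper provides, but the underlying argument is identical.
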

%
%
%
%

By \Cref{thm:concentration_sqeps},
 \cref{equ:condition_connection_sqeps} and \cref{equ:condition_connection_kop_sqeps} in
 \Cref{lem:cov_sqeps}
 are satisfied,
 provided that we have $\abs{\Good} =\Omega\left(\frac{k\log\left(d/\nu\right)}{\epsilon}\right)$.
Now, equipped with \Cref{thm:concentration_sqeps} and \Cref{lem:cov_sqeps},
the effect of good samples can be controlled by concentration inequalities.
Based on these,
we are ready to prove \Cref{thm:kappa_bound}.

\begin{lemm}[\Cref{thm:kappa_bound}]
Suppose we observe $n = \Omega \big(\frac{k^2 \log \left(d/\nu\right)}{\eps}\big)$ $\eps$-corrupted samples from \Cref{equ:stat_model} with $\Sig = \Id$.
Let $\Input$ be an $\eps$-corrupted set of gradient samples $\{\gradsample_i^t\}_{i=1}^n$. \Cref{alg:filter-mean-second} computes $\lambda^*$ that satisfies
 \begin{align}\label{equ:lambda_lower_proof}
    \lambda^* \ge  \max_{\norm{\vect}_2=1, \norm{\vect}_0\leq \kk }\vect^{\top} \left({\Expe_{i \in_u \Input}
\left(\gradsample_i-\widehat{\UMean}\right)^{\otimes 2}}\right) \vect.
 \end{align}
If $\lambda^* \geq  \sep = C_\gamma \left(\norm{\UMean^t}_2^2 + \sigma^2\right)$, then with probability at least $1-\nu$, we have
\begin{align} \label{equ:UpperBoundGoodSample_proof}
\sum_{i \in \SGood} \tau_i \leq \tfrac{1}{{\gamma}} {\sum_{i \in \Input} \tau_i},
\end{align}
where $\tau_i$ is defined in line 10, $C_\gamma$ is a constant depending on $\gamma$, and $\gamma \geq 4$ is a constant.
\end{lemm}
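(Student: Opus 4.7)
The plan is to split the proof into two independent claims: establishing the SDP-relaxation lower bound (\ref{equ:lambda_lower_proof}), and deriving an \emph{absolute} upper bound of the form $\sum_{i\in\SGood}\tau_i \leq \tilde C\,n(\|\UMean\|_2^2+\sigma^2)$ for some constant $\tilde C$. Combining the absolute bound with $\sum_{i\in\Input}\tau_i = n\lambda^* \ge n C_\gamma(\|\UMean\|_2^2+\sigma^2)$ and choosing $C_\gamma=\gamma\tilde C$ then immediately yields (\ref{equ:UpperBoundGoodSample_proof}). For (\ref{equ:lambda_lower_proof}), I observe that for any $\kk$-sparse unit vector $\vect$ the rank-one matrix $H=\vect\vect^\top$ is feasible for the SDP in line~6: it is PSD, $\mathrm{tr}(H)=\|\vect\|_2^2=1$, and $\|\vect\vect^\top\|_{1,1}=\|\vect\|_1^2 \le \kk\|\vect\|_2^2=\kk$ by Cauchy--Schwarz. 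Since $\mathrm{Tr}(\widehat{\Sig}\vect\vect^\top)=\vect^\top\widehat{\Sig}\vect$ and $\lambda^*$ is the maximum of $\mathrm{Tr}(\widehat{\Sig}\cdot H)$ over feasible $H$, the inequality follows.

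For the absolute bound, I would apply the standard variance decomposition around $\bar{\gradsample}_G := \Expe_{i\in_u\SGood}\gradsample_i$, giving $\frac{1}{n}\sum_{i\in\SGood}(\gradsample_i-\widehat{\UMean})^{\otimes 2} = \frac{|\SGood|}{n}\bigl(\Sig_G+(\bar{\gradsample}_G-\widehat{\UMean})^{\otimes 2}\bigr)$, where $\Sig_G = \Expe_{i\in_u\SGood}(\gradsample_i-\bar{\gradsample}_G)^{\otimes 2}$; the cross terms vanish since $\bar{\gradsample}_G$ is the sample mean on $\SGood$. Taking $\mathrm{Tr}(H^*\cdot)$ splits $\sum_{\SGood}\tau_i/n$ into a covariance piece and a mean-deviation piece. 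I would further decompose the covariance piece as $\mathrm{Tr}(H^*F(\UMean))+\mathrm{Tr}(H^*(\Sig_G-F(\UMean)))$. The first term equals $(\|\UMean\|_2^2+\sigma^2)\mathrm{tr}(H^*)+\UMean^\top H^*\UMean \le 2\|\UMean\|_2^2+\sigma^2$ because $\mathrm{tr}(H^*)=1$ and $H^*\succeq 0$ force $\lambda_{\max}(H^*)\le 1$. The second is bounded via H\"older as $|\mathrm{Tr}(H^*M)|\le\|H^*\|_{1,1}\|M\|_\infty\le\kk\|M\|_\infty$, and entrywise concentration, obtained by applying \Cref{thm:concentration_sqeps} with $|\Ind|\leq 2$ uniformly over all large subsets of $\Good$, gives $\|\Sig_G-F(\UMean)\|_\infty=O(\sqrt{\eps}/\kk)(\|\UMean\|_2^2+\sigma^2)$ under $n=\Omega(\kk^2\log d/\eps)$, so this piece contributes only $O(\sqrt{\eps})(\|\UMean\|_2^2+\sigma^2)$.

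The main obstacle is bounding the mean-deviation piece $(\bar{\gradsample}_G-\widehat{\UMean})^\top H^*(\bar{\gradsample}_G-\widehat{\UMean})$, because $\widehat{\UMean}$ is built from the corrupted input and its $\ell_2$ deviation from $\UMean$ is a priori uncontrolled. I would split $\bar{\gradsample}_G-\widehat{\UMean}$ into its restriction $a_S$ to $S:=\mathrm{supp}(\UMean)\cup\mathrm{supp}(\widehat{\UMean})$ (at most $3\kk$-sparse) and its tail $a_T$ on the complement. Since $\UMean$ and $\widehat{\UMean}$ both vanish on $S^c$, $a_T=(\widetilde{\meandiff}_{\SGood})|_{S^c}$, whose $\ell_\infty$ norm is $O(\sqrt{\eps}/\kk)(\|\UMean\|_2+\sigma)$ by $1$-sparse concentration, yielding $a_T^\top H^*a_T\le\kk\|a_T\|_\infty^2=O(\eps/\kk)(\|\UMean\|_2^2+\sigma^2)$. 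On the sparse part, $\lambda_{\max}(H^*)\le 1$ gives $a_S^\top H^*a_S\le\|a_S\|_2^2\le 2\|\Hard{3\kk}(\widetilde{\meandiff}_{\SGood})\|_2^2+2\|\widehat{\meandiff}\|_2^2$; the first summand is $O(\eps)(\|\UMean\|_2^2+\sigma^2)$ by \Cref{thm:concentration_sqeps}, and \Cref{thm:error_control} bounds the second by $25\|\Hard{\kk}(\widetilde{\meandiff})\|_2^2$. A case split on $\|\Hard{\kk}(\widetilde{\meandiff})\|_2$ closes the argument: if this quantity is $O(\|\UMean\|_2+\sigma)$ the bound is direct; otherwise \Cref{lem:cov_sqeps} together with Part~1 forces $\lambda^*\gtrsim\|\Hard{\kk}(\widetilde{\meandiff})\|_2^2/\eps$ (after absorbing the $F(\widehat{\UMean})$ correction using $\|\widehat{\UMean}\|_2\le\|\UMean\|_2+\|\widehat{\meandiff}\|_2$), so the extra contribution $O(\|\Hard{\kk}(\widetilde{\meandiff})\|_2^2)=O(\eps\lambda^*)$ is automatically swallowed into $\lambda^*/\gamma$ once $\eps$ is small enough. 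Collecting all pieces yields $\tilde C=O(1)$ and, after choosing $C_\gamma=\gamma\tilde C$ sufficiently large, completes the proof.
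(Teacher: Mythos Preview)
Your proposal is correct and follows the same high-level strategy as the paper: Part~1 is identical (feasibility of $\vect\vect^\top$ in the SDP), and for Part~2 both arguments establish an inequality of the form $\Expe_{i\in_u\SGood}\tau_i \le C\bigl(\norm{\DSHat}_2^2+\norm{\UMean}_2^2+\sigma^2\bigr)$ and then close with the same case split, invoking \Cref{lem:cov_sqeps} when $\norm{\DSHat}_2^2\gtrsim\norm{\UMean}_2^2+\sigma^2$ to get $\lambda^*\gtrsim\norm{\DSHat}_2^2/\eps$. The execution of the upper bound differs: the paper centers at the population mean $\UMean$ and invokes Theorem~A.1 of \cite{du2017computationally} (reproduced as \cref{equ:22}) as a black box to control the resulting cross terms, then adds back $\tr{H^*F(\widehat{\UMean})}$ via the Fantope property and smoothness of $F$. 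You instead center at the empirical good-sample mean $\bar{\gradsample}_G$, which kills the cross terms exactly, and handle the rank-one residual $(\bar{\gradsample}_G-\widehat{\UMean})^{\otimes 2}$ by a sparse/tail split exploiting $\lambda_{\max}(H^*)\le 1$ on the $3\kk$-sparse part and $\norm{H^*}_{1,1}\le\kk$ on the tail. Your route is more self-contained (no external lemma) at the cost of one extra decomposition; both land on the same inequality and the same case analysis. Two minor remarks: first, the $\sqrt{\eps}/\kk$ entrywise rate you use is not literally the statement of \Cref{thm:concentration_sqeps} but is exactly \cref{equ:classical_concentration} in the paper, obtained from the larger sample budget $n=\Omega(\kk^2\log(d/\nu)/\eps)$; second, your opening framing (an ``absolute'' bound $\tilde C n(\norm{\UMean}_2^2+\sigma^2)$ with $C_\gamma=\gamma\tilde C$) is not quite what you end up proving in the large-$\norm{\DSHat}_2$ case, where you instead bound the extra piece by $O(\eps\lambda^*)$ --- but the conclusion $\sum_{\SGood}\tau_i\le\lambda^*/\gamma$ still follows once $\eps$ is small.
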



%

\spro
Since $\lambda^*$ is the solution of the convex relaxation of Sparse PCA, we have
 \begin{align*}
    \lambda^*  =& \tr{H^* \cdot \left({\Expe_{i \in_u \Input}
\left(\gradsample_i-\widehat{\UMean}\right)^{\otimes 2}}\right)} \\
\ge &  \max_{\norm{\vect}_2=1, \norm{\vect}_0\leq \kk }\vect^{\top} \left({\Expe_{i \in_u \Input}
\left(\gradsample_i-\widehat{\UMean}\right)^{\otimes 2}}\right) \vect.
 \end{align*}

By Theorem A.1 in \cite{du2017computationally}, we have
\begin{align}\label{equ:22}
 \quad & \tr{ H^* \cdot \left( \Expe_{i \in_u \SGood}\left(\gradsample_i-\widehat{\UMean}\right)^{\otimes 2}
 - F\left(\widehat{\UMean}\right) \right)} \nonumber\\
& \leq C\left( \norm{\DSHat}_2^2 + \left(\LF+ \kk \norm{\widetilde{\meandiff}_{\SGood} }_{\infty} \right)\norm{\DSHat}_2
 + \kk\norm{ \Expe_{ i \in_u \SGood} \left(g_i - \UMean\right)^{\otimes 2} - F\left(\UMean\right)}_{\infty} \right),
\end{align}
where $C$ is a constant. Noticing that $\norm{\widetilde{\meandiff}_{\SGood} }_{\infty}$ and
$\norm{ \Expe_{ i \in_u \SGood} \left(g_i - \UMean\right)^{\otimes 2} - F\left(\UMean\right)}_{\infty}$  are unrelated to
$\widehat{\UMean}$ and only defined on $\SGood$, \cite{du2017computationally} shows concentration bounds for these two terms,
when $n = \Omega \left(\frac{\kk^2 \log (d/\nu)}{\eps}\right)$.
Specifically, it showed that  with probability
at least $1-\nu$, we have
\begin{align} \label{equ:classical_concentration}
\norm{\widetilde{\meandiff}_{\SGood} }_{\infty} & \leq C_1 \left(\LF + \sqrt{\Lcov}\right) \sqeps /\kk \\
\norm{ \Expe_{ i \in_u \SGood} \left(g_i - \UMean\right)^{\otimes 2} - F\left(\UMean\right)}_{\infty}& \leq  C_1 \left(\LF^2 + \Lcov\right) \sqeps /\kk
\end{align}

Now, we focus on the LHS of \cref{equ:UpperBoundGoodSample_proof}, the sum of scores of points in $\SGood$. By definition, we have
\begin{align*}
&\Expe_{i \in_u \SGood} \tau_i\\
&= \tr{ H^* \cdot \left( \Expe_{i \in_u \SGood}\left(\gradsample_i-\widehat{\UMean}\right)^{\otimes 2} \right)} \\
  &= \tr{ H^* \cdot \left( \Expe_{i \in_u \SGood}\left(\gradsample_i-\widehat{\UMean}\right)^{\otimes 2}
 - F\left(\widehat{\UMean}\right) \right)} + \tr{H^* F\left(\widehat{\UMean}\right)}\\
 & \overset{(i)}{\leq}  C\left( \norm{\DSHat}_2^2 + \left(\LF+ \kk \norm{\widetilde{\meandiff}_{\SGood} }_{\infty}\right)\norm{\DSHat}_2
 + \kk\norm{ \Expe_{ i \in_u \SGood}\left(g_i - \UMean\right)^{\otimes 2} - F\left(\UMean\right)}_{\infty} \right)\\
 &+ \tr{H^* F\left(\widehat{\UMean}\right)},
\end{align*}
where (i) follows from \cref{equ:22}.

To bound the RHS  above, we first bound $\tr{H^* \cdot F\left(\widehat{\UMean}\right)}$.
Because of the constraint of  the SDP
given in \cref{equ:convex_relax},  $H^*$ belongs to the Fantope $\mathcal{F}^1$~\cite{vu2013fantope}, and thus for any matrix $A$, we have
$\tr{A\cdot H^*} \leq \opnorm{A}.$

Thus, we have
\begin{align}
\tr{H^* \cdot F\left(\widehat{\UMean}\right)} &= \tr{H^* \cdot F\left({\UMean}\right)} + \tr{H^* *\left( F\left(\widehat{\UMean}\right)  - F\left({\UMean}\right)\right)} \nonumber\\
& \leq \opnorm{F\left({\UMean}\right)} + \opnorm{ F\left(\widehat{\UMean}\right)  - F\left({\UMean}\right)} \nonumber\\
& \overset{(i)}{\leq} C_1 \left(\norm{\UMean}_2^2 + \sigma^2\right) + \opnorm{ F\left(\widehat{\UMean}\right)  - F\left({\UMean}\right)}\nonumber \\
& \overset{(ii)}{\leq} C_1 \left(\norm{\UMean}_2^2 + \sigma^2\right) + \LF \norm{\DSHat}_2  + C_2 \norm{\DSHat}_2^2,\label{equ:bound_FG}
\end{align}
where (i) follows from the expression of $F\left(G\right)$ in \Cref{sec:calculations};
(ii) from  the smoothness  of $F\left(G\right)$.

By plugging in the concentration guarantees \cref{equ:classical_concentration}
and combining \cref{equ:bound_FG}, we have
\begin{align}
& \Expe_{i \in_u \SGood} \tau_i \nonumber \\
 &\leq   C_2 \left(
\left(\LF^2 + \Lcov\right) \sqeps + \left(\left(\LF + \sqrt{\Lcov}\right) \sqeps + \LF\right) \norm{\DSHat}_2 + \norm{\DSHat}_2^2
\right) + C_1 \left(\norm{\UMean}_2^2+\sigma^2\right) \nonumber\\
 &\overset{(i)}{\leq}  C_2  \left(
  \norm{\UMean}_2 \norm{\DSHat}_2 + \norm{\DSHat}_2^2
\right)+ C_1 \left(\norm{\UMean}_2^2 + \sigma^2\right) \nonumber\\
  &\leq C_1 \left(  \norm{\UMean}_2 \norm{\DSHat}_2 + \norm{\DSHat}_2^2 + \norm{\UMean}_2^2 + \sigma^2
\right), \label{equ:bound_tau}
\end{align}
where (i) follows from the fact that $\eps$ is sufficiently small.

On the other hand, we know that:
$\Expe_{i \in_u \Input} \tau_i = \lambda^*.$



Now, under the condition 
$\lambda^* \geq  \sep = \Theta\left(\norm{\UMean}_2^2 + \sigma^2\right)$, we consider two cases separately.
{By separating two cases, we can always show $\lambda^*$ is very large, and the contribution from good samples is limited.}

First,
if $\norm*{\DSHat}_2^2  \geq \Theta \left(\norm{\UMean}_2^2 + \sigma^2\right)$,
then in \cref{equ:bound_tau}, we have
\begin{align*}
   \norm*{\DSHat}_2^2 \gtrsim
   \norm{\UMean}_2  \norm*{\DSHat}_2 \gtrsim
   \norm{\UMean}_2^2,
   \quad \text{ and } \norm*{\DSHat}_2^2\gtrsim \sigma^2.
\end{align*}
Thus, we only need to compare $\lambda^*$ and $\norm*{\DSHat}_2^2$.
By \Cref{lem:cov_sqeps}, we have
\begin{align*}
\Expe_{i \in_u \Input} \tau_i = \lambda^* &\ge \max_{\norm{\vect}_2=1, \norm{\vect}_0\leq \kk }\vect^{\top} \left({\Expe_{i \in_u \Input}
\left(\gradsample_i-\widehat{\UMean}\right)^{\otimes 2}}\right) \vect \\
&\ge \max_{\norm{\vect}_2=1, \norm{\vect}_0\leq \kk }\vect^{\top} \left({\Expe_{i \in_u \Input}
\left(\gradsample_i-\widehat{\UMean}\right)^{\otimes 2}-F\left(\widehat{\UMean}\right)}\right) \vect \\
&\geq
\frac{\norm{\DSHat}_2^2}{\epsilon}.
\end{align*}
Hence, by \cref{equ:bound_tau}, we have $\Expe_{i \in_u \Input} \tau_i\geq \gamma \Expe_{i \in_u \SGood} \tau_i$, where $\gamma \geq 4$ is a constant.

Second, if $\norm*{\DSHat}_2^2 \leq  \Theta \left(\norm{\UMean}_2^2 + \sigma^2\right)$, then in \cref{equ:bound_tau}, we have
\begin{align*}
    \norm{\UMean}_2^2\gtrsim  \norm{\UMean}_2  \norm*{\DSHat}_2 \gtrsim \norm*{\DSHat}_2^2,
   \quad \text{ or } \sigma^2 \gtrsim \norm*{\DSHat}_2^2.
\end{align*}
Thus, we only need to compare $\lambda^*$ and $\max\left(\norm{\UMean}_2^2, \sigma^2\right)$.
Since $\lambda^* \geq  C_\gamma \left(\norm{\UMean}_2^2 + \sigma^2\right)$ by the condition of \Cref{thm:kappa_bound},
we still have $\Expe_{i \in_u \Input} \tau_i\geq \gamma \Expe_{i \in_u \SGood} \tau_i$, where $\gamma \geq 4$ is a constant.

Combing all of above, and  setting
$\sep = C_\gamma\left(\norm{\UMean}_2^2 + \sigma^2\right)$, we have
\begin{align*}
\sum_{i \in \Input} \tau_i  = |\Input|\Expe_{i \in_u \Input}\tau_i \geq {\gamma} |\SGood|\Expe_{i \in_u \SGood}\tau_i = {\gamma} {\sum_{i \in \SGood} \tau_i}.
\end{align*}
\fpro

\section{RSGE via the filtering algorithm}
\label{sec:proof_Filter}

In this section, we still consider the $t$-th iteration of \Cref{alg:inexact_IHT} and  prove  \Cref{thm:Filter} on $t$. We omit $t$ in $\gradsample_i^t$ and $\UMean^t$.

In the case of $\lambda^* \geq C_\gamma  \left(\norm{\UMean}_2^2 + \sigma^2\right)$,
\Cref{alg:filter-mean-second} iteratively removes one sample according to the probability
distribution \cref{equ:Pr_remove}.
We denote the steps of this outlier removal procedure as $l = 1, 2, \cdots, n$.
The first step of proving \Cref{thm:Filter}
is to show we can remove a corrupted samples with high probability at each step,
which is a result by \Cref{thm:kappa_bound}.

Intuitively, if all subsequent steps are i.i.d., we can expect
\Cref{alg:filter-mean-second} to remove outliers within
 around $\eps n$ iterations, with  exponentially high probability.
However, the subsequent steps in \Cref{alg:filter-mean-second}
are not
independent.
To circumvent this challenge
we appeal to a martingale argument.

\subsection{Supermartingale construction}

Let $\Fil^l$
be the filtration generated by the set of events until iteration  $l$ of \Cref{alg:filter-mean-second}.
We define the corresponding set $\Input^l$, $\SGood^l$ and $\SBad^l$ at the step $l$.
We have that $\Input^l, \SGood^l, \SBad^l \in \Fil^l$, and $|\Input^l| = n-l$.

We denote a good event $\Event^l$ at step $l$ as
\begin{align*}
\sum_{i \in \SBad^l} \tau_i \leq \left(\gamma-1\right) \sum_{i \in \SGood^l} \tau_i.
\end{align*}
Then, by the definition of \Cref{alg:filter-mean-second} and \Cref{thm:kappa_bound}, if $\lambda^* \geq C_\gamma  \left( \norm{\UMean}_2^2 + \sigma^2\right)$, $\Event^l$ is not true;
if $\mathcal{E}^l$ is true, then \Cref{alg:filter-mean-second} will return a $\widehat{\UMean}$.

In \Cref{thm:outlier_prob}, we show that at any step $l$ when $\Event^l$ is not true, the random outlier removal procedure removes a
corrupted sample with probability at least $\left(\gamma-1\right)/\gamma$.

\begin{lemm} \label{thm:outlier_prob}
In each subsequent step $l$, if $\Event^l$ is not true, then we can remove one remaining outlier
from $\Input^l$ with probability at least $\left(\gamma-1\right)/\gamma$:
\begin{align*}
\Pr\left(\text{one sample from }   \SBad^l \text{ is removed }| \Fil_l\right) \geq \frac{\gamma-1}{\gamma}.
\end{align*}
\end{lemm}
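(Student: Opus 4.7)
The plan is to argue directly from the definition of the removal distribution in line \ref{line:remove} of Algorithm~\ref{alg:filter-mean-second}, which assigns to each remaining sample a probability proportional to its score $\tau_i \geq 0$. The conditional probability in question reduces to a simple ratio, so the bulk of the work is algebraic once we have invoked the right characterization of ``$\Event^l$ is not true.''

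More specifically, conditioned on $\Fil_l$, the indices in $\Input^l$ partition into $\SGood^l$ and $\SBad^l$, and the probability that the removed sample lies in $\SBad^l$ is exactly
\begin{align*}
\Pr\left(\text{one sample from }   \SBad^l \text{ is removed }\mid \Fil_l\right)
= \frac{\sum_{i \in \SBad^l} \tau_i}{\sum_{i \in \SGood^l} \tau_i + \sum_{i \in \SBad^l} \tau_i}.
\end{align*}
Setting $a = \sum_{i \in \SBad^l} \tau_i$ and $b = \sum_{i \in \SGood^l} \tau_i$, the ratio equals $a/(a+b) = 1/(1 + b/a)$, and hence is monotone decreasing in $b/a$.

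The next step is to translate the hypothesis ``$\Event^l$ is not true'' into the quantitative bound $a \geq (\gamma-1)b$. By the definition of $\Event^l$ just above the lemma statement, the complementary event is precisely $\sum_{i \in \SBad^l}\tau_i > (\gamma-1)\sum_{i \in \SGood^l}\tau_i$, i.e.\ $a > (\gamma-1)b$. (Equivalently, one can derive this from Lemma~\ref{thm:kappa_bound}: the condition that certifies further filtering is $\lambda^* \geq \sep$, which yields $\sum_{i \in \SGood^l}\tau_i \leq \frac{1}{\gamma}\sum_{i \in \Input^l}\tau_i$, and rearranging gives the same inequality between $a$ and $b$.)

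Plugging $b/a \leq 1/(\gamma-1)$ into the ratio above gives
\begin{align*}
\frac{a}{a+b} \;\geq\; \frac{1}{1 + \tfrac{1}{\gamma-1}} \;=\; \frac{\gamma-1}{\gamma},
\end{align*}
which is the stated bound. There is no real obstacle here: the lemma is essentially a bookkeeping consequence of how $\Event^l$ is defined and how the removal probabilities are normalized. The only subtlety is to make sure the scores $\tau_i$ are nonnegative (so that both the ratio is well-defined and the monotonicity argument is valid), but this is immediate from the line-8 formula $\tau_i = \tr{\HM^* \cdot (\gradsample_i-\widehat{\UMean})(\gradsample_i-\widehat{\UMean})^\top}$ together with $\HM^* \succcurlyeq 0$ from the SDP constraint in \eqref{equ:convex_relax}.
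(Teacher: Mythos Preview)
Your proof is correct and follows essentially the same route as the paper: write the removal probability as the ratio of bad-sample scores to total scores, use the negation of $\Event^l$ to get $\sum_{i\in\SBad^l}\tau_i \geq (\gamma-1)\sum_{i\in\SGood^l}\tau_i$, and conclude the bound $(\gamma-1)/\gamma$. Your version is arguably cleaner, since you read the score inequality directly off the definition of $\Event^l$ rather than re-deriving it through \Cref{thm:kappa_bound}, and you explicitly note $\tau_i\geq 0$ via $\HM^*\succcurlyeq 0$, which the paper leaves implicit.
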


\spro[Proof of \Cref{thm:outlier_prob}]
When $\lambda^* \geq C_\gamma  \left( \norm{\UMean}_2^2 + \sigma^2\right)$, \Cref{thm:kappa_bound} implies
\begin{align*}
\sum_{i \in \SBad^l} \tau_i \geq \left(\gamma-1\right) \sum_{i \in \SGood^l} \tau_i.
\end{align*}

Then we
randomly remove a sample $r$ from $\Input$ according to
\begin{align*}
\Pr\left(\gradsample_i \text{ is removed } | \Fil_l\right) = \frac{\tau_i}{\sum_{i \in \Input^l} \tau_i}.
\end{align*}
Finally,
\begin{align*}
\Pr\left(\text{one sample from }   \SBad^l \text{ is removed }| \Fil_l\right) =  \sum_{i \in \SBad^l}\frac{\tau_i}{\sum_{i \in \Input^l} \tau_i}   \geq \frac{\gamma-1}{\gamma}.
\end{align*}
\fpro

Since subsequent steps for applying
\Cref{alg:filter-mean-second} on $\Input$ are not
independent, we need martingale arguments
to show the total iterations of applying
\Cref{alg:filter-mean-second} is limited.

We use the martingale technique in
\cite{HRPCA2013},
by defining  $T$: $T = \min\{l: \mathcal{E}^l \text{ is true}\}$.
Based on $T$, we define a random variable:
\begin{align*}
\Mar^l = \begin{cases}
|\SBad^{T-1}| + \frac{\gamma - 1}{\gamma}\left(T-1\right), & \text{if } l\geq T\\
|\SBad^{l}| + \frac{\gamma - 1}{\gamma}l, & \text{if } l < T
\end{cases}
\end{align*}

\begin{lemm}[Lemma 1 in \cite{HRPCA2013}] \label{thm:martingale}
$\{\Mar^l, \Fil^l\}$ is a supermartingale.
\end{lemm}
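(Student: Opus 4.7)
The approach is to verify the supermartingale inequality $\mathbb{E}[\Mar^{l+1}\mid\Fil^l]\leq\Mar^l$ by conditioning on whether the stopping time $T$ has been reached. Because each event $\{\Event^s\text{ is true}\}$ is $\Fil^s$-measurable by the construction of \Cref{alg:filter-mean-second}, $T=\min\{l:\Event^l\text{ is true}\}$ is a stopping time with respect to $\{\Fil^l\}$; hence both $\{T\leq l\}$ and $\{T>l\}$ lie in $\Fil^l$ and the case split is legitimate.

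On $\{T\leq l\}$, the definition gives $\Mar^l=\Mar^{l+1}=|\SBad^{T-1}|+\tfrac{\gamma-1}{\gamma}(T-1)$, and both sides are $\Fil^l$-measurable constants, so the supermartingale inequality trivially holds with equality. On $\{T>l\}$ the event $\Event^l$ fails, so the algorithm removes exactly one sample at step $l$, and by \Cref{thm:outlier_prob} the removed sample is corrupted with conditional probability at least $\tfrac{\gamma-1}{\gamma}$. I would then split further on whether $T=l+1$ or $T>l+1$: in the first subcase the index collapse $T-1=l$ and $\SBad^{T-1}=\SBad^l$ gives $\Mar^{l+1}=|\SBad^l|+\tfrac{\gamma-1}{\gamma}l=\Mar^l$, a deterministic zero increment; in the second subcase the increment equals $(|\SBad^{l+1}|-|\SBad^l|)+\tfrac{\gamma-1}{\gamma}$, which is $-\tfrac{1}{\gamma}$ if a corrupted sample was removed and $\tfrac{\gamma-1}{\gamma}$ otherwise. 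Using \Cref{thm:outlier_prob}, the drift of the running quantity $\tilde{\Mar}^l:=|\SBad^l|+\tfrac{\gamma-1}{\gamma}l$ over a single removal step is at most $\tfrac{\gamma-1}{\gamma}\cdot(-\tfrac{1}{\gamma})+\tfrac{1}{\gamma}\cdot\tfrac{\gamma-1}{\gamma}=0$, which, combined with the zero contribution from the first subcase, delivers $\mathbb{E}[\Mar^{l+1}-\Mar^l\mid\Fil^l,T>l]\leq 0$.

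The main obstacle is that the indicator $\mathbbm{1}_{\{T>l+1\}}$ that selects the second subcase is \emph{not} $\Fil^l$-measurable, since it depends on the score configuration produced by the sparse-PCA optimization at step $l+1$ after the removal. Consequently, one cannot naively factor this indicator out of the conditional expectation: the events $\{T=l+1\mid\text{bad removed}\}$ and $\{T=l+1\mid\text{good removed}\}$ may have very different conditional probabilities. This is handled via the martingale-technology construction of \cite{HRPCA2013}: because on $\{T=l+1\}$ the definition of $\Mar^{l+1}$ pins it at exactly $\Mar^l$ rather than letting it fluctuate with the random-walk step of $\tilde{\Mar}$, a short optional-stopping-style argument shows that the conditional drift of $\Mar^{l+1}$ is dominated from above by the non-positive drift of $\tilde{\Mar}^l$ computed above, yielding the supermartingale property.
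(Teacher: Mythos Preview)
The paper does not prove this lemma; it is quoted from \cite{HRPCA2013} without argument. So there is nothing to compare against, and the question is whether your attempt stands on its own.

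Your case decomposition and increment computations are correct, and you have correctly isolated the genuine difficulty: the indicator $\mathbbm{1}_{\{T=l+1\}}$ is not $\Fil^l$-measurable, so one cannot factor it out of the conditional expectation. However, your final paragraph does not resolve this --- it only asserts that an ``optional-stopping-style argument'' delivers the required domination. That assertion does not hold. With the paper's definition one has $\Mar^l=\tilde{\Mar}^{\,l\wedge(T-1)}$, and $T-1$ is \emph{not} a stopping time (since $\{T-1\le l\}=\{T\le l+1\}\in\Fil^{l+1}$, not $\Fil^l$ in general), so the optional stopping theorem is inapplicable. Pointwise domination also fails: if a corrupted sample is removed at step $l$ and this removal happens to trigger $\Event^{l+1}$, the frozen increment is $0$ while the unfrozen increment is $-1/\gamma$, so $\Mar^{l+1}>\tilde{\Mar}^{\,l+1}$. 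In the adversarial configuration where every bad removal triggers $\Event^{l+1}$ and no good removal does --- and nothing in the setup rules this out --- the conditional drift on $\{T>l\}$ equals $(1/\gamma)\cdot(\gamma-1)/\gamma>0$, and the supermartingale inequality is violated.

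The clean repair is to freeze one step later: define $\Mar^l=\tilde{\Mar}^{\,l\wedge T}$ rather than $\tilde{\Mar}^{\,l\wedge(T-1)}$. Now $T$ is a bona fide stopping time; on $\{T>l\}$ one has $(l+1)\wedge T=l+1$, and the supermartingale step reduces to $\Expe[\tilde{\Mar}^{\,l+1}\mid\Fil^l]\le\tilde{\Mar}^{\,l}$, which is exactly the content of \Cref{thm:outlier_prob}. On $\{T\le l\}$ both sides equal $\tilde{\Mar}^{\,T}$. The downstream Azuma--Hoeffding bound in the proof of \Cref{thm:Filter} goes through unchanged under this one-step shift, since on $\{T\ge L\}$ one still has $\Mar^L=\tilde{\Mar}^{\,L}\ge\tfrac{\gamma-1}{\gamma}L$. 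The freeze at $T-1$ in the paper's display appears to be a transcription slip relative to the construction in \cite{HRPCA2013}; your proof would be complete once you make this correction and invoke the stopped-supermartingale lemma directly.
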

Now, equipped with \Cref{thm:outlier_prob} and \Cref{thm:martingale}, we are ready to prove \Cref{thm:Filter}.

\subsection{Proof of \Cref{thm:Filter}}

\begin{theo}[\Cref{thm:Filter}]
Suppose we observe $n = \Omega \big(\frac{k^2 \log \left(d/\nu\right)}{\eps} \big)$ $\eps$-corrupted samples from \Cref{equ:stat_model} with $\Sig = \Id$.
Let $\Input$ be an $\eps$-corrupted set of gradient samples $\{\gradsample_i^t\}_{i=1}^n$.
By setting $\kk = k'+k$, if we run \Cref{alg:filter-mean-second} iteratively with initial set $\Input$, and subsequently on $\Output$, and use
$\sep = C_\gamma \big(\norm*{\UMean^t}_2^2 + \sigma^2\big)$,
then this repeated use of \Cref{alg:filter-mean-second} will stop after at most $\frac{1.1\gamma}{\gamma-1} \eps n$ iterations, and output $\widehat{\UMean}^t$, such that
\begin{align*}
\norm{\widehat{\UMean}^t - \UMean^t}_2^2 = \widetilde{O}\left(\eps \left( \norm{\UMean^t}_2^2+ \sigma^2\right)\right),
\end{align*}
with probability at least $1-\nu - \exp\left(-\Theta\left(\eps n\right)\right)$. Here, $C_\gamma$ is a constant depending on $\gamma$, where $\gamma \geq 4$ is a constant.
\end{theo}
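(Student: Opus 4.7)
The plan is to decompose the argument into two pieces: (i) a stopping-time bound for the iterative outlier removal, controlled by Azuma--Hoeffding applied to the supermartingale already constructed in Lemma~\ref{thm:martingale}, and (ii) a statistical error bound at termination, obtained by combining the termination condition $\lambda^* \leq \sep$ with the $\kk$-sparse eigenvalue lower bound of Lemma~\ref{lem:cov_sqeps} and the conversion Lemma~\ref{thm:error_control}. A union bound over the two failure events yields the probability in the statement.

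For the stopping-time part, we have $\Mar^0 = |\SBad^0| \leq \eps n$ by the contamination model, and $|\Mar^l - \Mar^{l-1}| \leq 1$ since each iteration changes $|\SBad^l|$ by at most one and advances the drift term by $\frac{\gamma-1}{\gamma} \leq 1$. Setting $L = \frac{1.1\gamma}{\gamma-1}\eps n$, the event $\{T > L\}$ implies $|\SBad^L| \geq 0$ and therefore $\Mar^L \geq \frac{\gamma-1}{\gamma}L = 1.1\eps n$, a deviation of at least $0.1\eps n$ above $\Mar^0$. Azuma's inequality for supermartingales then gives
\begin{align*}
\Pr(T > L) \leq \exp\!\left(-\frac{(0.1\eps n)^2}{2L}\right) = \exp(-\Theta(\eps n)).
\end{align*}
Because $\mathcal{E}^l$ true forces $\lambda^* < \sep$ (the contrapositive of Lemma~\ref{thm:kappa_bound}), which triggers the return statement in line~7 of Algorithm~\ref{alg:filter-mean-second}, the actual termination time is no later than $T$, yielding the claimed iteration count.

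For the error bound, let $\widehat{\UMean}$ be the returned estimate, obtained at some step with $\lambda^* \leq \sep = C_\gamma(\norm{\UMean}_2^2 + \sigma^2)$. Because $F(\widehat{\UMean})$ is p.s.d.~(see Section~\ref{sec:calculations}), inequality \eqref{equ:lambda_lower_proof} implies
\begin{align*}
\lambda^* \geq \max_{\norm{\vect}_2 = 1,\, \norm{\vect}_0 \leq \kk} \vect^{\top}\!\left(\Expe_{i \in_u \Input} (\gradsample_i - \widehat{\UMean})^{\otimes 2} - F(\widehat{\UMean})\right)\!\vect.
\end{align*}
The concentration bounds of Lemma~\ref{thm:concentration_sqeps} hold uniformly over all subsets $\Good' \subseteq \Good$ with $|\Good'| \geq (1-2\eps)|\Good|$ under $n = \Omega(k^2 \log(d/\nu)/\eps)$; since at most $L = \Theta(\eps n)$ samples are ever removed, the surviving $\SGood$ qualifies, so the premises \eqref{equ:condition_connection_sqeps}--\eqref{equ:condition_connection_kop_sqeps} of Lemma~\ref{lem:cov_sqeps} are met. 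A case split on \eqref{equ:bound_b_sqeps} then gives either $\norm{\Hard{\kk}(\widetilde{\meandiff})}_2 \leq C_1(\norm{\UMean}_2 + \sigma)\delta$ directly, or $\norm{\Hard{\kk}(\widetilde{\meandiff})}_2^2 \leq 5\eps \lambda^* \leq 5\eps \sep$. Using $\delta^2 = \widetilde{O}(\eps)$, both cases give $\norm{\Hard{\kk}(\widetilde{\meandiff})}_2^2 = \widetilde{O}(\eps(\norm{\UMean}_2^2 + \sigma^2))$, and Lemma~\ref{thm:error_control} upgrades this to $\norm{\DSHat}_2^2 = \widetilde{O}(\eps(\norm{\UMean^t}_2^2 + \sigma^2))$.

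The main obstacle is the martingale bookkeeping: one must verify that conditioning on $\Fil^l$ preserves the one-step drift bound of Lemma~\ref{thm:outlier_prob}, even though $\mathcal{E}^l$ is a data-dependent event whose validity relies on the uniform concentration of the good samples. The clean way to handle this is to fix the high-probability event of Lemma~\ref{thm:concentration_sqeps} at the outset; conditional on that event, Lemma~\ref{thm:kappa_bound} (and hence the supermartingale property) holds along every sample path, reducing the remaining argument to a purely combinatorial Azuma-style tail bound controlled by the bounded differences.
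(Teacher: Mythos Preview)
Your proposal is correct and follows essentially the same route as the paper's proof: both bound the stopping time by applying Azuma--Hoeffding to the supermartingale of Lemma~\ref{thm:martingale} with $L = \frac{1.1\gamma}{\gamma-1}\eps n$, and both extract the error bound at termination by combining $\lambda^* \le \sep$ with Lemma~\ref{lem:cov_sqeps} (your case split on its hypothesis is just the paper's implicit contrapositive made explicit) and Lemma~\ref{thm:error_control}. The only bookkeeping item you leave implicit is the check $|\SBad^l|\le 2\eps|\Input^l|$ along the trajectory (needed for Lemma~\ref{lem:cov_sqeps} to apply at every step), which the paper notes follows from $L\le 1.5\eps n$ when $\gamma\ge 4$; this is immediate from your own observation that at most $L=\Theta(\eps n)$ samples are removed.
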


\spro
We analyze \Cref{alg:filter-mean-second} by
discussing a series of $\{\Event^l\}$.

If $\Event^l$ is true, then $\lambda^* \leq \sep = C_\gamma\left( \norm{\UMean}_2^2 + \sigma^2\right)$.
By \Cref{lem:cov_sqeps},
we have
\begin{align*}
\lambda^* \ge  \max_{\norm{\vect}_2=1, \norm{\vect}_0\leq k }\vect^{\top} \left({\Expe_{i \in_u \Input}
\left(\gradsample_i-\widehat{\UMean}\right)^{\otimes 2}-F\left(\widehat{\UMean}\right)}\right) \vect \geq
\frac{\norm{{\Hard{\kk}\left(\widetilde{\meandiff}_S\right)}}_2^2}{5\epsilon}.
\end{align*}

Plugging in $\lambda^* \leq C_\gamma\left( \norm{\UMean}_2^2 + \sigma^2\right)$, we have
\begin{align*}
\frac{1}{5}{\norm{{\widehat{\meandiff}_S}}_2^2} \overset{(i)}\leq {\norm{{\Hard{\kk}\left(\widetilde{\meandiff}_S\right)}}_2^2} \leq {5\epsilon}\lambda^* \leq O\left(\eps \left( \norm{\UMean}_2^2 + \sigma^2\right)\right),
\end{align*}
where (i) follows from \Cref{thm:error_control}.
Hence, when $\Event^l$ is true, \Cref{alg:filter-mean-second} can return a $\widehat{\UMean}$, such that $\norm{\widehat{\UMean} - \UMean}_2^2 \leq O\left(\eps \left( \norm{\UMean}_2^2 + \sigma^2\right)\right)$.

%
%


Then, we only need to show $\bigcup_{l=1}^L \Event^l$ is true,
where $L = \frac{1.1 \gamma}{\gamma - 1} \eps n$, with
high probability.
That said, we need to upper bound the probability
\begin{align}\label{equ:tail}
\Pr\left(\bigcap_{l=1}^L \overline{\Event^l}\right) = \Pr\left(T \geq L\right) \leq  \Pr\left(\Mar^L \geq \frac{\gamma-1}{\gamma}L\right) = \Pr\left(\Mar^L \geq 1.1\eps n\right).
\end{align}

Then, we can construct the martingale difference according to \cite{HRPCA2013}.
Let $\Diff^l = \Mar^l - \Mar^{l-1}$, where $\Mar^0 = \eps n$, and
\begin{align*}
\bar{\Diff}^l = \Diff^l  - \Expe\left(\Diff^l | \Diff^1, \cdots, \Diff^{l-1}\right).
\end{align*}
Thus $\{\bar{\Diff}^l\}$ is a martingale difference process, and
$\Expe\left(\Diff^l | \Diff^1, \cdots, \Diff^{l-1}\right) \leq 0$, since $\{\Mar^l\}$ is a supermartingale.
Now, \cref{equ:tail} can be viewed as a bound for the sum of the associated martingale difference
sequence.
\begin{align*}
\Mar^l - \Mar^0 = \sum_{j=1}^{l} \Diff^j = \sum_{j=1}^{l} \bar{\Diff}^j +  \sum_{j=1}^{l} \Expe\left(\Diff^j | \Diff^1, \cdots, \Diff^{j-1}\right) \leq \sum_{j=1}^{l} \bar{\Diff}^j.
\end{align*}

Since we only remove one example from the set $\Input^l$, we can guarantee $|\Diff^l| \leq 1$ and $|\bar{\Diff}^l |\leq 2$.
For these bounded random variables, by applying
the Azuma-Hoeffding inequality, we have
\begin{align*}
\Pr\left(\Mar^L \geq 1.1\eps n\right) &\leq \Pr\left(\sum_{l=1}^{L}\bar{\Diff}^l \geq 0.1\eps n\right)\\
& \leq \exp\left(\frac{-\left(0.1\eps n\right)^2}{8L}\right).
\end{align*}
Plugging in $L = \frac{1.1 \gamma}{\gamma - 1}\eps n$, this probability is upper bounded by $\exp\left(-\Theta\left(\eps n\right)\right)$.

Notice that $L = \frac{1.1 \gamma}{\gamma - 1} \eps n \leq 1.5\eps n$, by setting
$\gamma \geq 4$.
Hence,
from $l=1$ to $L$,
we always have
$|\SBad^l| \leq 2\eps |\Input^l|$.
Then \Cref{thm:concentration_sqeps} and  \Cref{lem:cov_sqeps} hold and \Cref{thm:kappa_bound} is still valid.

Combining all of the above, we have proven that, with exponentially high probability,
\Cref{alg:filter-mean-second}  returns a $\widehat{\UMean}$ satisfying
$\norm{\widehat{\UMean} - \UMean}_2^2 \leq O\left(\eps \left( \norm{\UMean}_2^2 + \sigma^2\right)\right)$,
within $\frac{1.1\gamma}{\gamma-1} \eps n$ iterations.

\fpro

\section{Robust sparse regression with unknown covariance}
\label{sec:proof_unknown}

In this section, we prove the guarantees for RSGE when  the covariance matrix $\Sig$ is unknown, but each row and column is sparse. 
 In this case, the population mean of all authentic gradients $\UMean^t$ can be calculated as
\begin{align*}
\UMean^t = \Expe_P\left(\gradsample_i^t\right) = \Expe_P\left(\x_i \x_i^{\top}\left(\param^{t} - \param^{*}\right)\right) = \Sig \paramdiff^t.
\end{align*}
Therefore, $\UMean^t = \Sig \paramdiff^t$ is guaranteed to be $r(k'+k)$ sparse.
And we use  the filtering algorithm (\Cref{alg:filter-mean-second}) with $\kk = r(k'+k)$ as a RSGE.

First, we derive the functional $F\left(\UMean\right)$ with general covariance matrix $\Sig$, and
compute the corresponding $\LF, \Lcov$, which has been defined  in \cref{equ:LCOV} and \cref{equ:LF}  for the case $\Sig = \Id$ in \Cref{sec:calculations}.


\begin{lemm}\label{lem:F_Unknown} 	
Suppose we observe i.i.d. samples $\{\bm{z}_i, i\in\Good\}$ from the distribution P in \Cref{equ:stat_model} with an unknown $\Sig$, we have the covariance of gradient as
\begin{align*}
\Cov(\gradsample) := 
\Expe_{\bm{z}_i \sim P} \left( \left(\gradsample_i - \UMean \right)\left(\gradsample_i - \UMean \right)^{\top} \right) = \Sig \norm{\Sigsqn \UMean}_2^2  +  \UMean \UMean^\top +  \sigma^2 \Sig.
\end{align*}
\end{lemm}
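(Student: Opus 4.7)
The plan is to compute the second moment of $\gradsample_i - \UMean$ directly by decomposing it into a ``design'' part and a ``noise'' part, reducing the design part to a standard-Gaussian calculation via a whitening change of variables, and then invoking the Gaussian fourth-moment identity.

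First I would let $\paramdiff = \param - \param^*$ so that $\gradsample_i = \bm{x}_i\bm{x}_i^\top \paramdiff - \bm{x}_i \xi_i$ and $\UMean = \Sig\paramdiff$. Hence
\begin{align*}
\gradsample_i - \UMean = (\bm{x}_i\bm{x}_i^\top - \Sig)\paramdiff - \bm{x}_i\xi_i.
\end{align*}
Since $\xi_i$ is independent of $\bm{x}_i$ with mean $0$ and variance $\sigma^2$, the cross terms vanish and
\begin{align*}
\Cov(\gradsample) = \Expe\!\left[(\bm{x}_i\bm{x}_i^\top - \Sig)\paramdiff\paramdiff^\top(\bm{x}_i\bm{x}_i^\top - \Sig)\right] + \sigma^2\Sig.
\end{align*}

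Next I would whiten by writing $\bm{x}_i = \Sigsq \bm{u}_i$ with $\bm{u}_i \sim \mathcal{N}(0,\Id)$, so that $\bm{x}_i\bm{x}_i^\top - \Sig = \Sigsq(\bm{u}_i\bm{u}_i^\top - \Id)\Sigsq$. Setting $\bm{v} := \Sigsq\paramdiff$, the design term becomes
\begin{align*}
\Sigsq\,\Expe\!\left[(\bm{u}_i\bm{u}_i^\top - \Id)\bm{v}\bm{v}^\top(\bm{u}_i\bm{u}_i^\top - \Id)\right]\Sigsq.
\end{align*}
The key computation is the standard Gaussian identity: for symmetric $A$, Isserlis' theorem gives $\Expe[\bm{u}\bm{u}^\top A\bm{u}\bm{u}^\top] = 2A + \tr{A}\Id$, and therefore
\begin{align*}
\Expe\!\left[(\bm{u}\bm{u}^\top - \Id) A (\bm{u}\bm{u}^\top - \Id)\right] = A + \tr{A}\Id.
\end{align*}
Applying this with $A = \bm{v}\bm{v}^\top$ yields $\bm{v}\bm{v}^\top + \|\bm{v}\|_2^2\,\Id$.

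Finally I would unwind the whitening: conjugating by $\Sigsq$ turns $\bm{v}\bm{v}^\top = \Sigsq\paramdiff\paramdiff^\top\Sigsq$ into $\Sig\paramdiff\paramdiff^\top\Sig = \UMean\UMean^\top$, and turns $\|\bm{v}\|_2^2\,\Id = \|\Sigsq\paramdiff\|_2^2\,\Id$ into $\|\Sigsq\paramdiff\|_2^2\,\Sig$. Since $\paramdiff = \Sig^{-1}\UMean$ gives $\Sigsq\paramdiff = \Sigsqn\UMean$, the coefficient rewrites as $\|\Sigsqn\UMean\|_2^2$. Adding back the $\sigma^2\Sig$ noise contribution produces exactly $\Sig\|\Sigsqn\UMean\|_2^2 + \UMean\UMean^\top + \sigma^2\Sig$, matching the claim. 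The only mildly delicate step is the Gaussian fourth-moment identity and keeping the whitening/unwhitening bookkeeping straight so that the scalar $\|\Sigsqn\UMean\|_2^2$ ends up multiplying $\Sig$ (not $\Id$); the rest is straightforward linear algebra.
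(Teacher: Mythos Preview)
Your proof is correct and follows essentially the same route as the paper: decompose $\gradsample_i - \UMean$ into a design part $(\bm{x}_i\bm{x}_i^\top - \Sig)\paramdiff$ and a noise part $-\bm{x}_i\xi_i$, whiten via $\bm{x}_i = \Sigsq\bm{u}_i$, and then apply a Gaussian fourth-moment identity. The only cosmetic difference is that the paper phrases the key identity $\Expe[\bm{u}\bm{u}^\top A\bm{u}\bm{u}^\top] = 2A + \tr{A}\Id$ as a Stein-type lemma rather than Isserlis' theorem, but the computation is identical.
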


\spro
As in the \Cref{equ:stat_model}, we draw $\bm{x}$ from Gaussian distribution $\mathcal{N}(0, \Sig)$,
the expression of  $F(\cdot)$ is  given by
\begin{align*}
\Cov(\gradsample) &=  \Expe\left( \left(\gradsample_i - \UMean \right)\left(\gradsample_i - \UMean \right)^{\top}\right)\\
 &= \Expe\left( \left(\x\x^{\top} - \Sig\right) \paramdiff\paramdiff^{\top} \left(\x\x^{\top} - \Sig\right)\right) + \sigma^2 \Sig\\
 &\overset{(i)}{=} \Expe\left( \Sigsq \left(\xn\xn^{\top} - \Id\right) \Sigsq\paramdiff\paramdiff^{\top}\Sigsq \left(\xn\xn^{\top} - \Id\right)\Sigsq \right) + \sigma^2 \Sig\\
 & \overset{(ii)}{=} \Sig \norm{\Sigsqn \UMean}_2^2  +  \UMean \UMean^\top +  \sigma^2 \Sig.
\end{align*}
where (i) follows from the re-parameterization $\bm{x} = \Sigsq \widetilde{\bm{x}}$, where $\widetilde{\bm{x}} \sim \mathcal{N}(0, \Id)$, and (ii) follows from the Stein-type Lemma as in \Cref{sec:calculations}.
\fpro

By \Cref{lem:F_Unknown}, we define the functional $F(\UMean) = \Sig \norm{\Sigsqn \UMean}_2^2  +  \UMean \UMean^\top +  \sigma^2 \Sig$.
In \Cref{alg:filter-mean-second}, we do not need to evaluate $F(\cdot)$, but our analysis requires upper bounds for   two parameters of 
  $F(\cdot)$ -- 
    $\Lcov, \LF$ -- to control   
  tail bounds. 
 Under the same setting as \Cref{lem:F_Unknown}, we use similar bounds as  \Cref{sec:calculations}, based on   assumptions in \Cref{equ:stat_model}. Hence, we have
 $\Lcov = \Theta( \norm{\UMean}_2^2 + \sigma^2)$, and $\LF = \Theta( \norm{\UMean}_2)$.

Next, we show concentration bounds (\Cref{thm:concentration_sqeps_heavy}) similar to
\Cref{thm:concentration_sqeps}, which controls deviation of empirical mean and covariance for 
all samples in the good set $\Good$.

\begin{lemm}
\label{thm:concentration_sqeps_heavy}
Suppose we observe i.i.d. gradient samples $\{\gradsample_i, i\in\Good\}$ from
\Cref{equ:stat_model} with
$\abs{\Good} =\widetilde{\Omega}\left(\frac{\kk\log\left(d/\nu\right)}{\epsilon}\right)$.
Then,
there is a $\delta = \widetilde{O}\left(\sqeps\right)$, such that with probability at least $1-\nu$, for any index subset ${\Ind} \subset [d]$, $|{{\Ind}}| \le \kk$
 and for any
$\Good' \subset \Good$, $\abs{\Good'} \geq (1-2\eps)\abs{\Good}$,
we have
\begin{align}
	\norm{\Expe_{i \in_u \Good'} \left(\gradsample_i^{ {\Ind}} \right) -\UMean^{ {\Ind}} }_2 &
\leq \delta\left(\norm{{\UMean}}_2 + \sigma\right), \label{equ:gradient_mean_heavy}\\
\norm{{\Expe_{i \in_u \Good'} \left(\gradsample_i^{ {\Ind}}-\UMean^{ {\Ind}}\right)} ^{\otimes 2} -  F\left(\UMean\right)^{\Ind \Ind}}_{\rm op} & \leq \delta\left(\norm{{\UMean}}_2^2 + \sigma^2\right).  \label{equ:gradient_covariance_heavy}
\end{align}
\end{lemm}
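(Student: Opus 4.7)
The plan is to extend the identity-covariance concentration result in \Cref{thm:concentration_sqeps} to the general sparse-covariance setting. The key observation is that although the closed form of $F(\UMean)$ (given in \Cref{lem:F_Unknown}) now involves $\Sig$, the underlying covariate $\x_i$ remains Gaussian with spectrum bounded by the universal constants $\sconvexity, \ssmoothness$, so standard sub-Gaussian / sub-exponential concentration tools still apply with only constant-factor changes. Since $\norm{\UMean}_2 = \norm{\Sig\paramdiff}_2 \asymp \norm{\paramdiff}_2$ up to the condition number $\condition$, we can freely interchange bounds stated in terms of $\norm{\UMean}_2$ with bounds on $\norm{\paramdiff}_2$.

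First, I would fix an arbitrary index set $\Ind \subset [d]$ with $|\Ind| \leq \kk$ and decompose $\gradsample_i^\Ind = \x_i^\Ind \bigl(\x_i^\top \paramdiff - \xi_i\bigr)$. Each coordinate of $\x_i^\Ind$ is sub-Gaussian with parameter $O(1)$ (since $\Sig_{jj} \leq 1$), and $\x_i^\top \paramdiff$ is Gaussian with variance $\paramdiff^\top \Sig \paramdiff = O(\norm{\UMean}_2^2)$. Consequently, $\gradsample_i^\Ind - \UMean^\Ind$ is sub-exponential in each coordinate with parameter $O(\norm{\UMean}_2 + \sigma)$, and $(\gradsample_i^\Ind - \UMean^\Ind)^{\otimes 2} - F(\UMean)^{\Ind\Ind}$ has sub-exponential operator-norm tails with parameter $O(\norm{\UMean}_2^2 + \sigma^2)$. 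Applying the vector Bernstein inequality for the mean and the Matrix Bernstein inequality for the covariance yields, on a fixed $\Ind$, deviations of order $\sqrt{\kk \log(1/\nu')/n}$ with failure probability $\nu'$.

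Next, I would take a union bound over all $\binom{d}{\kk} \leq (ed/\kk)^{\kk}$ subsets $\Ind$ of size at most $\kk$, setting $\nu' = \nu \cdot (\kk/(ed))^{\kk}$. This consumes an extra factor of $\kk \log(d/\kk)$, so with $n = \widetilde{\Omega}(\kk \log(d/\nu)/\eps)$ we obtain the target $\delta = \widetilde{O}(\sqeps)$ bound uniformly over $\Ind$. To extend uniformly over all $\Good' \subset \Good$ with $|\Good'| \geq (1-2\eps)|\Good|$, I would use the identity
\[
\Expe_{i \in_u \Good'}(\gradsample_i^\Ind) - \Expe_{i \in_u \Good}(\gradsample_i^\Ind) = \frac{|\Good \setminus \Good'|}{|\Good'|}\Bigl(\Expe_{i \in_u \Good}(\gradsample_i^\Ind) - \Expe_{i \in_u \Good \setminus \Good'}(\gradsample_i^\Ind)\Bigr),
\]
which reduces the supremum over $\Good'$ to controlling partial sums over the worst $2\eps|\Good|$ samples. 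A Bernstein-type bound on these partial sums, combined with an additional union bound contributing $\log \binom{|\Good|}{2\eps|\Good|} \lesssim \eps |\Good| \log(1/\eps)$, produces the same $\widetilde{O}(\sqeps)$ rate modulo $\poly\log(1/\eps)$ factors that are absorbed into $\widetilde{O}$.

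The main technical hurdle will be the Matrix Bernstein step for the covariance bound \cref{equ:gradient_covariance_heavy}, where the centered quadratic $(\gradsample_i^\Ind - \UMean^\Ind)^{\otimes 2} - F(\UMean)^{\Ind\Ind}$ has a more intricate structure because $F(\UMean) = \Sig \norm{\Sigsqn \UMean}_2^2 + \UMean\UMean^\top + \sigma^2 \Sig$ is no longer diagonal. Verifying the sub-exponential operator-norm parameter requires a Hanson-Wright style estimate, using $\opnorm{\Sig^{\Ind\Ind}} \leq \ssmoothness$, to show that the relevant moments behave as in the identity-covariance case up to multiplicative constants depending only on $\ssmoothness$ and $\sconvexity$. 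Once this is established, the union-bound and sub-collection arguments go through verbatim from \Cref{thm:concentration_sqeps} and the analogous Lemma D.1 of \cite{du2017computationally}.
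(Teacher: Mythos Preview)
Your high-level strategy matches the paper's exactly: fix an index set $\Ind$, prove concentration for the restricted block, then union-bound over the $\binom{d}{\kk}$ choices of $\Ind$ and over subsets $\Good'$. The tactical difference is that the paper, before invoking any matrix concentration, first expands $(\gradsample_i^\Ind - \UMean^\Ind)^{\otimes 2} - F(\UMean)^{\Ind\Ind}$ into three explicit pieces---the quartic-in-$\x$ term $\Expe_{i\in_u\Good'}\x^\Ind\x^\top\paramdiff\paramdiff^\top\x(\x^\Ind)^\top$ minus its mean, a cross term, and the noise term $\xi_i^2\x^\Ind(\x^\Ind)^\top-\sigma^2\Sig^{\Ind\Ind}$---and handles each separately via the argument of Lemma~4.5 in \cite{Moitra2016FOCS}, swapping in Theorem~5.44 of \cite{vershynin2010introduction} (a heavy-tailed covariance result) for the sub-Gaussian Theorem~5.50 used there.

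Your one-shot Matrix Bernstein route is cleaner in spirit but has a small gap worth flagging: you assert that $(\gradsample_i^\Ind-\UMean^\Ind)^{\otimes 2}-F(\UMean)^{\Ind\Ind}$ has sub-exponential operator-norm tails, but each entry is a product of two sub-exponential variables and is therefore only sub-Weibull of order $1/2$, not sub-exponential. Standard Matrix Bernstein does not directly cover such tails. This is precisely why the paper routes through a moment-based covariance-estimation result rather than Bernstein, and why the three-term decomposition helps---each piece is then a sample covariance of a vector with controlled fourth moments. You can close the gap either by first truncating $\gradsample_i^\Ind$ at a $\poly\log(n/\nu)$ threshold (the Gaussian tails make the truncation error negligible) before applying Bernstein, or by following the paper's decomposition and invoking a heavy-tailed covariance bound; either fix recovers the stated $\widetilde{O}(\sqeps)$ rate at the same sample complexity.
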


\spro
We prove the concentration inequality for the covariance  \cref{equ:gradient_covariance_heavy}, the bound for mean  \cref{equ:gradient_mean_heavy} is similar.
For any index subset ${\Ind} \subset [d]$, $|{{\Ind}}| \le \kk$,
we can expand  \cref{equ:gradient_covariance_heavy} as follows,
\begin{align}
    &{\Expe_{i \in_u \Good'} \left(\gradsample_i^{ {\Ind}}-\UMean^{ {\Ind}}\right)} ^{\otimes 2} -  F\left(\UMean\right)^{\Ind \Ind} \nonumber\\
&= \Expe_{i \in_u \Good'} \left( \x^\Ind \x^{\top} \paramdiff \paramdiff^{\top} \x (\x^\Ind)^{\top} \right) -
\left(\Sig^{{\Ind \Ind}} \norm{\Sigsq \paramdiff }_2^2  +  2 \UMean^\Ind (\UMean^\Ind)^\top \right) \label{equ:term_1}\\
&-   \Expe_{i \in_u \Good'} \left( \x \x^{\top} \paramdiff \paramdiff^{\top} \Sig \right)^{\Ind \Ind} + \UMean^\Ind (\UMean^\Ind)^\top \label{equ:term_2}\\
&+  \Expe_{i \in_u \Good'} \xi_i^2  \x^\Ind (\x^\Ind)^{\top} - \sigma^2 \Sig^{\Ind \Ind} \label{equ:term_3}
\end{align}

Here, we prove the concentration inequality for \cref{equ:term_1}, and the other two terms
can be bounded by the same technique.
It is sufficient to prove an upper bound for the operator norm as follows
\begin{align}
\opnorm{ \Expe_{i \in_u \Good'} \x^\Ind (\x^\Ind)^{\top} \paramdiff^\Ind (\paramdiff^\Ind)^{\top} \x^\Ind (\x^\Ind)^{\top} - \left(\Sig^{{\Ind \Ind}} \norm{\Sigsq \paramdiff }_2^2  +  2 \UMean^\Ind (\UMean^\Ind)^\top \right)} \leq \delta \norm{{\UMean}}_2^2,\label{equ:covariance_heavy}
\end{align}
where $\x$ is drawn from a Gaussian distribution $\mathcal{N} (0, \Sig)$.
Note that the index  subset $\Ind$ reduce the  matrix
to $\Real^{\abs{\Ind} \times \abs{\Ind}}$.
For the concentration bounds of covariance matrix estimation \cref{equ:covariance_heavy}, we have a near identical argument as Lemma 4.5 of \cite{Moitra2016FOCS}, 
by  replacing Theorem 5.50 with Theorem 5.44 in \cite{vershynin2010introduction}.


This establishes  \cref{equ:covariance_heavy} with sample complexity
$n=\widetilde{\Omega}\left(\frac{\kk \log\left(1/\nu\right)}{\epsilon}\right)$,
with probability at least $1-\nu$.
Next, we take a union bound
over all possible subsets $\Ind \subset [d]$, and
this gives  concentration results for the covariance
\cref{equ:gradient_covariance_heavy}. Hence we have proved the concentration results for the gradient under the assumption that $\Sig$ is row/column sparse.
\fpro

Based on  \Cref{thm:concentration_sqeps_heavy},
 we have \Cref{coro:unknown}, which guarantees the recovery of $\param^*$ in robust sparse regression with unknown covariance as defined in \Cref{model:covariance}.

\begin{coro}[\Cref{coro:unknown}]
Suppose we observe $N\left(k, d, \eps, \nu\right)$ $\eps$-corrupted samples from
\Cref{equ:stat_model}, where the covariates $\bm{x}_i$'s follow from \Cref{model:covariance}.
If we use \Cref{alg:filter-mean-second} for robust sparse gradient estimation, it requires 
$\widetilde{\Omega}\left(\frac{r^2 k^2 \log \left(dT/\nu\right)}{\eps}\right) T $ samples,
and  $T = \Theta\left(\log\left(\frac{\norm{\param^*}_2}{\sigma\sqeps}\right)\right)$,
then, we have
\begin{align}
\norm{\widehat{\param} - \param^*}_2 = \widetilde{O}\left(\sigma\sqeps\right),
\end{align}
with probability at least $1- \nu - T \exp\left(-\Theta\left(\eps n\right)\right)$.
\end{coro}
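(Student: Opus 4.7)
The plan is to combine the meta-theorem (Theorem \ref{thm:Main}) with an RSGE obtained by running the filtering algorithm (Algorithm \ref{alg:filter-mean-second}) with an appropriately enlarged sparsity parameter. The key structural observation, already noted in the paragraph preceding the statement, is that at iterate $t$ the population gradient factors as $\UMean^t = \Sig\paramdiff^t$, where $\paramdiff^t = \param^t - \param^*$ is $(k'+k)$-sparse. Because each row/column of $\Sig$ is $r$-sparse under \Cref{model:covariance}, we obtain $\|\UMean^t\|_0 \le r(k'+k)$. Thus, running \Cref{alg:filter-mean-second} with sparsity parameter $\kk = r(k'+k)$ is the natural choice, and our goal is to show it yields a $\widetilde{O}(\eps\sigma^2)$-RSGE.

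First, I would derive the closed form of the gradient covariance under general $\Sig$, as given by \Cref{lem:F_Unknown}: $F(\UMean) = \Sig\|\Sigsqn\UMean\|_2^2 + \UMean\UMean^\top + \sigma^2\Sig$. Since the extremal eigenvalues of $\Sig$ are universal constants by \Cref{equ:stat_model}, the smoothness parameters that governed the identity-covariance analysis retain their scaling: $\Lcov = \Theta(\|\UMean\|_2^2 + \sigma^2)$ and $\LF = \Theta(\|\UMean\|_2)$. This means the threshold $\sep = C_\gamma(\|\UMean^t\|_2^2 + \sigma^2)$ used by \Cref{alg:filter-mean-second} remains appropriate.

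The main obstacle is extending the concentration bounds of \Cref{thm:concentration_sqeps} (which were proved for $\Sig = \Id$) to the sparse-covariance setting. Specifically, for every $\Good' \subset \Good$ with $|\Good'| \ge (1-2\eps)|\Good|$ and every index set $\Ind$ of size at most $\kk$, one needs
\begin{align*}
\norm{\Expe_{i \in_u \Good'}(\gradsample_i^\Ind) - \UMean^\Ind}_2 &\le \delta(\|\UMean\|_2 + \sigma), \\
\opnorm{\Expe_{i \in_u \Good'}(\gradsample_i^\Ind - \UMean^\Ind)^{\otimes 2} - F(\UMean)^{\Ind\Ind}} &\le \delta(\|\UMean\|_2^2 + \sigma^2),
\end{align*}
with $\delta = \widetilde{O}(\sqeps)$. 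This is exactly the content of \Cref{thm:concentration_sqeps_heavy}, whose proof mirrors Lemma 4.5 of \cite{Moitra2016FOCS} after swapping Theorem 5.50 of \cite{vershynin2010introduction} for Theorem 5.44 to handle the non-isotropic covariates, followed by a union bound over the $\binom{d}{\kk}$ sparse index sets. This delivers the required concentration using $\widetilde{\Omega}(\kk \log(d/\nu)/\eps)$ authentic samples.

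With these concentration bounds in hand, the oracle-style guarantee of \Cref{thm:kappa_bound} (together with \Cref{lem:cov_sqeps}) and the supermartingale argument in the proof of \Cref{thm:Filter} transfer verbatim, since they only invoked p.s.d.-ness of the true covariance together with the $\kk$-sparse mean/covariance concentration on $\SGood$. Consequently, \Cref{alg:filter-mean-second} used iteratively produces $\widehat{\UMean}^t$ with $\|\widehat{\UMean}^t - \UMean^t\|_2^2 = \widetilde{O}(\eps(\|\UMean^t\|_2^2 + \sigma^2))$, i.e.\ a $\widetilde{O}(\eps\sigma^2)$-RSGE, with failure probability $\nu/T + \exp(-\Theta(\eps n))$ per iterate. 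Substituting $\psi(\eps) = \widetilde{O}(\eps\sigma^2)$ into \Cref{thm:Main} yields $\|\widehat{\param} - \param^*\|_2 = \widetilde{O}(\sigma\sqeps)$ after $T = \Theta(\log(\|\param^*\|_2/(\sigma\sqeps)))$ iterations. The per-iteration sample complexity is $\widetilde{\Omega}(\kk^2 \log(dT/\nu)/\eps) = \widetilde{\Omega}(r^2 k^2 \log(dT/\nu)/\eps)$ because $k' = \condition^2 k$ is a constant multiple of $k$ and $\kk = r(k'+k)$; the total sample complexity then follows from the $T$-fold sample-splitting, and a union bound over the $T$ iterations gives the stated probability.
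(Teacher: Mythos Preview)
Your proposal is correct and follows essentially the same approach as the paper's own proof: set $\kk = r(k'+k)$, invoke \Cref{lem:F_Unknown} and the $\Lcov,\LF$ scalings, establish the concentration bounds of \Cref{thm:concentration_sqeps_heavy} (via the Vershynin Theorem~5.44 substitution and a union bound over sparse index sets), and then observe that the analysis of \Cref{thm:kappa_bound} and \Cref{thm:Filter} carries over verbatim with $(k'+k)$ replaced by $r(k'+k)$, so that \Cref{thm:Main} delivers the claimed error and sample complexity. Your write-up is in fact more explicit than the paper's, which compresses the last two steps into a single sentence.
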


\spro
With the concentration result \Cref{thm:concentration_sqeps_heavy} in hand,
the remaining parts share the same theoretical analysis as   \Cref{sec:proof_kappa_bound}
and \Cref{sec:proof_Filter}, by replacing $(k'+k)^2$ with $r^2(k'+k)^2 = \Theta (r^2k^2)$.
Hence, we have a result similar to \Cref{thm:coro_2}, with sample complexity $\widetilde{\Omega}\left(\frac{r^2 k^2 \log \left(dT/\nu\right)}{\eps}\right)$.
And this yields \Cref{coro:unknown}.
\fpro

\newpage

\section{Numerical results}\label{sec:simulation}


\subsection{Robust sparse mean estimation}
\label{sec:exp_mean}

We first demonstrate the performance of \Cref{alg:filter-mean-second} for robust sparse mean estimation, and then move to \Cref{alg:inexact_IHT} for robust sparse regression.
For the robust sparse gradient estimation, we generate samples through  $\gradsample_i = \x_i \x_i^{\top}\UMean - \x_i \xi_i$, where the unknown true mean $\UMean$ is $k$-sparse.
The authentic $\x_i$'s are generated from $\mathcal{N} (0, \Id)$. We set $\sigma = 0$, since the main part of the error in robust sparse mean estimation is $\UMean$.  Each entry of $\UMean$ is either $+1$ or $-1$, hence $\norm{\UMean}_2^2 = k$.

The outliers are specially designed: the norm of the outliers is  $\norm{\UMean}_2$, and the directions are orthogonal to
$\UMean$. Through this construction, outliers cannot be easily removed by simple pruning, and
the directions of outliers can cause large effects on
the estimation of $\UMean$. We plot the relative MSE of parameter recovery,  defined as
$\|\widehat{\UMean} - \UMean\|_2^2 / \norm{\UMean}_2^2$, with respect to different
sparsities and dimensions.

\textbf{Parameter error vs. sparsity $k$.}
We fix the dimension to be $d = 50$. We solve the trace norm maximization in \Cref{alg:filter-mean-second} using CVX \cite{grant2008cvx}.
We solve robust sparse gradient estimation
under different levels of outlier fraction $\eps$
and  different sparsity values $k$.

\textbf{Parameter error vs. dimension $d$.}
We fix $k = 5$. We use a Sparse PCA solver from \cite{PathSPCA} which is much more efficient for higher dimensions. We run robust sparse gradient estimation \Cref{alg:filter-mean-second}
under different levels of outlier fraction $\eps$
and  different dimensions $d$.

For each parameter, the corresponding number of samples required for the authentic data is $n \propto k^2 \log(d) /\eps$ according to \Cref{thm:Filter}. Therefore, we add $\eps n/(1 - \eps)$ outliers (so that the outliers are an $\eps$-fraction of the total samples),
and then run \Cref{alg:filter-mean-second}. 
According to
\Cref{thm:Filter}, the rescaled relative MSE:
$\|\widehat{\UMean} - \UMean\|_2^2 / (\eps \norm{\UMean}_2^2)$ should be independent of
the parameters $\{\eps, k, d\}$. We plot this
in \Cref{fig:RSGE}, and
these plots validate our theorem on the sample complexity in robust sparse mean estimation problems.

\begin{figure}[t]
\centering
{
\includegraphics[width=.38\columnwidth]{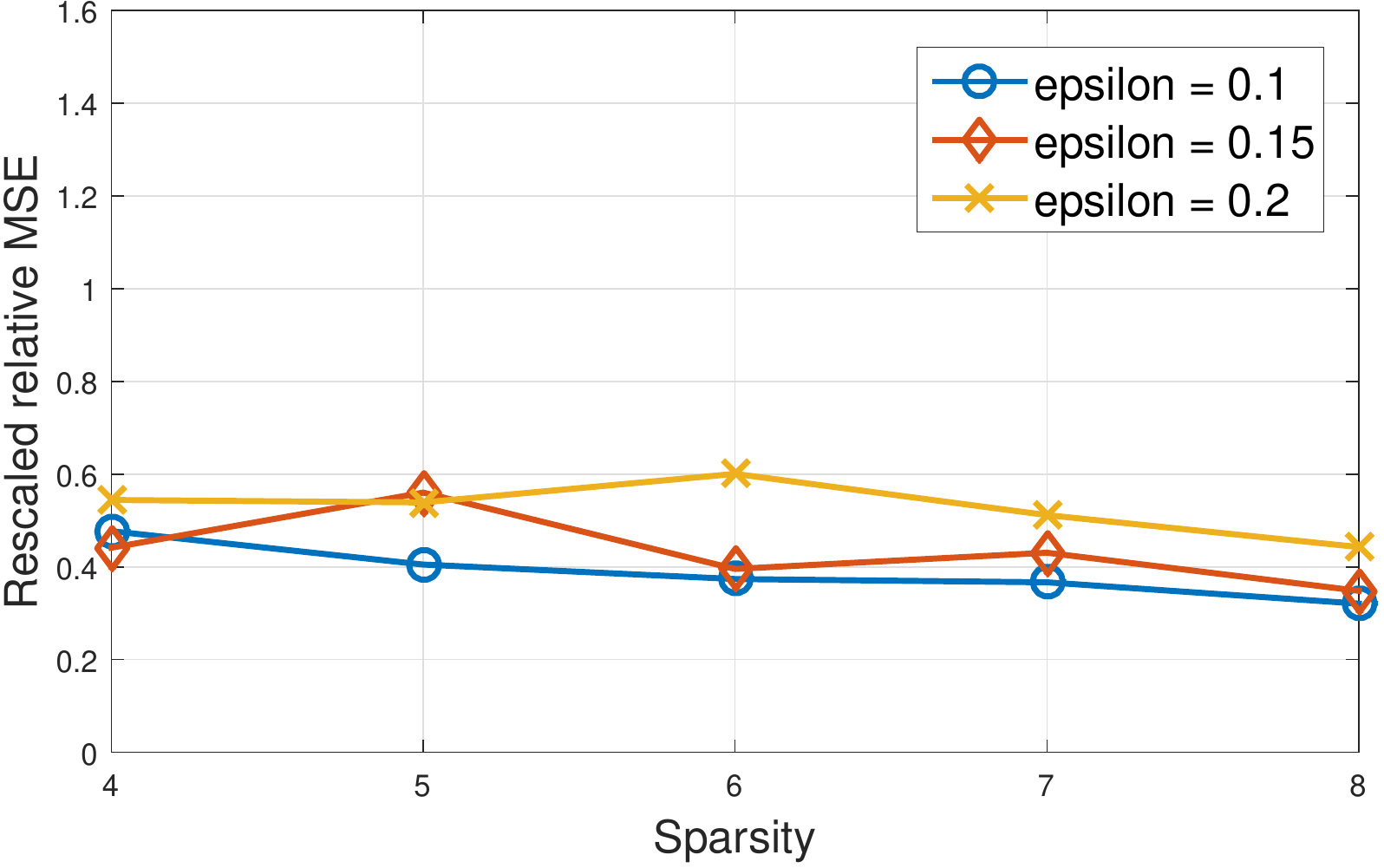}
}{
\includegraphics[width=.4\columnwidth]{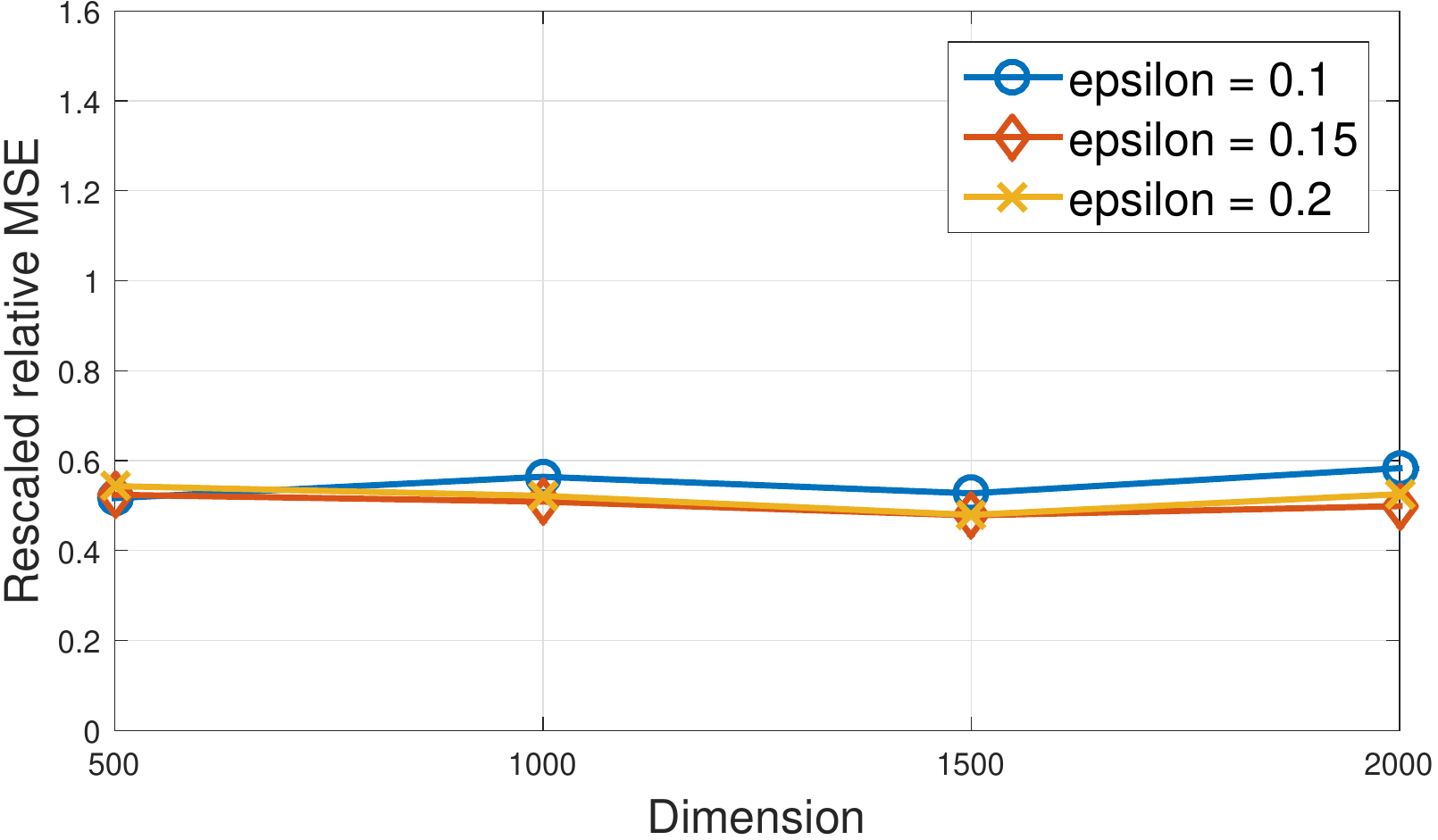}
}
\caption{
\footnotesize{Simulations for \Cref{alg:filter-mean-second} showing the dependence of
relative MSE on sparsity and dimension.
For each parameter, we choose corresponding sample complexity
$n \propto k^2 \log(d)/\eps$.
Different curves for $\eps \in \{0.1, 0.15, 0.2\}$ are the average of 15 trials.
Consistent with the theory, the rescaled relative MSE's are nearly independent of sparsity and dimension. Furthermore, by rescaling for different $\eps$, three curves have the same magnitude.
}}
\label{fig:RSGE}
\end{figure}

\subsection{Robust sparse regression with identity covariance}
\label{sec:exp_regression}
We use \Cref{alg:inexact_IHT} for robust sparse regression. Similarly as in \Cref{sec:exp_mean}, we use
\Cref{alg:filter-mean-second} as our Robust Sparse Gradient Estimator, and leverage the Sparse PCA solver from \cite{PathSPCA}.
In the simulation, we fix $d = 500$, and $k = 5$, hence the corresponding sample complexity is $n\propto 1/\eps$. However, we do not use the sample splitting technique in the simulations.

The entries of the true parameter $\param^*$  are set to be either $+1$ or $-1$, hence $\norm{\param^*}_2^2 = k$ is fixed.
The authentic $\x_i$s are generated from $\mathcal{N} (0, \Id)$, and the authentic $y_i = \x_i^{\top} \param^* + \xi_i$ as in \Cref{equ:stat_model}.
We set the covariates of the outliers as $A$, where $A$ is a random $\pm 1$ matrix of dimension $\eps n /(1-\eps) \times d $, and set the responses of outliers to $- A\param^*$.

To show the performance of \Cref{alg:inexact_IHT} under different settings, we use different levels of $\eps$ and $\sigma$ in \Cref{fig:Robust sparse regression}, and track the parameter error $\norm{\param^t - \param^*}_2^2$ of \Cref{alg:inexact_IHT} in each iteration.
Consistent with the theory, the algorithm displays linear convergence, and the error curves
flatten out at the level of the final error. Furthermore, \Cref{alg:inexact_IHT} achieves machine precision when $\sigma^2 = 0$ in the right plot of \Cref{fig:Robust sparse regression}.



\begin{figure}[t]
\centering
\includegraphics[width=.38\columnwidth]{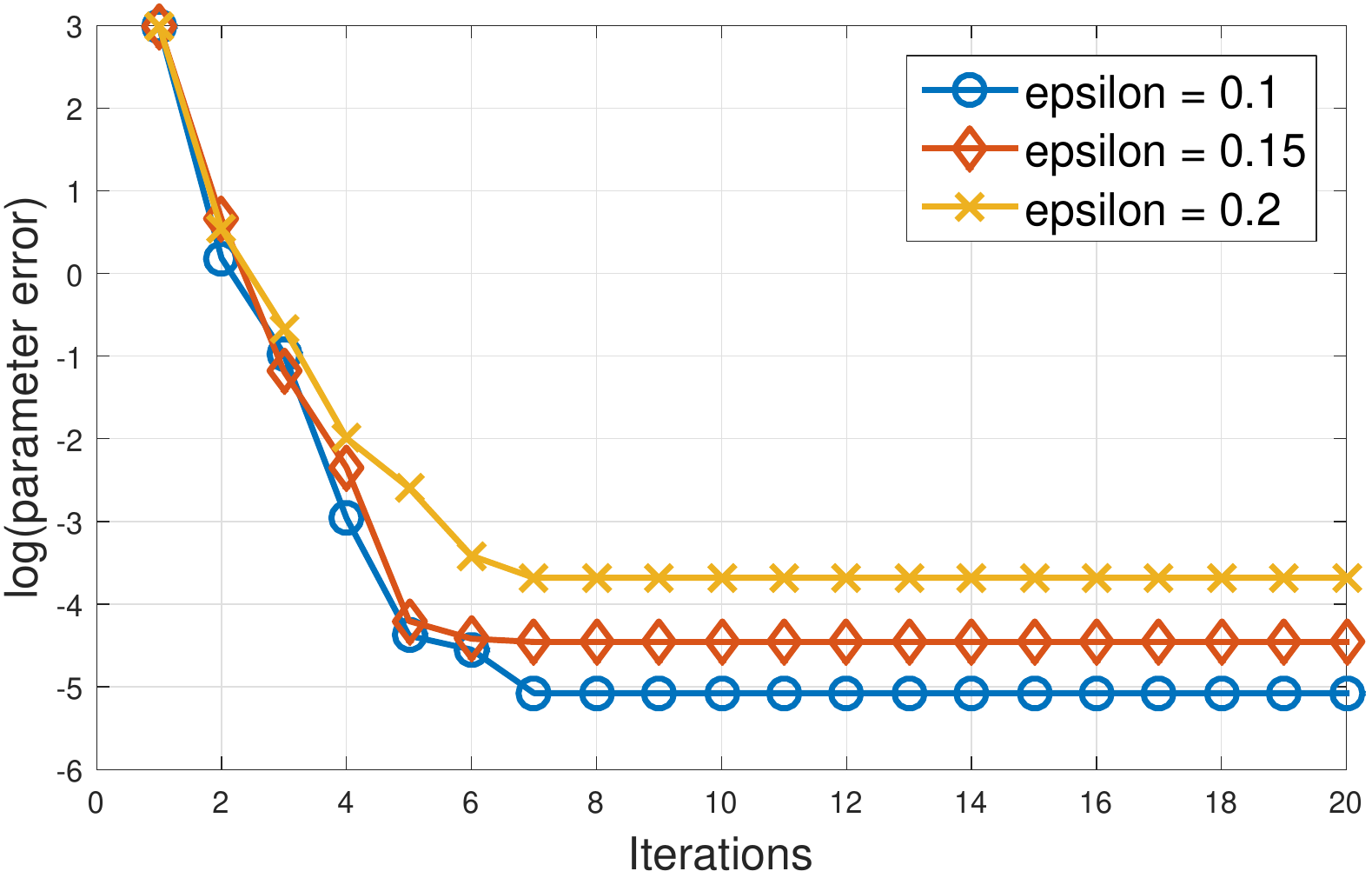}
\includegraphics[width=.4\columnwidth]{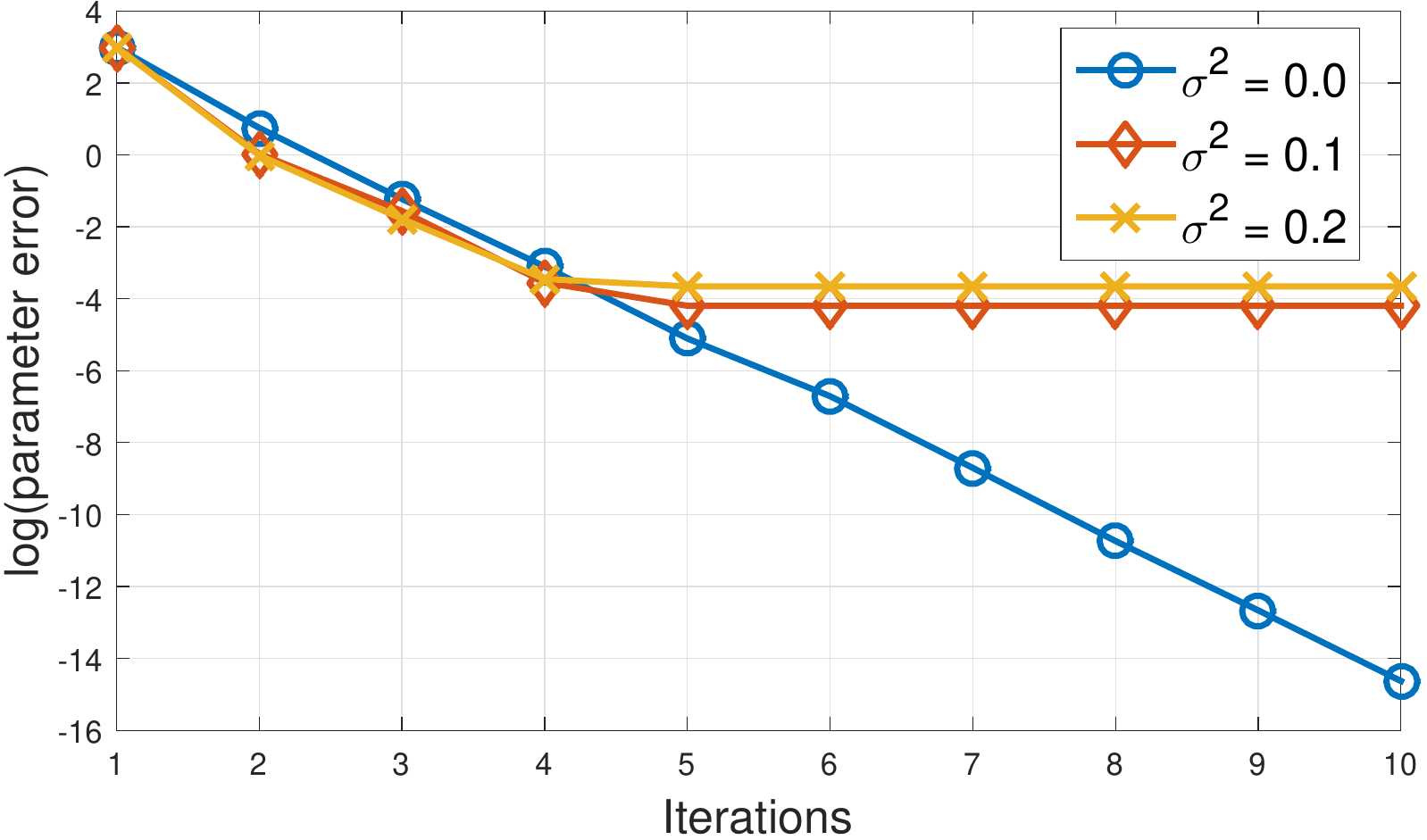}
\caption{\footnotesize{Empirical illustration of the linear convergence of
$\log(\|\param^t - \param^*\|_2^2)$ vs. iteration counts in the \Cref{alg:inexact_IHT}.
In all cases, we fix  $k = 5$, $d = 500$, and choose the sample complexity $n\propto 1/\eps$.
The left plot considers different
$\eps$ with fixed $\sigma^2=0.1$.
The right plot considers different
$\sigma^2$ with fixed $\eps=0.1$.
As expected, the convergence is linear, and
flatten out at the level of the final error.
}}
\label{fig:Robust sparse regression}
\end{figure}






\subsection{Robust sparse regression with unknown  covariance matrix}

Following 
\Cref{sec:exp_regression},
we study the empirical performance of robust sparse regression with unknown covariance matrix $\Sig$ following from \Cref{model:covariance}.

We use the same experimental setup as in \Cref{sec:exp_regression}, but modify the covariance matrix to be a Toeplitz matrix with a decay
$\Sig_{ij} = \exp^{-(i-j)^2}$.
Under this setting, the covariance matrix is sparse, thus follows from \Cref{model:covariance}.
\Cref{fig:Heavy} indicates that we have nearly the
same performance as the $\Sig = \Id$ case.

\begin{figure}[t]
\centering
\includegraphics[width=.395\columnwidth]{./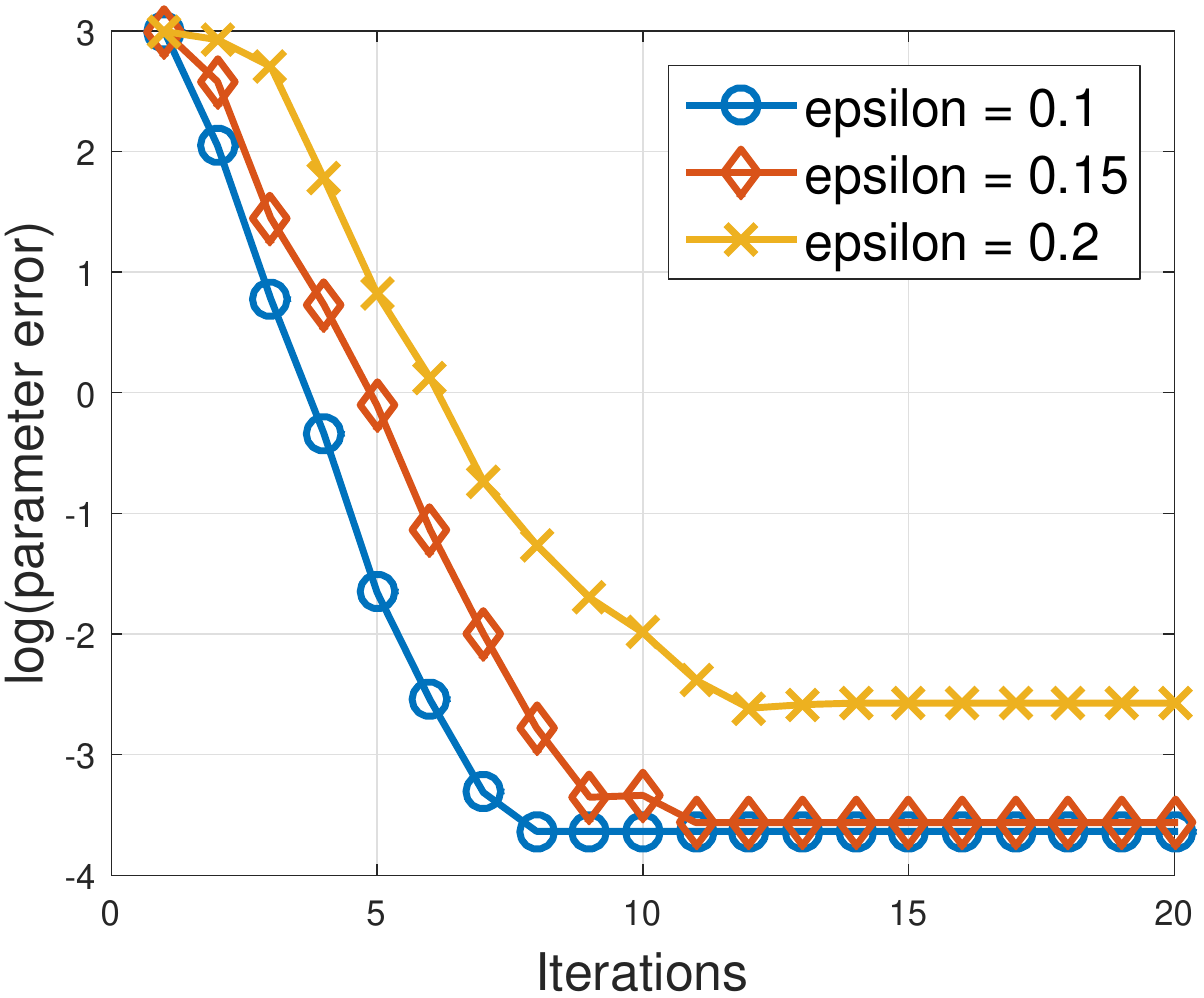}
\includegraphics[width=.4\columnwidth]{./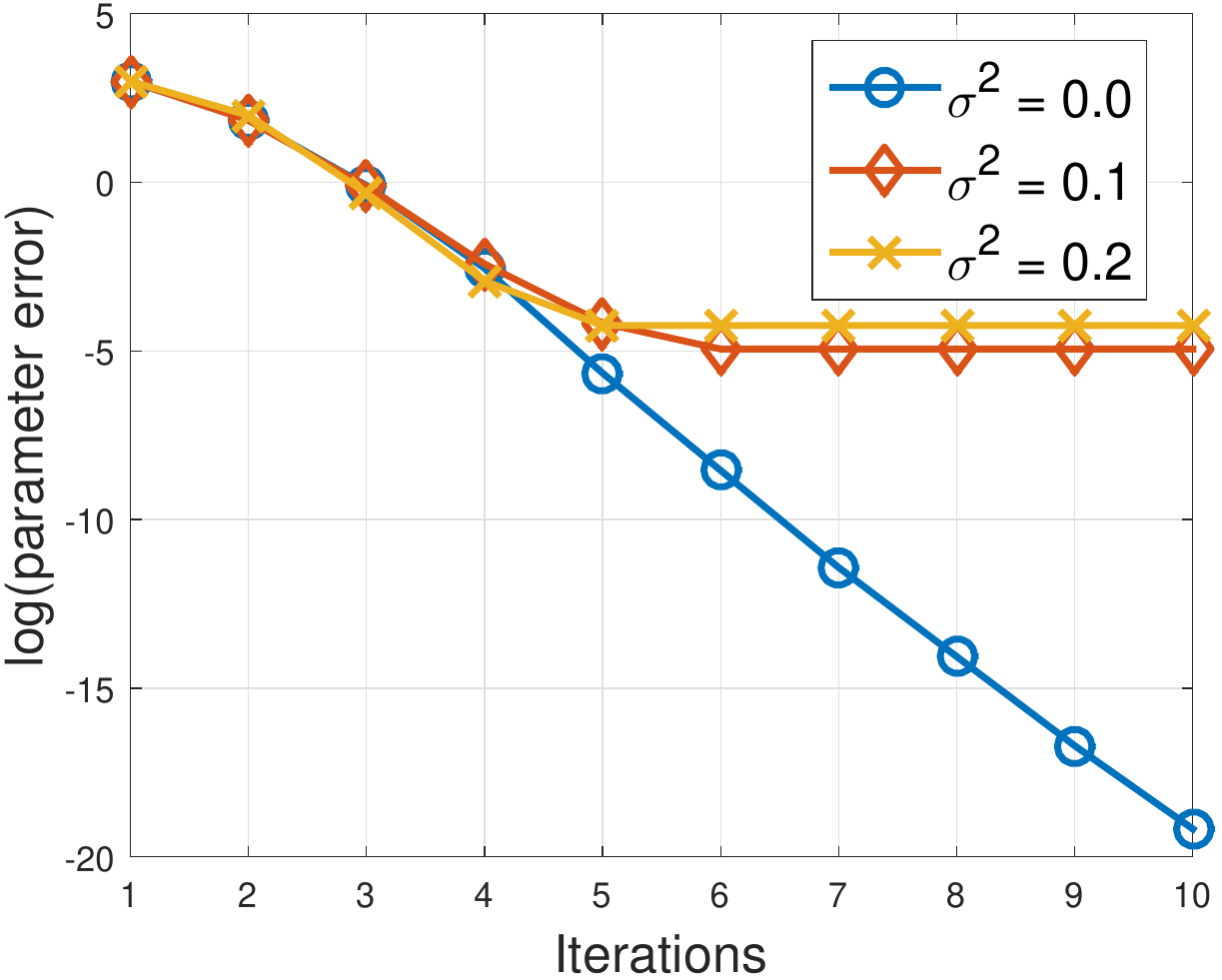}
\caption{\footnotesize{Empirical illustration of the linear convergence of
$\log(\|\param^t - \param^*\|_2^2)$ vs. iteration counts in the \Cref{alg:inexact_IHT} with unknown covariance matrix which is a
Toeplitz matrix with a decay
$\Sig_{ij} = \exp^{-(i-j)^2}$.
The other settings are the same as \Cref{fig:Robust sparse regression}.
Even though the covariance matrix is unknown, we observe similar performance in linear convergence as \Cref{fig:Robust sparse regression}.
}}
\label{fig:Heavy}
\end{figure}

\end{document}